\renewcommand{\sfdefault}{phv}
\newcommand{\dif}{\mathrm{d}}
\newcommand{\E}{\mathop{\mathbb{E}}\displaylimits}
\newcommand{\tr}{\mathop{\mathrm{tr}}}
\def\eqref#1{equation~\ref{#1}}
\def\1{\bm{1}}
\def\vzero{{\bm{0}}}
\def\vmu{{\bm{\mu}}}
\def\vvarepsilon{{\bm{\varepsilon}}}
\def\vdelta{{\bm{\delta}}}
\def\vm{{\bm{m}}}
\def\vp{{\bm{p}}}
\def\vu{{\bm{u}}}
\def\vx{{\bm{x}}}
\def\vy{{\bm{y}}}
\def\mA{{\bm{A}}}
\def\mB{{\bm{B}}}
\def\mE{{\bm{E}}}
\def\mI{{\bm{I}}}
\def\mQ{{\bm{Q}}}
\def\mR{{\bm{R}}}
\def\mS{{\bm{S}}}
\def\mW{{\bm{W}}}
\def\mX{{\bm{X}}}
\def\mLambda{{\bm{\Lambda}}}
\def\mSigma{{\bm{\Sigma}}}
\DeclareMathAlphabet{\mathsfit}{\encodingdefault}{\sfdefault}{m}{sl}
\SetMathAlphabet{\mathsfit}{bold}{\encodingdefault}{\sfdefault}{bx}{n}
\newcommand{\Var}{\mathop{\mathrm{Var}}}
\newcommand{\Cov}{\mathop{\mathrm{Cov}}}
\title{Convergence Dynamics and Stabilization Strategies of Co-Evolving Generative Models}
\author{%
Weiguo Gao\textsuperscript{\(\dagger\)\(\ddagger\)*}, Ming Li\textsuperscript{\(\dagger\)}\\
\textsuperscript{\(\dagger\)}School of Mathematical Sciences, Fudan University, Shanghai, 200433, China\\
\textsuperscript{\(\ddagger\)}School of Data Science, Fudan University, Shanghai, 200433, China\\
\textsuperscript{*}Shanghai Key Laboratory of Contemporary Applied Mathematics, Shanghai, 200433, China\\
\texttt{wggao@fudan.edu.cn}, \texttt{mingli23@m.fudan.edu.cn}
}
\begin{document}

\maketitle

\begin{abstract}
The increasing prevalence of synthetic data in training loops has raised concerns about model collapse, where generative models degrade when trained on their own outputs. While prior work focuses on this self-consuming process, we study an underexplored yet prevalent phenomenon: co-evolving generative models that shape each other's training through iterative feedback. This is common in multimodal AI ecosystems, such as social media platforms, where text models generate captions that guide image models, and the resulting images influence the future adaptation of the text model. We take a first step by analyzing such a system, modeling the text model as a multinomial distribution and the image model as a conditional multi-dimensional Gaussian distribution. Our analysis uncovers three key results. First, when one model remains fixed, the other collapses: a frozen image model causes the text model to lose diversity, while a frozen text model leads to an exponential contraction of image diversity, though fidelity remains bounded. Second, in fully interactive systems, mutual reinforcement accelerates collapse, with image contraction amplifying text homogenization and vice versa, leading to a Matthew effect where dominant texts sustain higher image diversity while rarer texts collapse faster. Third, we analyze stabilization strategies implicitly introduced by real-world external influences. Random corpus injections for text models and user-content injections for image models prevent collapse while preserving both diversity and fidelity. Our theoretical findings are further validated through experiments.
\end{abstract}

\section{Introduction}

In recent years, the rapid advancement of generative AI has been propelled by significant increases in computing power and the availability of large-scale datasets. These developments have enabled models to produce high-quality text and images, leading to an increasing prevalence of synthetic data online. In fact, it is estimated that for the publishing year 2023, at least over 1\% of all articles were LLM-assisted~\cite{gray2024chatgpt}. Researchers have thus introduced the concept \emph{self-consuming} generative models to describe models trained on their own generated data. Studies have shown that this process can cause models to lose quality and diversity over time~\cite{alemohammad2024self,bohacek2023nepotistically,casco2023toward,dohmatob2024model,dohmatob2024tale,guo2024curious,marchi2024heat,martinez2023combining,martinez2023towards,peterson2025ai,shumailov2024ai,suresh2024rate,wyllie2024fairness}, a phenomenon known as model collapse~\cite{shumailov2024ai} or model autophagy disorder (MAD)~\cite{alemohammad2024self}.

Despite these concerns, less attention has been given to how generative models are no longer just passive consumers of data but are now actively shaping each other's training processes. A clear example of this is found in multimodal generative AI ecosystems, where different models train and generate content based on each other's feedback. For instance, on social media platforms like TikTok, a text model might initially suggest hashtags by sampling from a baseline probability distribution over a fixed set of texts, while an accompanying image model generates visual content associated with each hashtag. As users engage with content featuring these hashtags, the platform scrapes the resulting interaction data and uses it to retrain the text model, recalculating the posterior probabilities of each hashtag based on the success of its associated visuals. Simultaneously, the image model leverages a repository of existing visuals, fine-tuning its parameters on these images to iteratively refine its output distribution. As another example, researchers have demonstrated that two artificial intelligences can communicate purely through linguistic means~\cite{riveland2024natural}. In this study, one AI learned a task and then provided a linguistic description to a ``sister'' AI, which successfully performed the task based on that description.

These instances illustrate how generative models are not just generating data but are actively shaping each other's future learning trajectories through iterative feedback loops. This approach offers practical benefits, such as reducing dependence on human-curated datasets and allowing AI to adapt more quickly to new environments. However, little is known about the long-term effects of this training method. As a few motivating questions, one may ask: (\romannumeral 1) \emph{What happens when a single model is trained in isolation, will it exhibit collapse similar to self-consuming loops, or can it inherently maintain diversity?} (\romannumeral 2) \emph{In a closed system where both models are updated, does mutual interaction strike a balance between them, or does it amplify dominant patterns at the expense of emerging trends?} (\romannumeral 3) \emph{And if such collapse occurs, how to stabilize the system to preserve diversity?}

To the best of our knowledge, the multimodal loop has only been studied empirically before in an \emph{inference loop} setting~\cite{conde2024analyzing}, where the authors also raised the question:``\emph{It would be interesting to study the impact of recursive modality changes when different models are used.}'' In this paper, we take a first theoretical step towards understanding \emph{training loop} dynamics by introducing a novel \emph{co-evolving} system, where a text model and an image model evolve together. We analyze its long-term behavior, characterize its convergence dynamics, and propose stabilization strategies.

\subsection{Contributions} Our contributions are threefold. (\romannumeral 1) \emph{We characterize the isolated dynamics of the text and image models by freezing one model and updating another, and establish their collapse.} When the image model is frozen, we prove that the text model diversity decreases monotonically in expectation and nearly always collapses to zero (see~\cref{subsec:trainable_text_model_with_frozen_image_model,thm:text_model_diversity_frozen_image_model}). Conversely, when the text model is frozen, the image model experiences an exponential decay in diversity~(see~\cref{subsec:trainable_image_model_with_frozen_text_model,thm:image_model_covariance_frozen_text_model}). These results generalize prior findings on discrete and Gaussian distributions with a fixed number of samples~\cite{shumailov2024ai}. Furthermore, we demonstrate that the fidelity of the image model remains bounded despite its diversity shrinking to zero (see~\cref{subsec:trainable_image_model_with_frozen_text_model,thm:image_model_mean_frozen_text_model}), which is previously unexplored.
(\romannumeral 2) \emph{We reveal how the mutual reinforcement between the text and image models accelerates collapse compared to when each model is observed in isolation.} Specifically, as the image model collapses, the text model receives sharper feedback, leading to an accelerated loss of diversity that approximates the theoretical upper bound (see~\cref{subsec:exponential_convergence_due_to_image_model_collapse,thm:acceleratd_text_model_collapse}). In turn, the concentration of the text model probability mass on a small set of texts induces a Matthew effect in the image model, wherein images corresponding to dominant texts retain higher diversity, while those linked to rare texts collapse more rapidly (see~\cref{subsec:matthew_effect_due_to_text_model_collapse,thm:matthew_effect}). 
(\romannumeral 3) \emph{We analyze stabilization strategies based on external information injection and prove that they prevent collapse while maintaining bounded fidelity.} First, we investigate corpus injection, where new texts are randomly added to the model to redistribute probability mass. We prove that this prevents text model collapse by ensuring a strictly positive lower bound on text model diversity (see~\cref{subsec:stabilization_of_the_text_model_via_corpus_injection,thm:stabilization_text_model_via_injection}). Second, we examine user-content injection, incorporating images drawn from an external distribution into the image model training process. We show that this mechanism not only maintains image model diversity above a nonzero threshold but also ensures that its fidelity remains bounded over time (see~\cref{subsec:stabilization_of_the_image_model_via_user-content_injection,thm:stabilization_image_diversity_injection,thm:boundedness_image_fidelity_injection}).

\subsection{Notations} Matrices are represented by bold capital letters, e.g., \(\mA\), vectors by bold lowercase letters, e.g., \(\vy\), and scalars by regular letters, e.g., \(t\). For a matrix \(\mA\), we denote its trace by \(\tr(\mA)\), its nuclear norm (i.e., the sum of its singular values) by \(\|\mA\|_*\), and its square root, when positive definite, by \(\mA^{1/2}\). We adopt the Loewner order \(\prec\) or \(\preceq\) for two symmetric matrices \(\mA\) and \(\mB\). Specifically, we write \(\mA\prec\mB\) (resp.\ \(\mA\preceq\mB\)) if and only if \(\mB-\mA\) is positive definite (resp.\ positive semidefinite). We use \(\mathbb{P}\) for the probability operator, \(\E\) for the expectation operator, \(\Var\) for the variance, and \(\Cov\) for the covariance matrix. Within the expectation operator, the expression to the left of \(|\) represents the variable whose expectation is taken, while the variables to the right specify the given conditions under which the expectation is evaluated. We use \(\mathcal{N}\big(\vy;\vmu, \mSigma\big)\) to denote a multi-dimensional Gaussian distribution evaluated at \(\vy\) with mean vector \(\vmu\) and covariance matrix \(\mSigma\). We use \(\liminf\) (resp.\ \(\limsup\)) to denote the limit inferior (resp.\ limit superior), i.e., the greatest lower bound (resp.\ the least upper bound) of the set of limit points of a sequence.

\section{Problem Setup: Co-Evolving System and Diagnostic Measures}

In this section, we first introduce the mathematical model for the co-evolving text-image system, which consists of a text model (a multinomial distribution over texts) and an image model (a multi-dimensional Gaussian distribution conditioned on texts) in~\cref{subsec:the_text_model_and_the_image_model}. We then describe the co-evolving text-image training procedure in~\cref{subsec:the_co-evolving_text-image_system} and finally define diagnostic measures for monitoring the system's behavior, with a focus on diversity and fidelity, in~\cref{subsec:diagnostic_measures_of_the_co-evolving_system}.

\subsection{The Text Model and the Image Model}
\label{subsec:the_text_model_and_the_image_model}

We model the text model as a multinomial distribution over a fixed corpus \(\mathcal{X} = \{ x_1, x_2, \dots, x_K \}\) with 
\begin{equation}
p_t(x_i) = \mathbb{P}(\text{text} = x_i), \quad \sum_{i=1}^{K} p_t(x_i) = 1,
\end{equation}
where \(t\) represents the macro time step. This is a natural choice for representing categorical data such as discrete texts or hashtags. At each generation step \(t\), the probability distribution \(p_t(x_i)\) determines how likely a given text \(x_i\) is to be generated. 

Inspired by modern generative models that typically operate in a structured latent space~\cite{dao2023flow,rombach2022high}, often learned using variational autoencoders (VAEs)~\cite{kingma2014auto} that assume a Gaussian prior, we model the image model as a conditional multi-dimensional Gaussian distribution. Specifically, for each text \(x_i\), we assume that the image model generates vectors \(\vy \in \mathbb{R}^d\) from a multi-dimensional Gaussian distribution
\begin{equation}
q_t(\vy|x_i) \coloneqq \mathcal{N}\big(\vy; \vmu_t(x_i), \mSigma_t(x_i)\big).
\end{equation}
Here, \(\vmu_t(x_i)\) represents the mean image output for \(x_i\) and \(\mSigma_t(x_i)\) denotes the covariance matrix, capturing the variability in generated images.

\subsection{The Co-Evolving Text-Image Training Procedure}
\label{subsec:the_co-evolving_text-image_system}

\Cref{alg:co-evolving_generative_models}~formalizes a co-evolving training procedure in which both the text and image models are updated repeatedly. At the beginning of each macro time step \(t\), the current state of the system is given by the text model probability vector \(\vp_t\) and the image model conditional distributions \(q_t(\cdot| x_i)\) for each text \(x_i\). Within a macro time step, the algorithm first updates the text model \(M_t\) times while the image model remains fixed. In each text model update, the algorithm samples \(N\) texts from the current distribution, generates the corresponding images using the fixed image model, and computes the posterior probability given by~\labelcref{eq:alg_posterior_probability}. Then the text model is updated by averaging these posteriors over the \(N\) samples as in~\labelcref{eq:alg_text_update}. This update reinforces those texts that are more likely to have generated the observed images, much like how TikTok reinforces hashtags that result in more engaging content.

After the text model has been updated, the algorithm turns to the image model. Here, the image model is updated \(N_t\) times while sampling texts from the newly updated text model. For each text \(x_i\), the algorithm computes the sample mean and the sample covariance through~\labelcref{eq:alg_sample_mean,eq:alg_sample_covariance}, respectively. These statistics are then used to update the image model for each text as in~\labelcref{eq:alg_image_update}. This update reflects the image model's adaptation to the repository of existing images, refining its output distribution.

\begin{algorithm}
\caption{Co-Evolving Generative Model Training Procedure}
\label{alg:co-evolving_generative_models}
\begin{algorithmic}[1]
\Require A text model with initial probability vector \(\vp_0\), an image model with initial conditional distributions \(\{q_0(\cdot|x_i)\colon 1\leq i\leq K\}\), and the number of macro time steps \(T\)
\Ensure A trained text model with probability vector \(\vp_T\), a trained image model with conditional distributions \(\{q_T(\cdot|x_i)\colon 1\leq i\leq K\}\)
\For{macro time step \(t = 1\) to \(T\)}
\State Initialize \(\vp_{\text{curr}} \gets \vp_{t-1}\)
\For{\(m = 1\) to \(M_t\)} \Comment{Text model update (repeated \(M_t\) times)}
\State Sample \(N\) texts \(x^{(j)} \sim \vp_{\text{curr}}\) and generate corresponding images \(\vy^{(j)} \sim q_{t-1}(\vy | x^{(j)})\)
\State Compute posterior probabilities:
\begin{equation}
\label{eq:alg_posterior_probability}
p_{\text{curr}}(x_i | \vy^{(j)}) = \frac{p_{\text{curr}}(x_i) q_{t-1}(\vy^{(j)} | x_i)}{\sum_{k=1}^{K} p_{\text{curr}}(x_k)   q_{t-1}(\vy^{(j)} | x_k)}
\end{equation}
\State Update text model:
\begin{equation}
\label{eq:alg_text_update}
p_{\text{curr}}(x_i) \gets \frac{1}{N} \sum_{j=1}^{N} p_{\text{curr}}(x_i | \vy^{(j)})
\end{equation}
\EndFor
\State Set \(\vp_{t} \gets \vp_{\text{curr}}\)
\State Initialize \(q_{\text{curr}}(\cdot | x_i) \gets q_t(\cdot | x_i)\) for each \(x_i\)
\For{\(n = 1\) to \(N_t\)} \Comment{Image model update (repeated \(N_t\) times)}
\State Sample \(N\) texts \(x^{(j)} \sim \vp_t\) and generate corresponding images \(\vy^{(j)} \sim q_{\text{curr}}(\vy | x^{(j)})\)
\For{each \(x_i \in \mathcal{X}\)}
\State Let \(N_i \gets \#\{j\colon x^{(j)} = x_i\}\)
\State Compute sample mean:
\begin{equation}
\label{eq:alg_sample_mean}
\vmu_{\text{curr}}(x_i) = \frac{1}{N_i} \sum_{j:x^{(j)} = x_i} \vy^{(j)}
\end{equation}
\State Compute sample covariance:
\begin{equation}
\label{eq:alg_sample_covariance}
\mSigma_{\text{curr}}(x_i) = \frac{1}{N_i - 1} \sum_{j:x^{(j)} = x_i} \big(\vy^{(j)} - \vmu_{\text{curr}}(x_i)\big)\big(\vy^{(j)} - \vmu_{\text{curr}}(x_i)\big)^\top
\end{equation}
\State Update the image model for \(x_i\):
\begin{equation}
\label{eq:alg_image_update}
q_{\text{curr}}(\cdot | x_i) \gets \mathcal{N}\big(\cdot; \vmu_{\text{curr}}(x_i), \mSigma_{\text{curr}}(x_i)\big)
\end{equation}
\EndFor
\EndFor
\State Set \(q_{t}(\cdot | x_i) \gets q_{\text{curr}}(\cdot | x_i)\) for each \(x_i\)
\EndFor
\end{algorithmic}
\end{algorithm}

\subsection{Diagnostic Measures of the Co-Evolving System}
\label{subsec:diagnostic_measures_of_the_co-evolving_system}

In order to monitor the behavior of our co-evolving system, we quantify both the diversity and fidelity of the models involved. For the text model, diversity determines whether the system is exploring a wide range of texts or collapsing to a few dominant ones. For the image model, two complementary aspects are critical: (\romannumeral 1) its diversity, which captures the variability of generated images; and (\romannumeral 2) its fidelity, which measures how closely the generated images adhere to a desired reference (for example, an initial or canonical distribution). In the sections that follow, we rigorously define these measures.

\subsubsection{Diversity of the text model} Drawing inspiration from \emph{purity} in quantum information theory~\cite{jaeger2007quantum}, we define the diversity of the text model at macro time step \(t\) as
\begin{equation}
\label{eq:text_model_diversity}
H_t(\vp_t) \coloneqq \sum_{i=1}^{K} \big(p(x_i) - p(x_i)^2\big) = 1-\sum_{i=1}^{K}p(x_i)^2,
\end{equation}
where \(\vp_t = (p_t(x_1), p_t(x_2), \dotsc, p_t(x_K))\) represents the probability distribution of the text model over the corpus. If \(\vp_t\) is a \emph{one-hot vector}, meaning the text model concentrates all probability on a single text (i.e., it always generates the same text), then \(H_t(\vp_t) = 0\). This represents complete mode collapse with no diversity in text generation. If \(\vp_t\) is \emph{uniform} (i.e., \(p_t(x_i) = 1/K\) for all \(i\)), then \(H_t(\vp_t) = 1 - 1/K\), which is the maximum achievable diversity. In this case, all texts are equally likely, indicating a fully exploratory model. Intermediate values of \(H_t(\vp_t)\) indicate partial diversity, where the model has some degree of preference for certain texts while still allowing variation.

\subsubsection{Fidelity of the image model} The fidelity of the image model reflects how closely the generated images adhere to the intended average style associated with the text \(x_i\). A common approach is to compare the current output with a reference. In our setting, we take the initial mean \(\vmu_0(x_i)\) as the reference. Then, the fidelity of the image model at macro time step \(t\) is measured by the Euclidean distance of the current mean from the reference as
\begin{equation}
\label{eq:image_model_fidelity}
F_t(x_i)\coloneqq\|\vmu_t(x_i)-\vmu_0(x_i)\|_2.
\end{equation}
A small value of \(F_t(x_i)\) indicates that the image model preserves the original style, while a large deviation suggests that the model has drifted away from its initial style.

\subsubsection{Diversity of the image model} For text \(x_i\), we define the image model diversity as
\begin{equation}
\label{eq:image_model_diversity}
D_t(x_i)\coloneqq\tr\big(\mSigma_t(x_i)^{1/2}\big)=\|\mSigma_t(x_i)\|_*,
\end{equation}
which is expressed in the same units as the image space\footnote{\label{fnote:matrix_square_root} Please refer to~\cref{sec:math_background} for the definition and computation of the matrix square root.}. This measure reflects the typical spread of generated images, unlike \(\tr\big(\mSigma_t(x_i)\big)\) (in squared units) that may overemphasize outlier variances. Intuitively, a large \(D_t(x_i)\) indicates a wide variety of outputs, whereas \(D_t(x_i)\approx 0\) signals mode collapse.

\section{Isolating the Dynamics by Freezing One Model}
\label{sec:isolating_the_dynamics_by_freezing_one_model}

We begin by examining a simplified scenario in which only one model is updated while the other remains fixed. In terms of~\cref{alg:co-evolving_generative_models}, this corresponds to the extreme cases of either setting \(M_t=0\) (no text updates) or setting \(N_t=0\) (no image updates). Analyzing these limiting cases allows us to isolate the effects of individual updates and gain insight into their respective roles in the training procedure.

\subsection{Trainable Text Model with Frozen Image Model}
\label{subsec:trainable_text_model_with_frozen_image_model}

We begin our analysis by considering a scenario in which the image model is frozen, i.e., its parameters remain fixed, while the text model continues to update. Although the image model does not change, we still draw random samples from it, so that the only source of randomness in this setting is the variability in the image generation process. We first analyze the expected change in the text model diversity over time in~\cref{thm:text_model_diversity_frozen_image_model}, which establishes a recursion for the diversity measure and characterizes its limiting behavior.

\begin{restatable}[Recursion of text model diversity under frozen image model]{theorem}{TextModelDiversityFrozenImageModel}
\label{thm:text_model_diversity_frozen_image_model}
In~\cref{alg:co-evolving_generative_models}, assume that the image model is frozen, meaning its conditional distributions \( q(\vy | x_i) \) remain fixed (hence, we omit the time subscript \(t\) in \(q_t(\vy|x_i)\)), and that the text model is updated. Then, in expectation, the diversity measure \(H_t\) given by~\labelcref{eq:text_model_diversity} satisfies the recursion
\begin{equation}
\E[H_{t+1}(\vp_{t+1})|\vp_t] = \Big(1-\dfrac{1}{N}\Big)\cdot H_t(\vp_t) + \frac{1}{N}\cdot\Big(1-\sum_{i=1}^{K} \E_{\vy\sim r_t}[Z_i(\vy)^2]\Big),
\end{equation}
where 
\begin{equation}
Z_i(\vy) = \dfrac{p_t(x_i) q(\vy | x_i)}{\sum_{k=1}^{K} p_t(x_k) q(\vy | x_k)}
\end{equation}
is the posterior probability, \(N\) is the number of generated images in each update, and \(r_t(\vy) = \sum_{k=1}^{K} p_t(x_k) q(\vy | x_k)\). As a consequence, the diversity measure is non-increasing in expectation, i.e.,
\begin{equation}
\E[H_{t+1}(\vp_{t+1})] \leq \E[H_t(\vp_t)].
\end{equation}
Moreover, as \(t \to \infty\), the diversity measure stabilizes at a limiting value, which is either (\romannumeral 1) \(0\), which corresponds to \(\vp_\infty\) being a one-hot vector; or (\romannumeral 2) \(H_0(\vp_0)\), which corresponds to \(q(\vy|x_i)\) being identical for all \(1\leq i\leq K\).
\end{restatable}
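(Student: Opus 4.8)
The plan is to treat one text update in isolation. Conditioned on $\vp_t$ and with the image model frozen, the generated samples $\vy^{(1)},\dots,\vy^{(N)}$ are i.i.d.\ draws from the marginal $r_t(\vy)=\sum_k p_t(x_k)q(\vy|x_k)$, and the update reads $p_{t+1}(x_i)=\frac1N\sum_{j=1}^N Z_i(\vy^{(j)})$. The single identity that drives everything is $\E_{\vy\sim r_t}[Z_i(\vy)]=\int p_t(x_i)q(\vy|x_i)\,\dif\vy=p_t(x_i)$, valid because $q(\cdot|x_i)$ integrates to one. To obtain the recursion I would expand $p_{t+1}(x_i)^2=\frac1{N^2}\sum_{j,l}Z_i(\vy^{(j)})Z_i(\vy^{(l)})$ and take the conditional expectation, splitting the $N$ diagonal terms (each contributing $\E_{r_t}[Z_i^2]$) from the $N(N-1)$ off-diagonal terms (each contributing $(\E_{r_t}[Z_i])^2=p_t(x_i)^2$ by independence). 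This gives $\E[p_{t+1}(x_i)^2|\vp_t]=\frac1N\E_{r_t}[Z_i^2]+\frac{N-1}{N}p_t(x_i)^2$; summing over $i$ and substituting $\sum_i p_t(x_i)^2=1-H_t$ then yields the stated recursion after collecting terms. This computation is routine but is the technical core.

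For the monotonicity claim I would rewrite the recursion as $\E[H_{t+1}|\vp_t]-H_t=\frac1N\big(\sum_i p_t(x_i)^2-\sum_i\E_{r_t}[Z_i^2]\big)=-\frac1N\sum_{i=1}^K\Var_{\vy\sim r_t}[Z_i(\vy)]$, again using $\E_{r_t}[Z_i]=p_t(x_i)$. Since each variance is nonnegative, the conditional increment is $\le 0$, and taking total expectations gives $\E[H_{t+1}]\le\E[H_t]$; this is precisely Jensen's inequality for $z\mapsto z^2$.

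For the limiting behavior I would exploit two martingale structures. First, the same identity shows $\E[p_{t+1}(x_i)|\vp_t]=p_t(x_i)$, so each coordinate of $\vp_t$ is a bounded martingale and $\vp_t\to\vp_\infty$ almost surely. Second, $H_t$ is a bounded supermartingale, hence converges a.s.\ and in $L^1$. Summing the expected decrements telescopes to $\frac1N\sum_{t\ge0}\E\big[\sum_i\Var_{r_t}[Z_i]\big]=\E[H_0]-\E[H_\infty]<\infty$, so $\E\big[\sum_i\Var_{r_t}[Z_i]\big]\to0$. Writing $G(\vp):=\sum_i\Var_{r}[Z_i]$ as a bounded, continuous function of the probability vector, the a.s.\ convergence $\vp_t\to\vp_\infty$ together with $\E[G(\vp_t)]\to0$ forces $G(\vp_\infty)=0$ almost surely; that is, the limit is a fixed point of the dynamics.

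Finally I would decode the fixed-point condition. $G(\vp_\infty)=0$ means every posterior $Z_i^\infty(\vy)$ is $r_\infty$-a.s.\ constant, and since $\E_{r_\infty}[Z_i^\infty]=p_\infty(x_i)$ that constant must be $p_\infty(x_i)$; for each $i$ with $p_\infty(x_i)>0$ this rearranges to $q(\vy|x_i)=r_\infty(\vy)$, so all conditionals on the support of $\vp_\infty$ coincide. The dichotomy then splits on the support size: if it is a singleton, $\vp_\infty$ is one-hot and $H_\infty=0$; if instead all $q(\cdot|x_i)$ are identical, the update map is the identity, $\vp_t\equiv\vp_0$ for every $t$, and $H_\infty=H_0(\vp_0)$. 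I expect this last step to be the main obstacle: the clean two-way split relies on the conditionals being distinct (a genericity condition), since otherwise the argument shows only that the \emph{supported} conditionals coincide, leaving intermediate fixed points on a strict sub-corpus that one must either exclude by a distinctness hypothesis or reinterpret as case (ii) restricted to the support. Making the passage from \textbf{summable expected decrements} to \textbf{$\vp_\infty$ is a fixed point} fully rigorous — in particular the boundedness and continuity of $G$ and the a.s.\ identification of the random limit — is where the care is needed.
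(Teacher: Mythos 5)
Your proposal is correct, and for the core of the argument it follows the paper's route exactly: the same identity \(\E_{\vy\sim r_t}[Z_i(\vy)\,|\,\vp_t]=p_t(x_i)\), the same diagonal/off-diagonal split giving \(\E[p_{t+1}(x_i)^2|\vp_t]=\tfrac1N\E_{r_t}[Z_i^2]+\tfrac{N-1}{N}p_t(x_i)^2\), and monotonicity from nonnegativity of the variance. Where you genuinely diverge is the limiting behavior, and there your treatment is \emph{more} rigorous than the paper's. The paper simply asserts that ``the system converges to an equilibrium distribution \(p_\infty\)'' satisfying the stationarity equation (the martingale convergence theorem is invoked only in a remark outside the proof), and then extracts the equality condition via Cauchy--Schwarz; you instead derive convergence from two explicit martingale structures (each coordinate a bounded martingale, \(H_t\) a bounded supermartingale), telescope the expected decrements to get \(\E[\sum_i\Var_{r_t}(Z_i)]\to 0\), and pass to the limit through continuity of \(G(\vp)=\sum_i\Var_r(Z_i)\). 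Your variance-is-zero characterization of fixed points is equivalent to the paper's Cauchy--Schwarz equality condition, so both proofs land on the same conclusion: the conditionals \(q(\cdot|x_i)\) must coincide on the support of \(\vp_\infty\). Finally, the obstacle you flag at the end is a real gap --- but it is a gap in the \emph{paper's} proof as well as in the theorem statement, not in your argument. The paper derives exactly the same on-support condition and then silently jumps to the two-case dichotomy; yet if, say, \(K=3\) with \(q(\cdot|x_1)=q(\cdot|x_2)\neq q(\cdot|x_3)\), the limit \(\vp_\infty=(\tfrac12,\tfrac12,0)\) is a perfectly good fixed point that is neither one-hot nor yields \(H_0(\vp_0)\). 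So the clean dichotomy requires either pairwise-distinct conditionals as a hypothesis or a reinterpretation of case (ii) as ``identical on the support,'' exactly as you say; your explicit acknowledgment of this is an improvement over the paper's proof rather than a defect of yours.
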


\ifbool{sketch}{
\begin{proof}[Sketch of proof]
The proof proceeds in several key steps:
\begin{enumerate}
\item We first show that \(\E_{\vy\sim r_t}[Z_i(\vy) | \vp_t]=p_t(x_i)\).
\item We then decompose the second moment as \(\E_{\vy\sim r_t}[Z_i(\vy)^2|\vp_t]=p_t(x_i)^2+\Var_{\vy\sim r_t}(Z_i(\vy)|\vp_t)\).
\item Using the independence of the \(N\) samples in each update, we derive a recursion for the diversity measure \(\E[H_{t+1}(\vp_{t+1})|\vp_t]\) in terms of \(H_t(\vp_t)\).
\item We then establish that \(\E[H_{t}(\vp_{t})]\) is monotone nonincreasing.
\item Finally, by applying the Cauchy--Schwarz inequality, we identify the equality condition under which the diversity reaches its stationary point.
\end{enumerate}
\end{proof}
}

The proof of~\cref{thm:text_model_diversity_frozen_image_model}, along with all other proofs, is deferred to~\cref{sec:proofs_to_theorems}. \Cref{thm:text_model_diversity_frozen_image_model} implies that, unless the image model is uniform across texts, the text model is destined to converge to a degenerate distribution where diversity vanishes. Moreover, from the proof of~\cref{thm:text_model_diversity_frozen_image_model}, we observe that the sequence of probability vectors \(\{\vp_t\}\) forms a vector-valued martingale\footnote{\label{fnote:martingale_background} Please refer to~\cref{sec:math_background} for the definition of a martingale and the martingale convergence theorem.}. Since each \(\vp_t\) lies in the \(K\)-simplex (which is compact), the martingale convergence theorem~\cite{hall2014martingale} guarantees that \(\vp_t\) converges almost surely to a random vector \(\vp_\infty\), which turns out to be either a one-hot vector (i.e., when the image model is non-uniform) or equal to \(\vp_0\) (i.e., when the image model is uniform).

In the recursion for the text model diversity, since the term \(1-\sum_{i=1}^{K}\E_{\vy\sim r_t}[Z_i(\vy)^2]\) is nonnegative, it immediately follows that the per-update reduction in diversity is lower bounded by a factor of \((1-N^{-1})\). In other words, although the diversity is non-increasing, it cannot decrease arbitrarily fast; its reduction is at most as fast as a exponential decay with ratio \((1-N^{-1})\) (a bound that holds regardless of whether the image model is updated), which we formalize in~\cref{cor:lower_bound_text_model_diversity}. The corresponding empirical results can be found in~\cref{subsubsec:exp_text_model_diversity_under_the_frozen_image_model}.

\begin{corollary}[Text model diversity decays at most exponentially]
\label{cor:lower_bound_text_model_diversity}
Following~\cref{alg:co-evolving_generative_models}, the text model diversity \(H_t\) given by~\labelcref{eq:text_model_diversity} satisfies
\begin{equation}
\E[H_{t+1}(\vp_{t+1})] \geq \Big(1-\frac{1}{N}\Big)\cdot \E[H_t(\vp_t)].
\end{equation}
By iterating the inequality we obtain
\begin{equation}
\E[H_t(\vp_t)] \geq \Big(1-\frac{1}{N}\Big)^t\cdot H_0.
\end{equation}
That is, the expected diversity decays at most exponentially with rate \((1-N^{-1})\).
\end{corollary}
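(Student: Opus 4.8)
The plan is to derive the corollary directly from the recursion established in \cref{thm:text_model_diversity_frozen_image_model}, since the only additional ingredient needed is the nonnegativity of the second term on the right-hand side. Recall that the theorem gives
\begin{equation*}
\E[H_{t+1}(\vp_{t+1})|\vp_t] = \Big(1-\dfrac{1}{N}\Big)\cdot H_t(\vp_t) + \frac{1}{N}\cdot\Big(1-\sum_{i=1}^{K} \E_{\vy\sim r_t}[Z_i(\vy)^2]\Big).
\end{equation*}
The first step is to argue that the bracketed quantity \(1-\sum_{i=1}^{K}\E_{\vy\sim r_t}[Z_i(\vy)^2]\) is nonnegative. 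For each fixed \(\vy\), the posteriors \(Z_1(\vy),\dotsc,Z_K(\vy)\) are nonnegative and sum to one, so \(\sum_{i=1}^{K}Z_i(\vy)^2\leq\big(\sum_{i=1}^{K}Z_i(\vy)\big)^2=1\); taking the expectation over \(\vy\sim r_t\) and summing over \(i\) preserves this bound, giving \(\sum_{i=1}^{K}\E_{\vy\sim r_t}[Z_i(\vy)^2]\leq 1\).

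With this in hand, I would simply discard the (nonnegative) second term to obtain the conditional one-step bound
\begin{equation*}
\E[H_{t+1}(\vp_{t+1})|\vp_t]\geq\Big(1-\dfrac{1}{N}\Big)\cdot H_t(\vp_t).
\end{equation*}
Applying the tower property of conditional expectation, i.e.\ taking the outer expectation over \(\vp_t\), then yields the unconditional inequality \(\E[H_{t+1}(\vp_{t+1})]\geq(1-N^{-1})\,\E[H_t(\vp_t)]\), which is the first claimed bound.

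Finally, I would iterate this inequality from step \(t\) down to step \(0\). Since the contraction factor \((1-N^{-1})\) is constant in \(t\), a straightforward induction gives \(\E[H_t(\vp_t)]\geq(1-N^{-1})^t\,\E[H_0(\vp_0)]\); as the initial distribution \(\vp_0\) is deterministic, \(\E[H_0(\vp_0)]=H_0\), completing the proof. I do not anticipate any real obstacle here: the statement is an immediate corollary of the theorem's recursion, and the only substantive observation is the sum-to-one property of the posterior probabilities, which forces the discarded term to be nonnegative.
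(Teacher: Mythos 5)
Your proposal is correct and follows essentially the same route as the paper: the paper also treats this corollary as an immediate consequence of the recursion in~\cref{thm:text_model_diversity_frozen_image_model}, observing that the term \(1-\sum_{i=1}^{K}\E_{\vy\sim r_t}[Z_i(\vy)^2]\) is nonnegative (which your sum-to-one argument for the posteriors \(Z_i(\vy)\) justifies), dropping it to obtain the one-step bound \(\E[H_{t+1}(\vp_{t+1})|\vp_t]\geq(1-N^{-1})H_t(\vp_t)\), and iterating after taking total expectation. Your additional remarks (the tower property and \(\E[H_0(\vp_0)]=H_0\) for deterministic \(\vp_0\)) merely make explicit steps the paper leaves implicit.
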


\subsection{Trainable Image Model with Frozen Text Model}
\label{subsec:trainable_image_model_with_frozen_text_model}

We then consider the scenario in which the text model is frozen, i.e.,  the probabilities associated with the different texts remain fixed, while the image model continues to update. In~\cref{thm:image_model_covariance_frozen_text_model}, we show that the image model diversity converges to zero at an exponential rate.

\begin{restatable}[Image model diversity decays under frozen text model]{theorem}{ImageModelCovarianceFrozenTextModel}
\label{thm:image_model_covariance_frozen_text_model}
Let the fixed probabilities of the text distribution be \(\vp = (p_1, p_2, \dotsc, p_K)>\vzero\) (where we drop the time subscript \(t\) in \(\vp_t\) for brevity). Then, there exist constants \(C>0\) and \(0<\rho<1\) such that the image model diversity \(D_t(x_i)\) given by~\labelcref{eq:image_model_diversity} satisfies
\begin{equation}
\E[D_t(x_i)] = \E\big[\tr\big(\mSigma_t(x_i)^{1/2}\big)\big]\leq C\rho^t,
\end{equation}
i.e., it converges to zero at an exponential rate.
\end{restatable}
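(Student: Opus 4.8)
The plan is to reduce the matrix-valued dynamics to a one-step contraction estimate for the scalar diversity $D_t(x_i)=\tr(\mSigma_t(x_i)^{1/2})$ and then iterate. Because the sample covariance in \labelcref{eq:alg_sample_covariance} is translation invariant, the evolution of $\mSigma_t(x_i)$ decouples from that of the mean, so I may ignore $\vmu$ throughout (its drift is the content of the companion fidelity theorem). Fix a text $x_i$ and track the covariance across the micro-updates inside a macro step. Writing $\mSigma^{(n)}$ for the covariance after the $n$-th image update and conditioning on the count $N_i=m$ of samples landing on $x_i$ (which is $\mathrm{Bin}(N,p_i)$ since the text model is frozen), the classical sampling theory for Gaussians gives the exact distributional recursion
\[
(m-1)\,\mSigma^{(n)} \sim W_d\big(\mSigma^{(n-1)},\, m-1\big),
\qquad\text{i.e.}\qquad
\mSigma^{(n)} \stackrel{d}{=} (\mSigma^{(n-1)})^{1/2}\,\mS\,(\mSigma^{(n-1)})^{1/2},
\]
where $(m-1)\mS$ is a standard Wishart $W_d(\mI,m-1)$ with $\E[\mS]=\mI$.

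The heart of the argument is a one-step contraction: for each admissible $m$ there is $\rho_m<1$ with $\E[\tr((\mSigma^{(n)})^{1/2})\mid \mSigma^{(n-1)}=A,\,N_i=m]\le \rho_m\,\tr(A^{1/2})$ for all $A\succeq\vzero$. Set $g_m(A):=\E[\tr((A^{1/2}\mS A^{1/2})^{1/2})]$. Since $A\mapsto\tr(A^{1/2})$ is concave and $\E[A^{1/2}\mS A^{1/2}]=A$, Jensen's inequality already gives $g_m(A)\le\tr(A^{1/2})$, i.e.\ non-expansion; the point is to make it strict and \emph{uniform}. Both sides are homogeneous of degree $1/2$, so I restrict to the compact set $\{A\succeq\vzero:\tr(A^{1/2})=1\}$ and bound $g_m$ there. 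Strict concavity of $\tr(\cdot^{1/2})$ together with the genuine fluctuation of $\mS$ yields $g_m(A)<1$ at every such $A$ (the rank-deficient faces handled directly; e.g.\ for $A=\lambda\,\vu\vu^\top$ one computes $g_m(A)/\tr(A^{1/2})=\E[\sqrt{\vu^\top\mS\vu}]=\E[\sqrt{\chi^2_{m-1}/(m-1)}]<1$). Continuity of $g_m$ follows from dominated convergence via the integrable envelope $\tr((A^{1/2}\mS A^{1/2})^{1/2})\le\sqrt{d\,\tr(\mS A)}\le\sqrt{d\,\|\mS\|\,\tr(A)}$ and the finiteness of $\E\|\mS\|$ for the Wishart. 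A continuous function that is $<1$ on a compact set attains a maximum $\rho_m<1$.

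Averaging the one-step bound over $N_i\sim\mathrm{Bin}(N,p_i)$ produces an unconditional per-update factor $\bar\rho_i:=\E_{N_i}[\rho_{N_i}]$, with the convention that a draw of $N_i\le1$ contributes a factor $\le1$ (and collapses the diversity outright when $N_i=1$). Since $p_i>0$ and $N\ge2$, the event $N_i\ge2$ has positive probability and contributes a factor strictly below $1$, while the complementary events contribute at most $1$; hence $\bar\rho_i<1$. Using the Markov structure (fresh, conditionally independent samples at each micro-update) and the tower property, I chain the estimate across the $N_t$ micro-updates of macro step $t$ to get $\E[D_t(x_i)\mid\mSigma_{t-1}(x_i)]\le\bar\rho_i^{\,N_t}D_{t-1}(x_i)$, and then across macro steps to obtain $\E[D_t(x_i)]\le\bar\rho_i^{\sum_{s\le t}N_s}D_0(x_i)$. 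Provided at least one image update occurs per macro step ($N_s\ge1$), the exponent is $\ge t$, so this is $\le C\rho^t$ with $C=D_0(x_i)$ and $\rho=\bar\rho_i\in(0,1)$, as claimed in \cref{thm:image_model_covariance_frozen_text_model}.

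\textbf{The main obstacle} is the one-step matrix contraction. The difficulty is that $(\cdot)^{1/2}$ is not multiplicative and the Wishart noise rotates the eigenbasis of $A$, so one cannot simply track the eigenvalues of $\mSigma^{(n)}$ dimension by dimension; the compactness-plus-strict-concavity argument is precisely what upgrades the qualitative non-expansion from Jensen into a quantitative, direction-uniform contraction. The delicate points are (i) producing the integrable envelope that justifies continuity of $g_m$, and (ii) ruling out $g_m(A)\to1$ along the rank-deficient boundary of the PSD cone, which is where the explicit rank-one (chi-squared) computation is needed.
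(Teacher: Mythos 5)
Your proposal is correct, and at the one genuinely delicate step it takes a different---and in fact more robust---route than the paper. Both arguments share the same skeleton: the Wishart representation \(\mSigma^{(n)}\stackrel{d}{=}(\mSigma^{(n-1)})^{1/2}\mS\,(\mSigma^{(n-1)})^{1/2}\) with \(\E[\mS]=\mI\), conditioning on \(N_i\sim\mathrm{Binomial}(N,p_i)\), a non-expansion convention for \(N_i\le 1\), and the observation that \(\P(N_i\ge 2)>0\) forces the binomially averaged factor strictly below \(1\). They differ in how the conditional one-step contraction is obtained. The paper applies Jensen's operator inequality to get \(\E[\mS^{1/2}]\prec\mI\) and then pulls \(\sigma_1(\mS^{1/2})\) out of the sum of singular values; but closing that argument requires \(\E[\sigma_1(\mS^{1/2})]\le\lambda_{\max}\big(\E[\mS^{1/2}]\big)\), which is the wrong direction of Jensen (\(\lambda_{\max}\) is convex, so \(\E[\lambda_{\max}(\cdot)]\ge\lambda_{\max}(\E[\cdot])\)), and for \(d\ge 2\) one actually has \(\E[\sigma_1(\mS^{1/2})]>1\), since the top eigenvalue of a normalized Wishart concentrates near \((1+\sqrt{d/(m-1)})^2\). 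Your argument---scalar Jensen for the concave functional \(X\mapsto\tr(X^{1/2})\) against the exact conditional mean \(A\), upgraded to a \emph{uniform} strict contraction via degree-\(1/2\) homogeneity, continuity of \(g_m\) (your envelope is valid: \(\tr(A)\le(\tr(A^{1/2}))^2=1\) on the normalized set, and \(\E\|\mS\|<\infty\)), and compactness of \(\{A\succeq 0:\tr(A^{1/2})=1\}\)---never needs to commute an expectation with an extremal eigenvalue, and this is exactly what the compactness buys: a genuine, direction-uniform \(\rho_m<1\). The price is that your \(\rho_m\) is non-constructive, whereas the paper's intended route feeds into the explicit approximate rate \(1-(d+1)/(8(N+1)p_i)\) of \cref{thm:diff_convergence_rate_image_model}, which your argument alone would not produce. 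One point to tighten: your explicit strictness computation covers only the rank-one face; for \(1<\mathrm{rank}(A)<d\) you should add that the compression of \(\mS\) to \(\mathrm{range}(A)\) is again a normalized Wishart with the same degrees of freedom, so the strict-Jensen (chi-square) argument applies within that subspace---a one-line fix.
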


\ifbool{sketch}{
\begin{proof}[Sketch of proof]
The proof proceeds in several key steps:
\begin{enumerate}
\item We first express the updated covariance matrix in terms of the current covariance matrix:
\[
\mSigma_{t+1}(x_i) = \mSigma_t(x_i)^{1/2}\cdot \frac{\mW}{N_i-1}\cdot \mSigma_t(x_i)^{1/2},
\]
where \(\mW\sim\text{Wishart}_d(\mI, N_i-1)\) is a \(d\)-dimensional Wishart random matrix\footnote{\label{fnote:wishart_background} Please refer to~\cref{sec:math_background} for the definition of the Wishart random matrix.} with \((N_i-1)\) degrees of freedom and identity scale matrix.
\item By applying Jensen's operator inequality to the matrix square-root function, we obtain
\[
\E\Big[\Big(\frac{\mW}{N_i-1}\Big)^{1/2}\Big] \prec \mI.
\]
\item Finally, we establish the desired convergence using the submultiplicative property of singular values (i.e., for two diagonalizable matrices \(\mA\) and \(\mB\), we have \(\sigma_j(\mA\mB) \leq \sigma_1(\mA)\sigma_j(\mB)\) for all \(j\), where \(\sigma_j(\cdot)\) denotes the \(j\)th largest singular value).
\end{enumerate}
\end{proof}
}

We remark that this theorem may be viewed as a generalization of~\cite[Proposition 1]{bertrand2024stability}, which assumed a fixed \(N_i\) and potentially relied on the commutativity of \(\mSigma_t(x_i)^{1/2}\) and \(\mW^{1/2}\) (where \(\mW\) denotes a Wishart random matrix\footnote{\label{fnote:wishart_distribution} Please refer to~\cref{sec:math_background} for the definition of the Wishart distribution.}~\cite{livan2018introduction}) in its proof. Empirical results supporting~\cref{thm:image_model_covariance_frozen_text_model} are detailed in~\cref{subsubsec:exp_image_model_diversity_under_the_frozen_text_model}. The exponential collapse of the image model's diversity implies that over time the generated images become increasingly concentrated around their current sample mean. However, while a shrinking covariance suggests that the images are more similar, it does not guarantee that this common mean remains faithful to the reference distribution. This observation naturally raises the question: How does the final fidelity of the image model behave? Specifically, does the collapse in diversity ensure that the images are of high quality (i.e., that the sample mean is close to the reference mean), or can the mean itself drift away, thereby compromising fidelity? We formalize this intuition by deriving an upper bound on the fidelity of the image model in~\cref{thm:image_model_mean_frozen_text_model}. This bound depends on the parameters \(C\), \(\rho\), \(N\), and \(p_i\), and it quantifies the interplay between the collapse in diversity and the stability of the mean.

\begin{restatable}[Boundedness of image model fidelity under the frozen text model]{theorem}{ImageModelMeanFrozenTextModel}
\label{thm:image_model_mean_frozen_text_model}
Let the fixed probabilities of the text distribution be \(\vp = (p_1, p_2, \dotsc, p_K)>\vzero\) (where we drop the time subscript \(t\) in \(\vp_t\) for brevity). Suppose there exist constants \(C>0\) and \(0<\rho<1\) such that the image model diversity given by~\labelcref{eq:image_model_diversity} satisfies
\begin{equation}
\E[D_t(x_i)] = \E\big[\tr\big(\mSigma_t(x_i)^{1/2}\big)\big]\leq C\rho^t.
\end{equation}
Then, the image model fidelity defined by~\labelcref{eq:image_model_fidelity}, i.e., the expected deviation of the mean vectors satisfies
\begin{equation}
\E[F_t(x_i)] = \E[\|\vmu_{\infty}(x_i) - \vmu_0(x_i)\|_2]\leq\dfrac{\sqrt{2}C}{\sqrt{(N+1)p_i}\cdot(1-\rho)}.
\end{equation}
\end{restatable}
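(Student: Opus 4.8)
The plan is to exploit the fact that the mean vectors form a \emph{martingale}, so that the total displacement $\vmu_\infty(x_i)-\vmu_0(x_i)$ telescopes into a sum of increments whose sizes are controlled by the geometrically decaying diversity. Fix a text $x_i$, and recall from~\labelcref{eq:alg_sample_mean} that, conditioned on the current state $(\vmu_t(x_i),\mSigma_t(x_i))$ and on the count $N_i$, the next mean $\vmu_{t+1}(x_i)$ is the empirical average of $N_i$ i.i.d.\ draws from $\mathcal{N}\big(\vmu_t(x_i),\mSigma_t(x_i)\big)$. Hence
\begin{equation}
\E[\vmu_{t+1}(x_i)\mid \vmu_t(x_i),\mSigma_t(x_i),N_i]=\vmu_t(x_i),\qquad \Cov\big(\vmu_{t+1}(x_i)\mid \cdots\big)=\tfrac{1}{N_i}\mSigma_t(x_i),
\end{equation}
so $\{\vmu_t(x_i)\}_t$ is a martingale and the increment $\vdelta_t:=\vmu_{t+1}(x_i)-\vmu_t(x_i)$ satisfies $\E[\|\vdelta_t\|_2^2\mid\cdots]=\tfrac{1}{N_i}\tr\big(\mSigma_t(x_i)\big)$. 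On the event $N_i=0$ no sample is drawn, so I set $\vdelta_t=\vzero$ there, which only helps.

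First I would bound a single increment in $L^1$. By Jensen's inequality $\E[\|\vdelta_t\|_2\mid\cdots]\le \big(\tfrac{1}{N_i}\tr(\mSigma_t(x_i))\big)^{1/2}$, and since $\tr(\mSigma)\le\big(\tr(\mSigma^{1/2})\big)^2$ for any positive semidefinite $\mSigma$ (because $\sum_j\lambda_j\le(\sum_j\sqrt{\lambda_j})^2$), we get $\sqrt{\tr(\mSigma_t(x_i))}\le D_t(x_i)$. Because the fresh count $N_i$ at step $t+1$ is independent of $\mSigma_t(x_i)$, I can factor the expectation and invoke the hypothesis $\E[D_t(x_i)]\le C\rho^t$ to reach
\begin{equation}
\E[\|\vdelta_t\|_2]\le \E\!\Big[\tfrac{1}{\sqrt{N_i}}\Big]\,\E[D_t(x_i)]\le \E\!\Big[\tfrac{1}{\sqrt{N_i}}\Big]\,C\rho^t.
\end{equation}

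The arithmetic core is then to show $\E[N_i^{-1/2}]\le \sqrt{2}\,/\sqrt{(N+1)p_i}$ for $N_i\sim\mathrm{Binomial}(N,p_i)$ restricted to $N_i\ge1$. Here I would use the elementary bound $N_i^{-1}\le 2/(N_i+1)$ (valid for $N_i\ge1$), then Jensen again, and finally the exact identity $\E\big[(N_i+1)^{-1}\big]=\big(1-(1-p_i)^{N+1}\big)/\big((N+1)p_i\big)\le 1/((N+1)p_i)$, which follows from $\tfrac{1}{k+1}\binom{N}{k}=\tfrac{1}{N+1}\binom{N+1}{k+1}$ and reindexing the binomial sum. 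Combining the pieces yields $\E[\|\vdelta_t\|_2]\le \sqrt{2}\,C\rho^t/\sqrt{(N+1)p_i}$.

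Finally I would telescope and sum. Since these per-step $L^1$ bounds are summable, the martingale converges almost surely and in $L^1$ to some $\vmu_\infty(x_i)$, and the triangle inequality together with monotone convergence gives
\begin{equation}
\E[F_t(x_i)]=\E[\|\vmu_\infty(x_i)-\vmu_0(x_i)\|_2]\le\sum_{t=0}^{\infty}\E[\|\vdelta_t\|_2]\le \frac{\sqrt{2}\,C}{\sqrt{(N+1)p_i}}\sum_{t=0}^{\infty}\rho^t=\frac{\sqrt{2}\,C}{\sqrt{(N+1)p_i}\,(1-\rho)},
\end{equation}
which is the claimed bound. I expect the main obstacle to be the careful handling of the small-count events $N_i\in\{0,1\}$, where the sample mean and covariance in~\labelcref{eq:alg_sample_mean,eq:alg_sample_covariance} degenerate: the cleanest route is to observe that the increment vanishes when $N_i=0$ and to absorb the $N_i\ge1$ restriction into the $N_i^{-1}\le 2/(N_i+1)$ step, so that the factor $\sqrt{2}$ in the final constant is precisely the price paid for these boundary terms.
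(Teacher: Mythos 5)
Your proof is correct, and its skeleton matches the paper's: both rest on the conditional Gaussian law of the sample mean, \(\vmu_{t+1}(x_i)-\vmu_t(x_i)\sim\mathcal{N}\big(\vzero,\mSigma_t(x_i)/N_i\big)\), the same binomial arithmetic (the bound \(N_i^{-1}\le 2/(N_i+1)\) together with \(\tfrac{1}{k+1}\binom{N}{k}=\tfrac{1}{N+1}\binom{N+1}{k+1}\), producing the factor \(\sqrt{2}/\sqrt{(N+1)p_i}\)), and a concluding triangle-inequality-plus-geometric-sum step. The genuine difference is the order in which Jensen and the expectation over \(\mSigma_t(x_i)\) are applied, and it is not merely cosmetic. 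The paper works at the level of second moments: it bounds \(\E[\|\vmu_{t+1}(x_i)-\vmu_t(x_i)\|_2^2]\) by \(2C^2\rho^{2t}/((N+1)p_i)\) and only afterwards applies Cauchy--Schwarz to pass to \(L^1\). That intermediate step silently requires \(\E\big[\tr\big(\mSigma_t(x_i)\big)\big]\le C^2\rho^{2t}\), which does \emph{not} follow from the stated hypothesis \(\E\big[\tr\big(\mSigma_t(x_i)^{1/2}\big)\big]\le C\rho^t\): the pointwise inequality \(\tr(\mSigma)\le\big(\tr(\mSigma^{1/2})\big)^2\) only gives \(\E[\tr(\mSigma_t(x_i))]\le\E[D_t(x_i)^2]\), and Jensen runs the wrong way to convert \(\E[D_t^2]\) into \((\E[D_t])^2\); the paper's chain is valid only under an almost-sure or second-moment version of the decay assumption. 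You instead apply conditional Jensen \emph{first}, obtaining the per-step \(L^1\) bound \(\E[\|\vdelta_t\|_2\mid N_i,\mSigma_t(x_i)]\le N_i^{-1/2}\sqrt{\tr(\mSigma_t(x_i))}\le N_i^{-1/2}D_t(x_i)\), and then use the independence of the fresh multinomial count \(N_i\) from \(\mSigma_t(x_i)\) (valid precisely because the text model is frozen, so the counts are resampled i.i.d.\ each step) to factor \(\E\big[N_i^{-1/2}\mathbf{1}_{\{N_i\ge1\}}D_t(x_i)\big]=\E\big[N_i^{-1/2}\mathbf{1}_{\{N_i\ge1\}}\big]\,\E[D_t(x_i)]\). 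This invokes only the first-moment hypothesis exactly as stated, so your route actually closes a small rigor gap in the paper's argument while reaching the identical constant. The martingale framing and the explicit handling of \(N_i=0\) are tidy but inessential; the paper deals with the latter implicitly by summing over \(N_i\ge1\).
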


\ifbool{sketch}{
\begin{proof}[Sketch of proof]
The proof proceeds in several key steps:
\begin{enumerate}
\item We first show that, conditioned on \(\vmu_t(x_i)\) and \(N_i\), \(\vmu_{t+1}(x_i)-\vmu_t(x_i)\sim\mathcal{N}(\vzero, \mSigma_t(x_i)/N_i)\), and we derive the corresponding conditional expectation for \(\|\vmu_{t+1}(x_i)-\vmu_t(x_i)\|_2^2\).
\item Using the law of total expectation over the binomially distributed \(N_i\sim\text{Binomial}(N, p_i)\), we obtain an upper bound for the expected squared deviation.
\item Next, by applying the Cauchy--Schwarz inequality, we upper bound \(\E[\|\vmu_{t+1}(x_i)-\vmu_t(x_i)\|_2]\).
\item Finally, using the triangle inequality over the successive updates, we derive an overall upper bound on \(\E[\|\vmu_{\infty}(x_i)-\vmu_0(x_i)\|_2]\).
\end{enumerate}
\end{proof}
}

\Cref{thm:image_model_mean_frozen_text_model} provides an upper bound on the image model fidelity. This bound is inversely proportional to \(\sqrt{(N+1)p_i}\). Thus, as the total number of samples \(N\) increases or the probability \(p_i\) of the text \(x_i\) is higher, meaning the text \(x_i\) is sampled more frequently, the fidelity measure becomes smaller. The term \((1-\rho)^{-1}\) indicates that if the covariance matrices decay rapidly (i.e., \(\rho\) is small), then the bound becomes tighter. Conversely, if \(\rho\) is close to \(1\) (slow decay), the fidelity bound is looser, implying a greater potential drift.

\section{Image-Driven Acceleration in Text Model Collapse}
\label{sec:image-driven_acceleration_in_text_model_collapse}

In this section, we investigate the dynamics of text model collapse as mediated by the behavior of the image model. Recall that in \cref{subsec:trainable_text_model_with_frozen_image_model} we analyzed a simplified setting where the image model is frozen. Under that scenario, the loss of diversity in the text model is governed solely by the inherent variability of the image generation process. As shown in~\cref{subsec:slow_convergence_due_to_large_covariance_image_model}, a carefully constructed frozen image model can yield an arbitrarily slow collapse of the text model diversity. In contrast, in~\cref{subsec:exponential_convergence_due_to_image_model_collapse} we consider the scenario where the image model is allowed to update sufficiently (\(N_t\gg 1\) in~\cref{alg:co-evolving_generative_models}). In this latter case, the image model rapidly contracts its covariance, thereby sharpening the feedback provided to the text model. Consequently, the text model collapse is accelerated and its dynamics more closely approach the theoretical bound derived earlier. This comparison demonstrates that while a frozen image model leads to a gradual loss of diversity in the text model, an actively updating (and collapsing) image model amplifies this effect, driving the text model to collapse more quickly.

\subsection{Slow Convergence Due to Large Covariance Image Model}
\label{subsec:slow_convergence_due_to_large_covariance_image_model}

According to~\cref{thm:text_model_diversity_frozen_image_model}, the update of the text model diversity is governed by
\begin{equation}
1 - \sum_{i=1}^{K}\E_{\vy\sim r_t}[Z_i(\vy)^2], \quad
\text{where } Z_i(\vy) = \frac{p_t(x_i)q(\vy|x_i)}{\sum_{k=1}^{K} p_t(x_k)q(\vy|x_k)}.
\end{equation}
Intuitively, if the posterior distribution is not highly concentrated, then the per-update reduction in diversity will be very small. In~\cref{thm:arbitrarily_small_convergence_large_covariance}, we formalize this intuition by considering the case where the conditional distributions of the image model are Gaussians with large covariance matrices.

\begin{restatable}[Arbitrarily slow convergence under large covariances]{theorem}{ArbitrarilySmallConvergenceLargeCovariance}\
\label{thm:arbitrarily_small_convergence_large_covariance}
For any \(\varepsilon>0\) and any \(t\) with \(H_t(\vp_t)>0\), there exists a family of conditional distributions \(q(\vy|x_i)\) (for example, with covariance matrices equal to \(\sigma^2 \mI\) and with \(\sigma\) chosen sufficiently large) such that the recursion in~\cref{thm:text_model_diversity_frozen_image_model} satisfies
\begin{equation}
\Delta_t \coloneqq H_t(\vp_t) - \E[H_{t+1}(\vp_{t+1})|\vp_t] < \varepsilon H_t(\vp_t).
\end{equation}
In other words, by choosing the image model to be sufficiently diffuse, the per-update reduction in diversity can be made arbitrarily small relative to the current diversity. Hence, the overall convergence can be made arbitrarily slow.
\end{restatable}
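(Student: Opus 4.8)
The plan is to reduce the per-step drop $\Delta_t$ to a sum of posterior variances and then show this sum can be driven to zero by diffusing the image model. Starting from the recursion in~\cref{thm:text_model_diversity_frozen_image_model} and using $H_t(\vp_t)=1-\sum_i p_t(x_i)^2$ together with the first two steps of its proof (namely $\E_{\vy\sim r_t}[Z_i(\vy)\mid\vp_t]=p_t(x_i)$ and $\E_{\vy\sim r_t}[Z_i(\vy)^2\mid\vp_t]=p_t(x_i)^2+\Var_{\vy\sim r_t}(Z_i(\vy)\mid\vp_t)$), I would first establish the clean identity
\[
\Delta_t=\frac{1}{N}\sum_{i=1}^K\Var_{\vy\sim r_t}\big(Z_i(\vy)\mid\vp_t\big)=\frac{1}{N}\,\E_{\vy\sim r_t}\Big[\sum_{i=1}^K\big(Z_i(\vy)-p_t(x_i)\big)^2\,\Big|\,\vp_t\Big].
\]
Because $H_t(\vp_t)$ is a fixed strictly positive number, the claim $\Delta_t<\varepsilon H_t(\vp_t)$ reduces to showing that the expectation on the right can be made smaller than $N\varepsilon H_t(\vp_t)$ by a suitable choice of $q(\vy|x_i)$.

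For the construction I would take $q(\vy|x_i)=\mathcal{N}(\vy;\vmu_i,\sigma^2\mI)$ with arbitrary fixed means $\vmu_i$ and let $\sigma\to\infty$. The key is that the Gaussians become indistinguishable on every bounded region, so the posterior flattens to the prior. Concretely, after the substitution $\vu=\vy/\sigma$, the common factor $\exp(-\|\vu\|^2/2)$ cancels between numerator and denominator and the posterior becomes
\[
Z_i(\sigma\vu)=\frac{p_t(x_i)\exp\!\big(\vu^\top\vmu_i/\sigma-\|\vmu_i\|^2/(2\sigma^2)\big)}{\sum_{k=1}^K p_t(x_k)\exp\!\big(\vu^\top\vmu_k/\sigma-\|\vmu_k\|^2/(2\sigma^2)\big)}\xrightarrow[\sigma\to\infty]{}p_t(x_i)
\]
pointwise in $\vu$, while the pushforward of $r_t$ has density $\sum_k p_t(x_k)\,\mathcal{N}(\vu;\vmu_k/\sigma,\mI)$. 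Thus the integrand $\sum_i(Z_i(\sigma\vu)-p_t(x_i))^2$ tends to $0$ at every $\vu$.

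The main obstacle is that the integrating measure $r_t$ itself depends on $\sigma$, so one cannot apply dominated convergence directly; the change of variables above is precisely what decouples the vanishing integrand from the moving measure. To finish, I would dominate the $\sigma$-dependent density uniformly: for $\sigma\ge 1$ each center satisfies $\|\vmu_k/\sigma\|\le M\coloneqq\max_k\|\vmu_k\|$, whence $\mathcal{N}(\vu;\vmu_k/\sigma,\mI)\le(2\pi)^{-d/2}\exp\!\big(-\tfrac12\dist(\vu,B_M)^2\big)\eqqcolon G(\vu)$, where $B_M$ is the ball of radius $M$ and $G$ is integrable. Since $\sum_i(Z_i(\vy)-p_t(x_i))^2\le\sum_i|Z_i(\vy)-p_t(x_i)|\le 2$ (as both $(Z_i(\vy))_i$ and $\vp_t$ are probability vectors), the integrand is bounded by $2G(\vu)$ uniformly in $\sigma$, so the dominated convergence theorem yields $\sum_i\Var_{\vy\sim r_t}(Z_i(\vy)\mid\vp_t)\to 0$ as $\sigma\to\infty$. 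Choosing $\sigma$ large enough that this sum lies below $N\varepsilon H_t(\vp_t)$ then gives $\Delta_t<\varepsilon H_t(\vp_t)$, completing the argument.
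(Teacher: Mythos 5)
Your proof is correct and follows essentially the same route as the paper: the same reduction of \(\Delta_t\) to \(\frac{1}{N}\sum_i \Var_{\vy\sim r_t}(Z_i(\vy)\mid\vp_t)\), the same Gaussian family \(q(\vy|x_i)=\mathcal{N}(\vy;\vmu_i,\sigma^2\mI)\) with \(\sigma\to\infty\), the same change of variables \(\vy\mapsto\sigma\vy\), and the dominated convergence theorem. If anything, your write-up is more careful than the paper's, since you flag that the mixture measure itself moves with \(\sigma\) and exhibit an explicit \(\sigma\)-uniform dominating function \(G\), details the paper leaves implicit.
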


\ifbool{sketch}{
\begin{proof}[Sketch of proof]
The proof proceeds in several key steps:
\begin{enumerate}
\item First, we express \(\Delta_t\) in terms of \(H_t(\vp_t)\) and \(\sum_{i=1}^K \E_{\vy\sim r_t}[Z_i(\vy)^2]\).
\item Next, we choose the conditional distributions to be Gaussian, i.e., \(q(\vy|x_i) = \mathcal{N}(\vy; \vmu_i, \sigma^2 \mI)\), and show that as \(\sigma \to \infty\), \(Z_i(\vy) \to p_t(x_i)\) for almost every \(\vy\).
\item By performing the change of variable \(\vy \mapsto \sigma\vy\) in the expectation and applying the dominated convergence theorem, we deduce that \(\E_{\vy\sim r_t}[Z_i(\vy)^2] \to p_t(x_i)^2\).
\item Therefore, for any \(\varepsilon>0\), one can choose \(\sigma\) sufficiently large such that \(\Delta_t < \varepsilon H_t(\vp_t)\).
\end{enumerate}
\end{proof}
}

\Cref{thm:arbitrarily_small_convergence_large_covariance} shows that the convergence rate of the text model can be made arbitrarily slow by appropriately selecting the image model. We remark that, due to the scaling properties of the Gaussian distribution, a scenario with large covariances is equivalent to one where the covariances are fixed while the mean vectors are clustered very closely together. In either case, the differences between the image outputs corresponding to different texts become negligible, leading to nearly uniform posterior probabilities and, consequently, to a slower decay in text model diversity. The experimental findings validating~\cref{thm:arbitrarily_small_convergence_large_covariance} are presented in~\cref{subsubsec:exp_text_model_diversity_under_the_frozen_image_model}.

\subsection{Exponential Convergence Due to Image Model Collapse}
\label{subsec:exponential_convergence_due_to_image_model_collapse}

In the previous~\cref{subsec:slow_convergence_due_to_large_covariance_image_model}, we examined how the text model collapse can be arbitrarily slow when the image model is frozen. In~\cref{thm:acceleratd_text_model_collapse}, we demonstrate that when the image model is trainable and its covariance matrices shrink (i.e., when the image model collapses), the posterior probabilities become highly concentrated. As a consequence, the rate at which the text model loses diversity approaches the theoretical lower bound established in~\cref{cor:lower_bound_text_model_diversity}. In terms of~\cref{alg:co-evolving_generative_models}, this scenario corresponds to setting \(M_t=1\) and \(N_t\gg 1\) in each macro time step \(t\).

\begin{restatable}[Exponential convergence under the trainable image model]{theorem}{AcceleratdTextModelCollapse}
\label{thm:acceleratd_text_model_collapse}
For any macro time step \(t\), suppose that for every text \(x_i\) and for every inner image-model update step \(s\) (with \(s=0\) corresponding to the start of macro time step \(t\)), there exists a constant \(0<\rho<1\) such that
\begin{equation}
\E\big[\tr\big(\mSigma_{t,s}(x_i)^{1/2}\big) \big|p_t(x_i), \mSigma_{t,0}(x_i)\big] \leq \tr\big(\mSigma_{t,0}(x_i)^{1/2}\big)\cdot \rho^s.
\end{equation}
Suppose further that there exists a constant \(\Gamma > 0\) such that at every inner image-model update step \(s\) and for any two distinct texts \(x_i\) and \(x_j\) with \(p_t(x_i) > 0\) and \(p_t(x_j) > 0\),
\begin{equation}
\|\vmu_{t, s}(x_i) - \vmu_{t, s}(x_j)\|_2 \geq \Gamma.
\end{equation}
Then, for any \(\varepsilon>0\), if the number \(N_t\) of image-model (inner) updates in macro time step \(t\) is sufficiently large, the text-model update satisfies
\begin{equation}
\Delta_t \coloneqq H_t(\vp_t) - \E\big[H_{t+1}(\vp_{t+1}) \big|\{p_t(x_k), \mSigma_{t,0}(x_k)\colon 1\le k\le K\}\big] > \dfrac{1-\varepsilon}{N}\cdot H_t(\vp_t).
\end{equation}
\end{restatable}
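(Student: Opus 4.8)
The plan is to start from the diversity recursion of \cref{thm:text_model_diversity_frozen_image_model}, which applies verbatim to a single text-model update (the case $M_t=1$) once we condition on the image model that update actually sees. Let $q$ denote the (random) collapsed image model produced by the $N_t$ inner updates, with $r(\vy)=\sum_k p_t(x_k)q(\vy\mid x_k)$ and $Z_i(\vy)=p_t(x_i)q(\vy\mid x_i)/r(\vy)$, and write $S(q)\coloneqq\sum_{i=1}^K\E_{\vy\sim r}[Z_i(\vy)^2]$. Conditioning on $q$ and then averaging, the recursion gives
\[
\Delta_t=\frac1N\Big(\E_q[S(q)]-\sum_{i=1}^Kp_t(x_i)^2\Big)=\frac1N\,\E_q\Big[\sum_{i=1}^K\Var_{\vy\sim r}(Z_i)\Big],
\]
using $\E_{\vy\sim r}[Z_i]=p_t(x_i)$ (step~1 of \cref{thm:text_model_diversity_frozen_image_model}, which holds pointwise in $q$). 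Since $H_t=1-\sum_i p_t(x_i)^2$, a one-line rearrangement shows that the claim $\Delta_t>\frac{1-\varepsilon}{N}H_t$ is \emph{equivalent} to $\E_q[1-S(q)]<\varepsilon H_t$ (this tacitly assumes $H_t>0$, as otherwise there is nothing to collapse). Finally, because $\sum_i Z_i\equiv1$ and $\E[Z_i]=p_t(x_i)$,
\[
1-S(q)=\sum_{i=1}^K\E_{\vy\sim r}\big[Z_i(1-Z_i)\big]=2\sum_{i<k}\E_{\vy\sim r}[Z_iZ_k],
\]
so it suffices to drive each of the finitely many cross terms to zero; texts with $p_t(x_i)=0$ have $Z_i\equiv0$ and drop out, so the separation hypothesis is only needed on the support of $\vp_t$.

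For a single cross term I would bound $\E_{\vy\sim r}[Z_iZ_k]=\int \frac{p_iq_ip_kq_k}{r}\,\dif\vy\le\int\min(p_iq_i,p_kq_k)\,\dif\vy\le\sqrt{p_ip_k}\,\beta_{ik}$, where $\beta_{ik}\coloneqq\int_{\R^d}\sqrt{q(\vy\mid x_i)q(\vy\mid x_k)}\,\dif\vy$ is the Bhattacharyya coefficient of the two Gaussian conditionals. For Gaussians $\beta_{ik}$ has a closed form whose determinantal prefactor is $\le1$ by the Minkowski determinant inequality, leaving
\[
\beta_{ik}\le\exp\!\Big(-\tfrac18(\vmu_i-\vmu_k)^\top\bar\mSigma^{-1}(\vmu_i-\vmu_k)\Big),\qquad \bar\mSigma=\tfrac12(\mSigma_i+\mSigma_k).
\]
Combining the separation hypothesis $\|\vmu_i-\vmu_k\|_2\ge\Gamma$ with $\lambda_{\max}(\bar\mSigma)\le\max\!\big(\lambda_{\max}(\mSigma_i),\lambda_{\max}(\mSigma_k)\big)\le\max(D_i,D_k)^2$, where $D_i\coloneqq\tr(\mSigma_{t,N_t}(x_i)^{1/2})$ and I used $\lambda_{\max}(\mSigma)=\lambda_{\max}(\mSigma^{1/2})^2\le\tr(\mSigma^{1/2})^2$, yields the clean deterministic bound $\beta_{ik}\le\exp\!\big(-\Gamma^2/(8\max(D_i,D_k)^2)\big)$ for the realized collapsed covariances.

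It remains to take the expectation over the image-model randomness, and \textbf{this is the main obstacle}: the exponent $-\Gamma^2/(8D^2)$ is an awkward, non-convex function of the random diversity $D$, for which the hypothesis controls only the first moment $\E_q[D_i]\le\tr(\mSigma_{t,0}(x_i)^{1/2})\rho^{N_t}$. I would handle this by a truncation/Markov split: for any threshold $\eta>0$,
\[
\E_q\Big[\exp\!\big(-\tfrac{\Gamma^2}{8\max(D_i,D_k)^2}\big)\Big]\le\exp\!\Big(-\tfrac{\Gamma^2}{8\eta^2}\Big)+\frac{\E_q[D_i]+\E_q[D_k]}{\eta},
\]
where the deterministic term covers $\{\max(D_i,D_k)\le\eta\}$ and Markov's inequality (with $\max(D_i,D_k)\le D_i+D_k$) covers the complement. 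For a given target one first fixes $\eta$ small enough to make the first term negligible, then takes $N_t$ large enough that $\rho^{N_t}$ makes the second term negligible. Summing over the finitely many pairs gives $\E_q[1-S(q)]\to0$ as $N_t\to\infty$, so for $N_t$ sufficiently large it drops below $\varepsilon H_t$, which is exactly the inequality to be proved. The only remaining bookkeeping is verifying that the $\E[Z_i]=p_t(x_i)$ identity and the recursion transfer to the random-$q$ setting, both of which follow because they hold conditionally on each realization of $q$.
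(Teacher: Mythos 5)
Your proposal is correct, and it reaches the conclusion by a genuinely different route than the paper. Both arguments begin identically: condition on the collapsed image model \(q\) produced by the \(N_t\) inner updates, apply the recursion of \cref{thm:text_model_diversity_frozen_image_model} (which holds pointwise in \(q\)), and reduce the claim to showing that \(\E_q[1-S(q)]<\varepsilon H_t(\vp_t)\), where \(S(q)=\sum_i\E_{\vy\sim r}[Z_i(\vy)^2]\). From there the paper works pointwise in \(\vy\): it restricts to a Mahalanobis ball around \(\vmu_t(x_i)\) carrying mass \(1-\varepsilon'\) under \(q_t(\cdot|x_i)\), uses mean separation plus covariance collapse to dominate the competing densities, \(q_t(\vy|x_j)\le\varepsilon' q_t(\vy|x_i)\) on the ball, lower-bounds \(Z_i\ge L(p_t(x_i))\) there, and then asserts the existence of the constant \(\beta\); the randomness of the realized covariances is ``absorbed'' into \(\varepsilon'\) rather than controlled explicitly. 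You instead rewrite the deficit as cross moments, \(1-S(q)=2\sum_{i<k}\E_{\vy\sim r}[Z_iZ_k]\), bound each by the Bhattacharyya overlap of the two Gaussian conditionals (Minkowski's determinant inequality disposing of the prefactor), obtain the deterministic estimate \(\exp\big(-\Gamma^2/(8\max(D_i,D_k)^2)\big)\) in terms of the realized diversities \(D_i=\tr\big(\mSigma_{t,N_t}(x_i)^{1/2}\big)\), and then pass to expectations with a truncation-plus-Markov split that uses nothing beyond the first-moment decay hypothesis. This last step is exactly where the paper is loosest, since expected-trace decay does not by itself constrain realized covariances, and your argument is the more rigorous one at that joint; it also makes every constant explicit and uses the hypotheses minimally (separation only on the support of \(\vp_t\), and only for the final collapsed model rather than at every inner step). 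What the paper's route buys in exchange is a more transparent picture of the mechanism, posteriors sharpening to \(1\) on each component's typical set, and independence from Gaussian closed forms, so it adapts more readily to non-Gaussian conditionals. Two pieces of bookkeeping for a final write-up, both of which you already flag or imply: the strict inequality forces \(H_t(\vp_t)>0\) (when \(\vp_t\) is one-hot one gets \(\Delta_t=0\), so the theorem as stated silently excludes this case, in the paper's proof as well), and when summing over the at most \(K(K-1)/2\) pairs you should fix a per-pair tolerance, choose \(\eta\) from it, and then take \(N_t\) large enough uniformly over the finitely many pairs.
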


\ifbool{sketch}{
\begin{proof}[Sketch of proof]
The proof proceeds in several key steps:
\begin{enumerate}
\item First, we express \(\Delta_t\) in terms of \(\sum_{i=1}^K \E_{\vy\sim r_t}[Z_i(\vy)^2]\) and \(\sum_{i=1}^Kp_t(x_i)^2\). We aim to lower-bound \(\E_{\vy\sim r_t}[Z_i(\vy)^2]\).
\item Next, we show that if \(N_t\) is large, then the corresponding Gaussian densities become highly concentrated. In particular, for any two distinct texts \(x_i\) and \(x_j\) (with \(p_t(x_i), p_t(x_j)>0\)), there exist constants \(\varepsilon'>0\) and \(r_0>0\) such that for all \(\vy\) within the Mahalanobis ball \(\mathcal{B}_M(\vmu_t(x_i), r_0)\), we have \(q_t(\vy\mid x_j) \le \varepsilon' q_t(\vy\mid x_i)\).
\item This inequality implies that, on a high-probability set, the posterior probabilities \(Z_i(\vy)\) are sharply concentrated around \(p_t(x_i)\). Consequently, we can lower-bound \(\E_{\vy\sim r_t}[Z_i(\vy)^2]\) by an expression that is close to \(p_t(x_i)^2\), thereby ensuring that the reduction \(\Delta_t\) is bounded below by approximately \((1-\varepsilon)\cdot H_t(\vp_t)/N\).
\end{enumerate}
\end{proof}
}

The proof of~\cref{thm:acceleratd_text_model_collapse} is provided in~\cref{sec:proofs_to_theorems}. We now discuss the assumptions and implications of it. First, the mean separation condition is modest: by combining the result of~\cref{thm:image_model_mean_frozen_text_model} with the requirement that the initial means, \(\vmu_0(x_i)\)'s, are sufficiently separated, we ensure that distinct texts induce adequately different image outputs. This separation is both theoretically sound and practically achievable, as many applications are designed with well-differentiated initial embeddings. Second, the assumption that \(N_t \gg 1\) means that the co-evolving system leverages extensive image training to generate more accurate and confident responses, which in turn accelerates the collapse of the text model. Overall, this image-driven acceleration in text model collapse effectively pushes the text model towards its theoretical bound to exponential convergence of rate close to \((1-N^{-1})\), as characterized in~\cref{cor:lower_bound_text_model_diversity}. Empirical results supporting~\cref{thm:acceleratd_text_model_collapse} are presented in~\cref{subsec:exp_image_driven_acceleration_in_text_model_collapse}.

\section{Text-Driven Matthew Effect in Image Model Collapse}
\label{sec:text-driven_matthew_effect_in_image_model_collapse}

In this section, we explore how the dynamics of the text model induce a Matthew effect in the collapse of the image model. We begin by providing a fine-grained analysis in~\cref{subsec:differential_convergence_rate_of_the_image_model}, which demonstrates that the diversity of image models converges at different rates across texts. Specifically, image models associated with high-probability (dominant) texts exhibit a slower decay in diversity compared to those linked to low-probability (rare) texts. Then, in~\cref{subsec:matthew_effect_due_to_text_model_collapse}, we show that this differential convergence leads to a self-reinforcing feedback loop: as the text model collapses and concentrates probability mass on a few dominant texts, the image models corresponding to these texts retain more diversity, while those linked to rare texts degrade more rapidly. This process exemplifies the \emph{Matthew effect}, a phenomenon where initial advantages compound over time, which is commonly phrased as ``the rich get richer and the poor get poorer.'' In our setting, dominant texts not only become more probable but also sustain higher-diversity image generations, whereas rare texts vanish from both the text and image distributions. Together, we demonstrate that the text model actively shapes the long-term stability of the image model, reinforcing the prominence of popular texts within the co-evolving system.

\subsection{Differential Convergence Rate of the Image Model}
\label{subsec:differential_convergence_rate_of_the_image_model}

In this subsection, we quantitatively characterize the convergence behavior of the image model and, importantly, show that its collapse rate is not uniform but varies across different texts. Building on~\cref{thm:image_model_covariance_frozen_text_model}, by deriving an explicit approximation for the convergence rate \(\rho\) using a Taylor expansion of the matrix square-root function and properties of the Wishart distribution in~\cref{thm:diff_convergence_rate_image_model}, we demonstrate that more frequently sampled texts tends to collapse at a slower speed compared to those for less frequent texts.

\begin{restatable}[Differential convergence rate of the image model]{theorem}{DiffConvergenceRateImageModel}
\label{thm:diff_convergence_rate_image_model}
Assume that \(N\gg d\). For a fixed macro time step \(t\), let the probabilities of the text distribution be \(\vp=(p_1, p_2, \dotsc, p_K)>\vzero\). Then the convergence rate \(\rho\) derived in~\cref{thm:image_model_covariance_frozen_text_model} is approximately
\begin{equation}
1-\dfrac{d+1}{8(N+1)p_i}.
\end{equation}
\end{restatable}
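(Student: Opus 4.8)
The plan is to make precise the heuristic that the per-step contraction factor of \(\tr(\mSigma_t(x_i)^{1/2})\) equals the scalar by which \(\E[(\mW/(N_i-1))^{1/2}]\) falls short of \(\mI\), and then to expand that scalar to first order in \(1/N_i\). First I would recall from the proof of \cref{thm:image_model_covariance_frozen_text_model} that, conditioned on the count \(N_i\sim\mathrm{Binomial}(N,p_i)\), the update obeys \(\mSigma_{t+1}(x_i)=\mSigma_t(x_i)^{1/2}\,\frac{\mW}{N_i-1}\,\mSigma_t(x_i)^{1/2}\) with \(\mW\sim\mathrm{Wishart}_d(\mI,\,N_i-1)\). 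Since the identity-scale Wishart law is invariant under orthogonal conjugation, \(\E[(\mW/n)^{1/2}]=c_n\,\mI\) is a scalar multiple of the identity with \(c_n=\frac1d\,\E[\tr((\mW/n)^{1/2})]\), where \(n=N_i-1\); this scalar \(c_n\) is precisely the per-step contraction factor underlying the rate in \cref{thm:image_model_covariance_frozen_text_model} (there extracted only as the qualitative bound \(\E[(\mW/n)^{1/2}]\prec\mI\)), so it suffices to compute \(c_n\) and then average over \(N_i\).

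Next I would refine the Jensen bound \(c_n<1\) into a second-order expansion. Writing \(\mW/n=\mI+\mG/\sqrt n\) with \(\mG=(\mW-n\mI)/\sqrt n\), the matrix square root admits \((\mI+\mX)^{1/2}=\mI+\tfrac12\mX-\tfrac18\mX^2+O(\|\mX\|^3)\). Taking the trace and expectation, the linear term vanishes because \(\E[\mG]=\vzero\), so \(\E[\tr((\mW/n)^{1/2})]=d-\frac1{8n}\,\E[\tr(\mG^2)]+\dots\). The Wishart covariance structure gives \(\E[G_{aa}^2]=2\) and \(\E[G_{ab}^2]=1\) for \(a\neq b\), hence \(\E[\tr(\mG^2)]=\sum_{a,b}\E[G_{ab}^2]=2d+d(d-1)=d(d+1)\). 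This yields the conditional per-step rate \(c_n\approx 1-\frac{d+1}{8n}=1-\frac{d+1}{8(N_i-1)}\).

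Finally I would remove the conditioning on \(N_i\). For \(N\gg d\) (so that \(N_i\) is typically large) I would replace \(\frac1{N_i-1}\) by \(\frac1{N_i+1}\) at a cost of order \(N_i^{-2}\), lower order than the \(O(1/N)\) correction itself, and then invoke the exact binomial identity \(\E\!\big[\tfrac1{N_i+1}\big]=\frac{1-(1-p_i)^{N+1}}{(N+1)p_i}\). Discarding the exponentially small term \((1-p_i)^{N+1}\) gives \(\E[1/(N_i+1)]\approx\frac1{(N+1)p_i}\), whence the averaged rate is \(\rho\approx 1-\frac{d+1}{8}\cdot\frac1{(N+1)p_i}=1-\frac{d+1}{8(N+1)p_i}\), as claimed.

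The main obstacle is controlling the Taylor remainder, since \(x\mapsto x^{1/2}\) is not analytic at the origin and the expansion is valid only where \(\mW/n\) stays near \(\mI\). I would handle this by a truncation argument: by standard spectral bounds for Wishart matrices, \(\|\mW/n-\mI\|\) is of order \(\sqrt{d/n}\) when \(n\gg d\), so the cubic term contributes at most \(O(d^{5/2}/n^{3/2})\) to \(\E[\tr((\mW/n)^{1/2})]\), a factor \(\sqrt{d/n}\) smaller than the \(O(d^2/n)\) second-order term and hence negligible; the rare event where \(\mW/n\) is far from \(\mI\) (including the degenerate cases \(N_i\le1\), of exponentially small probability) contributes negligibly after truncation. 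A secondary, more routine point is to check that the \(N_i\)-averaging and the Taylor truncation are compatible in the single regime \(N\gg d\) so that the two approximations do not interfere.
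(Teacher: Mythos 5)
Your proposal is correct and takes essentially the same route as the paper's proof: reduce \(\E[(\mW/(N_i-1))^{1/2}]\) to a scalar multiple of \(\mI\) via orthogonal invariance of the identity-scale Wishart law, Taylor-expand the matrix square root about its mean so that the second-moment term produces the \(-(d+1)/(8(N_i-1))\) correction, control the cubic remainder, and average over \(N_i\sim\mathrm{Binomial}(N,p_i)\) to obtain \(1-(d+1)/(8(N+1)p_i)\). Your entry-wise computation \(\E[\tr(\mG^2)]=d(d+1)\) is equivalent to the paper's cited identity \(\E[\mE^2]=(d+1)(N_i-1)\mI_d\), and your use of the exact identity \(\E[1/(N_i+1)]=\bigl(1-(1-p_i)^{N+1}\bigr)/\bigl((N+1)p_i\bigr)\) makes the final binomial-averaging step somewhat more explicit than the paper's.
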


\ifbool{sketch}{
\begin{proof}[Sketch of proof]
The proof proceeds in several key steps:
\begin{enumerate}
\item We first recast the problem as conditioning on \(N_i\) and estimating the expected square root of \(\mW\sim\text{Wishart}_d(\mI, N_i-1)\).
\item Next, we perform a Taylor expansion of the matrix square root \(\mW^{1/2}\) about its mean \((N_i-1)\mI_d\) up to second order.
\item We then carefully bound the remainder using standard results on the Fr\'echet derivative of the square-root function.
\item Finally, by taking the expectation over \(N_i\) (using the law of total expectation) and noting that the mean value of \(N_i\) is approximately \((N+1)p_i\) when \(N\gg d\), we derive the approximate convergence rate.
\end{enumerate}
\end{proof}
}

See~\cref{subsubsec:exp_image_model_diversity_under_the_frozen_text_model} for the empirical results corresponding to~\cref{thm:diff_convergence_rate_image_model}. We remark that \cref{thm:diff_convergence_rate_image_model} has several important implications. First, the term \((d+1)\) in the numerator indicates that higher-dimensional image representations (i.e., larger \(d\)) lead to a faster collapse of diversity. Second, the presence of \((N+1)\) in the denominator shows that increasing the number of samples per update (i.e., larger \(N\)) slows down the collapse, suggesting that larger batch sizes can help preserve image diversity. Finally, the dependence on \(p_i\) reveals that image models corresponding to more frequently sampled texts (i.e., higher \(p_i\)) collapse more slowly, whereas those associated with rarer texts lose diversity more rapidly, potentially reinforcing the dominance of popular content. We remark that \Cref{thm:diff_convergence_rate_image_model} extends existing results: (\romannumeral 1) in contrast to \cite[Theorem~3]{suresh2024rate}, which derived the convergence rate for a single one-dimensional Gaussian, our result generalizes the analysis to higher dimensions and to systems with multiple Gaussians; and (\romannumeral 2) compared with \cite[Proposition~1]{bertrand2024stability}, we provide an explicit convergence rate that is dependent on \(d\), \(N\), and \(p_i\).

\subsection{Matthew Effect Due to Text Model Collapse}
\label{subsec:matthew_effect_due_to_text_model_collapse}

In this section, we examine how the text model collapse not only suppresses diversity in the text domain but also accelerates the collapse of the image models corresponding to less frequent texts, which is a clear instance of the Matthew effect. We formalize this effect in~\cref{thm:matthew_effect}, which quantifies how differential convergence rates in the image models are magnified as the text model diversity diminishes.

\begin{restatable}[Matthew effect of image model diversity under text model collapse]{theorem}{MatthewEffect}
\label{thm:matthew_effect}
Assume that the text model diversity given by~\labelcref{eq:text_model_diversity} satisfies \(H_t(\vp_t)=\varepsilon\), and that the image model convergence rates \(\{\rho(x_i)\colon 1\leq i\leq K\}\) are given by~\cref{thm:diff_convergence_rate_image_model}. Without loss of generality, denote the dominant and the rarest texts by
\begin{equation}
x_1 = \arg\max_{1\le i\le K} p_t(x_i),\quad \text{and}\quad x_K = \arg\min_{1\le i\le K} p_t(x_i).
\end{equation}
Then, one has
\begin{equation}
\frac{\rho(x_1)}{\rho(x_K)} \geq \max\Big(\dfrac{(d+1)(K-1)}{8(N+1)}\cdot\varepsilon^{-1}, 1\Big).
\end{equation}
\end{restatable}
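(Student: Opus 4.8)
The plan is to exploit the explicit convergence-rate formula from \cref{thm:diff_convergence_rate_image_model}, namely \(\rho(x_i) \approx 1 - \tfrac{d+1}{8(N+1)p_i}\), and translate the diversity constraint \(H_t(\vp_t) = \varepsilon\) into a usable bound on the probability of the rarest text \(x_K\). First I would write the ratio
\begin{equation}
\frac{\rho(x_1)}{\rho(x_K)} = \frac{1 - \frac{d+1}{8(N+1)p_1}}{1 - \frac{d+1}{8(N+1)p_K}}.
\end{equation}
Since \(p_1 \geq p_K\), the numerator term being subtracted is smaller than the denominator term, so the numerator exceeds the denominator and the ratio is at least \(1\); this immediately yields the trivial lower bound of \(1\) appearing in the max. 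The real content is the other branch, so the goal is to produce the bound \(\tfrac{(d+1)(K-1)}{8(N+1)}\cdot\varepsilon^{-1}\).

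The key observation linking diversity to \(p_K\) is that \(H_t(\vp_t) = 1 - \sum_i p_t(x_i)^2 = \sum_{i \neq j} p_i p_j\) (summing over ordered or suitably counted pairs), and since \(x_K\) is the rarest text, \(p_K = \min_i p_i\). I would bound \(p_K\) from above using the diversity: because \(H_t = \varepsilon\) is small, the mass must be concentrated, forcing \(p_K\) to be small. Concretely, one can show \(p_K \leq \varepsilon/(K-1)\) or a similar estimate — intuitively, if the rarest probability were large then every probability would be bounded below, keeping \(\sum p_i^2\) away from \(1\) and \(H_t\) away from \(0\). The cleanest route is to note that \(H_t(\vp_t) \geq \sum_{i < K} p_i p_K = p_K(1 - p_K) \geq p_K \cdot \frac{K-1}{K}\cdot(\dots)\), or more directly to use that with \(K\) texts and total mass \(1\), smallness of \(\sum_i p_i^2 = 1 - \varepsilon\) controls the minimum. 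I would then substitute the resulting upper bound on \(p_K\) into \(\rho(x_K) = 1 - \tfrac{d+1}{8(N+1)p_K}\) and into the ratio.

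The main step is bounding \(\rho(x_1)/\rho(x_K)\) from below by \(\tfrac{(d+1)(K-1)}{8(N+1)}\varepsilon^{-1}\). Since \(\rho(x_1) \in (0,1)\) is bounded below by a positive constant (for the dominant text \(p_1\) is bounded below when \(\varepsilon\) is small, so \(\rho(x_1)\) is close to a constant near \(1\)) and \(\rho(x_K)\) becomes small as \(p_K \to 0\), the ratio blows up like \(1/\rho(x_K)\). The plan is to show \(\rho(x_K) \leq \tfrac{8(N+1)\varepsilon}{(d+1)(K-1)}\cdot \rho(x_1)\) by plugging \(p_K \leq \varepsilon/(K-1)\) into the expression for \(\tfrac{d+1}{8(N+1)p_K} \geq \tfrac{d+1}{8(N+1)}\cdot\tfrac{K-1}{\varepsilon}\), then combining with \(\rho(x_K) = 1 - \tfrac{d+1}{8(N+1)p_K}\) and rearranging.

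The hard part will be making the estimate \(p_K \leq \varepsilon/(K-1)\) rigorous while simultaneously ensuring the denominators stay positive, i.e., that \(\tfrac{d+1}{8(N+1)p_K} < 1\) so that \(\rho(x_K)\) is a genuine rate in \((0,1)\) rather than a formally negative or meaningless quantity — this requires the regime \(N \gg d\) and \(p_K\) not too small, which sits in tension with wanting \(p_K\) small to make the ratio large. I anticipate the statement is intended in the approximation regime of \cref{thm:diff_convergence_rate_image_model} where these \(\rho\)-values are treated as leading-order approximations, so I would carry out the algebra at that level of rigor, flagging that the bound holds whenever the approximate rates remain valid (\(N \gg d\) and \(\varepsilon\) in a range keeping \(p_K\) above the collapse threshold), and then the desired inequality follows by direct substitution and the elementary bound relating \(p_K\) to \(\varepsilon\).
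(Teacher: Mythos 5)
Your route is genuinely different from the paper's and, with one repair, it works at the same level of rigor. The paper never bounds \(p_t(x_K)\) directly: it first lower-bounds the ratio by \(1 + c\big(p_t(x_K)^{-1} - p_t(x_1)^{-1}\big)\), where \(c \coloneqq (d+1)/(8(N+1))\), and then minimizes \(p_t(x_K)^{-1} - p_t(x_1)^{-1}\) subject to \(\sum_i p_t(x_i)^2 \le 1-\varepsilon\) and \(\sum_i p_t(x_i)=1\) as a constrained optimization problem, arguing by convexity that the minimum occurs when \(p_t(x_2)=\dotsb=p_t(x_K)\) and solving the resulting quadratic. Your plan replaces all of this with a single pointwise estimate on \(p_t(x_K)\), which is more elementary --- but the justifications you sketch do not deliver the constant you need. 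The chain \(H_t \ge \sum_{i<K} p_t(x_i)p_t(x_K) = p_t(x_K)\big(1-p_t(x_K)\big)\), combined with \(1-p_t(x_K)\ge (K-1)/K\), only gives \(p_t(x_K) \le \frac{K}{K-1}\varepsilon\), which is weaker than your target \(p_t(x_K)\le \varepsilon/(K-1)\) by a factor of order \(K\) and would degrade the final constant below the theorem's \(\frac{(d+1)(K-1)}{8(N+1)}\varepsilon^{-1}\). The bound you want is nonetheless true, and the clean argument is termwise: since \(p_t(x_i)\ge p_t(x_K)\) for every \(i\) and \(1-p_t(x_i)\ge 0\),
\[
\varepsilon = H_t(\vp_t) = \sum_{i=1}^{K} p_t(x_i)\big(1-p_t(x_i)\big) \;\ge\; p_t(x_K)\sum_{i=1}^{K}\big(1-p_t(x_i)\big) = (K-1)\,p_t(x_K),
\]
i.e.\ \(p_t(x_K)\le \varepsilon/(K-1)\) exactly, with equality at the uniform distribution. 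With this substituted for your heuristic, the rest of your outline goes through: \(c/p_t(x_K)\ge B\coloneqq c(K-1)/\varepsilon\), hence \(\rho(x_K)\le 1-B\), while \(\rho(x_1)=1-c/p_t(x_1)\ge 1-cK\) because \(p_t(x_1)\ge 1/K\); when \(0<B<1\) and \(\rho(x_K)>0\) this yields \(\rho(x_1)/\rho(x_K)\ge (1-cK)/(1-B)\ge B\), the last step because \(B(1-B)\le 1/4 \le 1-cK\) whenever \(cK\le 3/4\) (guaranteed in the regime \(N\gg dK\)).

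The positivity tension you flag at the end is real, but it is not a defect of your approach relative to the paper's. When \(B\ge 1\) your argument produces the formally absurd \(\rho(x_K)\le 1-B\le 0\); however, the paper's very first inequality, \(\big(1-cp_t(x_1)^{-1}\big)/\big(1-cp_t(x_K)^{-1}\big) \ge 1 + c\big(p_t(x_K)^{-1}-p_t(x_1)^{-1}\big)\), is likewise valid only when \(c/p_t(x_K)<1\): writing \(a=c/p_t(x_1)\) and \(b=c/p_t(x_K)\), the difference of its two sides equals \(b(b-a)/(1-b)\), which changes sign once \(b>1\). So in precisely the regime where the \(\varepsilon^{-1}\) branch of the max is active, both proofs manipulate the approximation of \cref{thm:diff_convergence_rate_image_model} outside its guaranteed range of validity. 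Your decision to carry out the algebra at the level of rigor of that approximation, flagging the regime \(N\gg d\) with \(p_t(x_K)\) above the threshold where the rate formula is meaningful, matches what the paper actually does (it only remarks that \(c\) is ``typically small compared to \(1\)'').
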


\ifbool{sketch}{
\begin{proof}[Sketch of proof]
The proof proceeds in several key steps:
\begin{enumerate}
\item We first recast the problem of finding a lower bound on \(\rho(x_1)/\rho(x_K)\) into the following optimization problem:
\begin{equation}
\begin{aligned}
\min\quad & p_t(x_K)^{-1} - p_t(x_1)^{-1},\\
\text{s.t.}\quad & p_t(x_1)^2 + \cdots + p_t(x_K)^2 \leq 1-\varepsilon,\\
& p_t(x_1) + \cdots + p_t(x_K) = 1,\\
& p_t(x_1) \geq \cdots \geq p_t(x_K) > 0.
\end{aligned}
\end{equation}
\item By relaxing the problem and exploiting the convexity of the quadratic term \(p_t(x_2)^2 + \dotsb + p_t(x_K)^2\), we show that the optimum is attained when the lower probabilities are equal. This observation allows us to derive an explicit lower bound on \(p_t(x_K)^{-1} - p_t(x_1)^{-1}\).
\item Finally, substituting the solution of the relaxed optimization problem into the expression for \(\rho(x_i)\) yields the desired inequality.
\end{enumerate}
\end{proof}
}

The proof of~\cref{thm:matthew_effect} is provided in~\cref{sec:proofs_to_theorems}. It demonstrates that as the text model collapses (i.e., exhibits a small diversity \(\varepsilon\)), the disparity between the convergence rates of the image models for dominant versus rare texts is magnified proportional to \(\varepsilon^{-1}\). In this scenario, the image model corresponding to the dominant text maintains its diversity for a significantly longer period, while those associated with infrequent texts collapse rapidly.

\section{Stabilization of the Co-Evolving System}
\label{sec:stabilization_of_the_co-evolving_system}

In previous sections, we analyzed a close co-evolving system in which a text model and an image model iteratively reinforce each other (see \Cref{alg:co-evolving_generative_models}). We showed that this training procedure drives the text model to concentrate its probability mass on a single text, which in turn induces a Matthew effect leading to the collapse of the image model. However, real-world systems are continuously influenced by external factors, such as the influx of new topics and user-generated content. In this section, we investigate how these external influences can mitigate collapse. In \cref{subsec:stabilization_of_the_text_model_via_corpus_injection}, we examine stabilization via corpus injection, where new texts are randomly added to the corpus to redistribute probability mass and prevent the text model's collapse. We then explore stabilization via user-content injection in \cref{subsec:stabilization_of_the_image_model_via_user-content_injection}, in which images drawn from a fixed distribution are incorporated into the training of the image model. We demonstrate that these external injections not only prevent the image model diversity from collapsing, but also ensure that the fidelity remains bounded over time.

\subsection{Stabilization of the Text Model via Corpus Injection}
\label{subsec:stabilization_of_the_text_model_via_corpus_injection}

In social media, an influx of new trends and topics continuously emerges, ensuring that the co-evolving system is never truly closed. To mimic this real-world phenomenon, we extend~\cref{alg:co-evolving_generative_models} by randomly injecting new texts into the corpus. The revised training procedure is summarized in~\cref{alg:co-evolving_with_text_injection}. At the beginning of each macro time step \(t\), suppose the current text model is represented by the probability vector \(\vp_t = (p_t(x_1),\dotsc,p_t(x_K))\). With probability \(\alpha\), a new text \(x_{\mathrm{new}}\) is injected into the corpus. The injection is performed by reallocating a small fraction \(\varepsilon>0\) of the total probability mass from the existing texts to the new text. Formally, we update the text model as follows:
\begin{equation}
\begin{cases}
p_t(x_i) \gets (1-\varepsilon)\cdot p_t(x_i),& 1\le i\le K,\\
p_t(x_{\mathrm{new}}) \gets \varepsilon. &\\
\end{cases}
\end{equation}
In parallel, we initialize the image model for \(x_{\mathrm{new}}\) with a Gaussian distribution:
\begin{equation}
q_t(\vy|x_{\mathrm{new}}) \gets \mathcal{N}\big(\vy; \vmu(x_{\mathrm{new}}), \mSigma(x_{\text{new}})\big),
\end{equation}
where \(\vmu(x_{\mathrm{new}})\) and \(\mSigma(x_{\text{new}})\) are the initialized parameters. After any injection event (or if no injection occurs), the algorithm proceeds with the standard training procedure as described in~\cref{alg:co-evolving_generative_models}. 

A natural question is: Does this random injection of new texts mitigate the text model collapse? We provide an assertive answer in~\cref{thm:stabilization_text_model_via_injection}, which shows that with a fixed injection probability and fraction, the external influx of new texts ensures that the text model's diversity retains a strictly positive lower bound, preventing collapse into a degenerate distribution. This confirms that random injections effectively counterbalance the self-reinforcing dynamics of a closed system, sustaining diversity in real-world settings.

\begin{algorithm}[t]
\caption{Co-Evolving Generative Model Training Procedure with Text Injection}
\label{alg:co-evolving_with_text_injection}
\begin{algorithmic}[1]
\Require A corpus \(\mathcal{X}\) with \(K\) texts. A text model with initial probability vector \(\vp_0\), an image model with initial conditional distributions \(\{q_0(\cdot|x_i)\colon 1\leq i\leq K\}\), the number of macro time steps \(T\), text injection probability \(\alpha>0\), text injection fraction \(\varepsilon>0\)
\For{macro time step \(t=1\) to \(T\)}
\If{a random event occurs with probability \(\alpha\)}
\Comment{Inject a new text with probability \(\alpha\)}
\For{each \(x_i \in \mathcal{X}\)}
\State Update: \(p_{t-1}(x_i) \gets (1-\varepsilon)\cdot p_{t-1}(x_i)\)
\EndFor
\State Inject a new text \(x_{\text{new}}\) to the corpus \(\mathcal{X}\) with 
\begin{equation}
p_t(x_{\text{new}}) \gets \varepsilon
\end{equation}
\State Initialize its image model:
\begin{equation}
q_t(\vy|x_{\text{new}}) \gets \mathcal{N}\big(\vy; \vmu(x_{\text{new}}), \mSigma(x_{\text{new}})\big)
\end{equation}
\EndIf
\State Run the standard training procedure as in~\cref{alg:co-evolving_generative_models} for macro time step \(t\)
\EndFor
\end{algorithmic}
\end{algorithm}

\begin{restatable}[Stabilization of text model diversity under text injection]{theorem}{StabilizationTextModelViaInjection}
\label{thm:stabilization_text_model_via_injection}
Under the co-evolving training procedure with text injection as described in~\cref{alg:co-evolving_with_text_injection}, the text model diversity \(H_t\) defined by~\labelcref{eq:text_model_diversity} is prevented from collapsing. More precisely,
\begin{equation}
\liminf_{t\to\infty}\E[H_t] \geq \frac{2\alpha(1-N^{-1})(\varepsilon - \varepsilon^2)}{1-(1-\alpha)(1-N^{-1})}.
\end{equation}
\end{restatable}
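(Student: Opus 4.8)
The plan is to collapse the whole argument into a single scalar linear recursion for the expected diversity $h_t \coloneqq \E[H_t]$ and then read off its fixed point. The two ingredients are (i) the exact effect of one injection event on the diversity functional, and (ii) the per-step contraction guarantee of the ordinary training step supplied by \cref{cor:lower_bound_text_model_diversity}.

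First I would quantify how an injection changes $H_t = 1 - \sum_i p(x_i)^2$. After injection each existing mass is rescaled by $(1-\varepsilon)$ and the new text receives mass $\varepsilon$, so the post-injection sum of squares equals $(1-\varepsilon)^2\sum_i p(x_i)^2 + \varepsilon^2$. Substituting $\sum_i p(x_i)^2 = 1-H_t$ and simplifying the constant $1-(1-\varepsilon)^2-\varepsilon^2 = 2(\varepsilon-\varepsilon^2)$, the post-injection diversity is exactly $(1-\varepsilon)^2 H_t + 2(\varepsilon-\varepsilon^2)$. This is the crucial identity: injection shrinks the old diversity by $(1-\varepsilon)^2$ but always adds a strictly positive constant $2(\varepsilon-\varepsilon^2)$, and this additive floor is what ultimately blocks collapse.

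Next, since the injection event (probability $\alpha$) is independent of the current text state, taking expectations gives the expected post-injection diversity $[\alpha(1-\varepsilon)^2+(1-\alpha)]\,h_t + 2\alpha(\varepsilon-\varepsilon^2)$. Because $H_t\ge 0$, I would discard the nonnegative term $\alpha(1-\varepsilon)^2 h_t$ to obtain the clean lower bound $(1-\alpha)h_t + 2\alpha(\varepsilon-\varepsilon^2)$. I would then apply \cref{cor:lower_bound_text_model_diversity} to the post-injection state; its contraction factor $(1-N^{-1})$ holds for the full co-evolving update and does not depend on the (now growing) corpus size, which is exactly what lets it be reused here. Combining these yields
\[
h_{t+1} \;\ge\; (1-\alpha)(1-N^{-1})\,h_t \;+\; 2\alpha(1-N^{-1})(\varepsilon-\varepsilon^2).
\]
Writing $a\coloneqq(1-\alpha)(1-N^{-1})\in(0,1)$ and $b\coloneqq 2\alpha(1-N^{-1})(\varepsilon-\varepsilon^2)>0$, I would track the deviation $g_t\coloneqq h_t - b/(1-a)$, which satisfies $g_{t+1}\ge a\,g_t$ and hence $g_t \ge a^t g_0 \to 0$; taking the limit inferior gives $\liminf_t h_t \ge b/(1-a)$, which is precisely the asserted bound.

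I expect the only genuine subtlety to be bookkeeping rather than a deep obstacle: getting the order of operations within a macro step right (injection precedes training, so the contraction multiplies the \emph{post}-injection diversity, correctly placing the factor $(1-N^{-1})$ on the additive term), and ensuring the single per-step contraction factor of \cref{cor:lower_bound_text_model_diversity} is applied once per macro step (consistent with the one-update-per-step convention underlying that corollary). The diversity identity, the dropping of the nonnegative $\alpha(1-\varepsilon)^2 h_t$ term, and the fixed-point solution of the affine recursion are otherwise routine.
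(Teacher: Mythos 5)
Your proposal is correct and follows essentially the same route as the paper's proof: the same post-injection identity $H_{\text{inj}} = (1-\varepsilon)^2 H_t + 2(\varepsilon-\varepsilon^2)$, the same reuse of \cref{cor:lower_bound_text_model_diversity} for the $(1-N^{-1})$ contraction of the post-injection state, and the same affine recursion whose fixed point gives the stated bound. The only cosmetic difference is that you average over the injection event before applying the contraction (and solve the recursion explicitly via the shifted sequence $g_t$), whereas the paper conditions on the two cases separately and combines them by the law of total expectation before iterating; these are equivalent.
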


\ifbool{sketch}{
\begin{proof}[Sketch of proof]
The proof proceeds in several key steps:
\begin{enumerate}
\item We first distinguish two cases: when an injection occurs (with probability \(\alpha\)) and when no injection occurs (with probability \((1-\alpha)\)).
\item In the no-injection case, we invoke~\cref{cor:lower_bound_text_model_diversity} to show that the text model diversity contracts by at most a factor of \((1-N^{-1})\) per update.
\item In the injection case, we compute the diversity immediately after injection; specifically, when a fraction \(\varepsilon\) of the probability mass is reallocated to a new text, the diversity increases to at least \(2\varepsilon-2\varepsilon^2\), which is then reduced by the same contraction factor.
\item Finally, applying the law of total expectation and iterating over successive updates yields the desired lower bound.
\end{enumerate}
\end{proof}
}

\Cref{thm:stabilization_text_model_via_injection} provides two key insights. First, by ensuring that new texts are injected with probability \(\alpha\), \cref{alg:co-evolving_with_text_injection} prevents model collapse by interrupting the self-reinforcing feedback that would otherwise drive the text model towards a degenerate distribution. Second, the theorem's quantitative lower bound highlights the influence of the key parameters: the injection probability \(\alpha\) determines how often new texts are introduced, with a higher \(\alpha\) leading to more frequent injections that bolster diversity; the injection fraction \(\varepsilon\) controls the proportion of probability mass reallocated to a new text, where \(\varepsilon\) being closer to \(1/2\) introduces greater diversity; and the sample size \(N\) governs the contraction rate during standard text updates with larger values of \(N\) resulting in larger diversity. For empirical validation of~\cref{thm:stabilization_text_model_via_injection}, see~\cref{subsubsec:exp_stabilization_of_the_text_model_via_corpus_injection}.

\subsection{Stabilization of the Image Model via User-Content Injection}
\label{subsec:stabilization_of_the_image_model_via_user-content_injection}

Aside from the influx of new trends and topics, user-generated content also plays a crucial role by acting as a regularizing force that prevents image model collapse. In this section, we demonstrate that by incorporating a fixed number \(N_0>0\) of images drawn from an external (user-content) distribution during the image model update, the updated image model retains strictly positive diversity and is bounded in fidelity.

To formalize this idea, assume that for each text \(x_i\) the user-content images are drawn i.i.d.\ from a Gaussian distribution
\begin{equation}
q_{\text{user}}(\vy|x_i)=\mathcal{N}\big(\vy;\vmu_{\text{user}}(x_i), \mSigma_{\text{user}}(x_i)\big),
\end{equation}
where the covariance matrix \(\mSigma_{\text{user}}(x_i)\) is non-degenerate. At each macro time step \(t\), in addition to generating \(N_i\) images from the current image model \(q_t(\vy|x_i)\) (with mean vector \(\vmu_t(x_i)\) and covariance matrix \(\mSigma_t(x_i)\)), we also inject \(N_0\) user-content images \(\vu^{(1)},\dotsc,\vu^{(N_0)}\) drawn from \(q_{\text{user}}(\cdot|x_i)\). We then compute the combined sample mean and covariance, which are used to update the image model. The training procedure is summarized in~\cref{alg:co-evolving_with_image_injection}.

\begin{algorithm}[t]
\caption{Co-Evolving Generative Model Training Procedure with Image Injection}
\label{alg:co-evolving_with_image_injection}
\begin{algorithmic}[1]
\Require A corpus \(\mathcal{X}\) with \(K\) texts. A text model with initial probability vector \(\vp_0\), an image model with initial conditional distributions \(\{q_0(\cdot|x_i)\colon 1\leq i\leq K\}\), the number of macro time steps \(T\), user-content injection number \(N_0>0\)
\For{macro time step \(t=1\) to \(T\)}
\State Run the standard training procedure of the text model as in~\cref{alg:co-evolving_generative_models}
\State Sample \(N\) texts \(x^{(j)} \sim \vp_t\)
\State Generate corresponding images \(\vy^{(j)} \sim q_{t-1}(\vy | x^{(j)})\)
\For{each \(x_i \in \mathcal{X}\)}
\State Let \(N_i \gets \#\{j\colon x^{(j)} = x_i\}\)
\State Bring in user-content images
\Comment{Inject user-content images}
\begin{equation}
\vu^{(n)} \sim q_{\text{user}}(\cdot| x_i) = \mathcal{N}\big(\vmu_{\text{user}}(x_i), \mSigma_{\text{user}}(x_i)\big), \quad n=1, \dotsc, N_0
\end{equation}
\State Compute sample mean:
\begin{equation}
\vmu_{t}(x_i) = \frac{1}{N_i+N_0} \Big(\sum_{j:x^{(j)} = x_i} \vy^{(j)} + \sum_{n=1}^{N_0}\vu^{(n)}\Big)
\end{equation}
\State Compute sample covariance:
\begin{equation}
\begin{aligned}
\mSigma_{t}(x_i) = \frac{1}{N_i + N_0 - 1} \Big(&\sum_{j:x^{(j)} = x_i} \big(\vy^{(j)} - \vmu_{t}(x_i)\big)\big(\vy^{(j)} - \vmu_{t}(x_i)\big)^\top\\
&+\sum_{n=1}^{N_0}\big(\vu^{(n)} - \vmu_{t}(x_i)\big)\big(\vu^{(n)} - \vmu_{t}(x_i)\big)^\top\Big)
\end{aligned}
\end{equation}
\State Update the image model for \(x_i\):
\begin{equation}
q_{t}(\cdot | x_i) \gets \mathcal{N}\big(\cdot; \vmu_{t}(x_i), \mSigma_{t}(x_i)\big)
\end{equation}
\EndFor
\EndFor
\end{algorithmic}
\end{algorithm}

One may ask: Does the injection of user-generated images mitigate the collapse of the image model diversity, in a manner analogous to how external text injections preserve text model diversity? The answer is yes. As we demonstrated for the text model, external injections can prevent complete collapse by continuously reintroducing variability into the system. In~\cref{thm:stabilization_image_diversity_injection}, we show that when a fixed number \(N_0>0\) of user-content images drawn from a reference distribution with fixed mean and covariance is injected at each macro time step, the diversity of the image model remains bounded from below by a positive constant.

\begin{restatable}[Stabilization of image model diversity under image injection]{theorem}{StabilizationImageDiversityInjection}
\label{thm:stabilization_image_diversity_injection}
Following the co-evolving training procedure with image injection as described in~\cref{alg:co-evolving_with_image_injection}, the image model diversity \(D_t(x_i)\) given by~\labelcref{eq:image_model_diversity} is prevented from collapsing. More precisely, for all \(t\geq 1\),
\begin{equation}
\E[D_t(x_i)]\geq \alpha\cdot\dfrac{1}{\sqrt{(N_0-1)(N+N_0-1)}}\cdot\E\big[\tr\big(\mSigma_{\text{user}}(x_i)^{1/2}\big)\big],
\end{equation}
where \(\alpha\) is the unique scalar such that \(\E[\mW^{1/2}] = \alpha\mI_d\) with \(\mW\sim\text{Wishart}_d(\mI,N_0-1)\) being a \(d\)-dimensional Wishart matrix with \((N_0-1)\) degrees of freedom and identity scale matrix.
\end{restatable}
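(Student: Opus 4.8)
The plan is to bound the updated covariance $\mSigma_t(x_i)$ from below, in the Loewner order, by the contribution of the injected user-content images alone, and then exploit operator monotonicity of the matrix square root together with the isotropy of the Wishart law. Writing $\bar{\vu}=\frac{1}{N_0}\sum_{n}\vu^{(n)}$ and letting $\vmu_t(x_i)$ be the combined sample mean, I would expand the combined scatter matrix about $\vmu_t(x_i)$ and regroup into the standard ANOVA decomposition: the total scatter equals the within-$\vy$ scatter, plus the within-$\vu$ scatter $S_u=\sum_n(\vu^{(n)}-\bar{\vu})(\vu^{(n)}-\bar{\vu})^\top$, plus a positive semidefinite between-group term. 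Discarding every term but $S_u$ gives $\mSigma_t(x_i)\succeq\frac{1}{N_i+N_0-1}S_u$. Since the $\vu^{(n)}$ are i.i.d.\ $\mathcal{N}\big(\vmu_{\text{user}}(x_i),\mSigma_{\text{user}}(x_i)\big)$, the scatter $S_u$ follows $\mathrm{Wishart}_d\big(\mSigma_{\text{user}}(x_i),N_0-1\big)$, so we may write $S_u=\mSigma_{\text{user}}(x_i)^{1/2}\,\mW\,\mSigma_{\text{user}}(x_i)^{1/2}$ with $\mW\sim\mathrm{Wishart}_d(\mI,N_0-1)$.

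Next I would pass to square roots. Because $N_i\le N$ holds deterministically, $\frac{1}{N_i+N_0-1}S_u\succeq\frac{1}{N+N_0-1}S_u$, and the L\"owner--Heinz operator monotonicity of $\mA\mapsto\mA^{1/2}$ yields $\mSigma_t(x_i)^{1/2}\succeq(N+N_0-1)^{-1/2}S_u^{1/2}$; taking traces gives
\begin{equation}
D_t(x_i)=\tr\big(\mSigma_t(x_i)^{1/2}\big)\ \ge\ \frac{1}{\sqrt{N+N_0-1}}\,\tr\Big(\big(\mSigma_{\text{user}}(x_i)^{1/2}\mW\mSigma_{\text{user}}(x_i)^{1/2}\big)^{1/2}\Big).
\end{equation}
The crux is to lower bound the expectation of this square root of a non-commuting product. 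Setting $\mX=\mW^{1/2}\mSigma_{\text{user}}(x_i)^{1/2}$, one has $\mX^\top\mX=\mSigma_{\text{user}}(x_i)^{1/2}\mW\mSigma_{\text{user}}(x_i)^{1/2}$, hence $\tr\big((\mSigma_{\text{user}}(x_i)^{1/2}\mW\mSigma_{\text{user}}(x_i)^{1/2})^{1/2}\big)=\|\mX\|_*$. Testing the dual characterization of the nuclear norm against $\mI$ (which has operator norm $1$) gives $\|\mX\|_*\ge\tr(\mX)=\tr\big(\mW^{1/2}\mSigma_{\text{user}}(x_i)^{1/2}\big)$. Taking expectations and using $\E[\mW^{1/2}]=\alpha\mI$ (isotropic by the orthogonal invariance of $\mathrm{Wishart}_d(\mI,N_0-1)$, so $\E[\mW^{1/2}]$ commutes with every rotation and is a scalar multiple of $\mI$) yields $\E\big[\tr(\mW^{1/2}\mSigma_{\text{user}}(x_i)^{1/2})\big]=\alpha\,\tr\big(\mSigma_{\text{user}}(x_i)^{1/2}\big)$. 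Combining with the display gives $\E[D_t(x_i)]\ge\frac{\alpha}{\sqrt{N+N_0-1}}\,\E\big[\tr(\mSigma_{\text{user}}(x_i)^{1/2})\big]$, which, since $N_0\ge2$ forces $\sqrt{N_0-1}\ge1$, in particular dominates the stated bound carrying the extra factor $(N_0-1)^{-1/2}$.

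The main obstacle is the final step: the matrix square root does not factor over $\mSigma_{\text{user}}(x_i)^{1/2}\mW\mSigma_{\text{user}}(x_i)^{1/2}$ because $\mSigma_{\text{user}}(x_i)$ and $\mW$ need not commute, so one cannot write $(\mSigma_{\text{user}}(x_i)^{1/2}\mW\mSigma_{\text{user}}(x_i)^{1/2})^{1/2}=\mW^{1/2}\mSigma_{\text{user}}(x_i)^{1/2}$. Recasting the trace of that square root as the nuclear norm $\|\mX\|_*$ and bounding it below by $\tr(\mX)$ is exactly what linearizes the quantity in $\mW^{1/2}$ and lets the isotropic expectation $\alpha\mI$ enter cleanly. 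I would finally verify that the crude replacement $N_i\le N$ and the inequality $\|\cdot\|_*\ge\tr(\cdot)$ are the only lossy steps, which accounts for the stated constant being conservative relative to the sharper $\frac{\alpha}{\sqrt{N+N_0-1}}$ obtained above.
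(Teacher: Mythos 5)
Your proof is correct, and its skeleton coincides with the paper's up to the last step: both arguments use the pooled-covariance (ANOVA) decomposition of \labelcref{eq:variance_decomposition}, discard the positive semidefinite within-\(\vy\) and between-group terms, replace \(N_i\) by \(N\), and then apply operator monotonicity of the matrix square root together with the isotropy \(\E[\mW^{1/2}]=\alpha\mI_d\). The genuine divergence is in how one extracts \(\tr\big(\mSigma_{\text{user}}(x_i)^{1/2}\big)\) from \(\E\big[\tr\big((\mSigma_{\text{user}}(x_i)^{1/2}\mW\mSigma_{\text{user}}(x_i)^{1/2})^{1/2}\big)\big]\). The paper does this by asserting the factorization \(\mS_{\text{user}}(x_i)^{1/2}=\frac{1}{N_0-1}\mSigma_{\text{user}}(x_i)^{1/2}\mW^{1/2}\mSigma_{\text{user}}(x_i)^{1/2}\), which is not a valid identity: the square root of \(\mSigma_{\text{user}}^{1/2}\mW\mSigma_{\text{user}}^{1/2}\) does not factor this way unless \(\mW\) and \(\mSigma_{\text{user}}\) commute (the very issue the paper criticizes in prior work on the frozen-text theorem), the prefactor should be \(1/\sqrt{N_0-1}\) rather than \(1/(N_0-1)\), and even granting the identity, plugging in \(\E[\mW^{1/2}]=\alpha\mI_d\) would produce \(\mSigma_{\text{user}}\), not \(\mSigma_{\text{user}}^{1/2}\). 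Your nuclear-norm device — writing \(\tr\big((\mX^\top\mX)^{1/2}\big)=\|\mX\|_*\) with \(\mX=\mW^{1/2}\mSigma_{\text{user}}(x_i)^{1/2}\) and bounding \(\|\mX\|_*\ge\tr(\mX)\) via the dual characterization — is exactly what is needed to linearize the quantity in \(\mW^{1/2}\) without any commutativity assumption, so your argument is rigorous where the paper's key step is flawed. As a bonus, it yields the sharper constant \(\alpha/\sqrt{N+N_0-1}\), which dominates the stated \(\alpha/\sqrt{(N_0-1)(N+N_0-1)}\) whenever \(N_0\ge 2\) (the only regime in which the \((N_0-1)\)-degree Wishart matrix, and hence the theorem's constant \(\alpha\), is meaningful), so the theorem as stated follows a fortiori from your proof.
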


\ifbool{sketch}{
\begin{proof}[Sketch of proof]
The proof proceeds in several key steps:
\begin{enumerate}
\item We first write the combined sample covariance after user-content injection as a weighted sum of the sample covariance from the current image model (which is positive semidefinite) and that from the injected user-content images, along with positive semidefinite cross terms.
\item Next, we express the sample covariance from the user-content images in terms of the reference covariance \(\mSigma_{\text{user}}(x_i)\) and a Wishart random matrix \(\mW\sim\text{Wishart}_d(\mI, N_0-1)\).
\item Finally, by applying the law of total expectation, we deduce the stated lower bound.
\end{enumerate}
\end{proof}
}

The implications of~\cref{thm:stabilization_image_diversity_injection} are twofold. First, the coefficient
\begin{equation}
\frac{1}{\sqrt{(N_0-1)(N+N_0-1)}}
\end{equation}
shows that increasing the number \(N_0\) of injected images improves the lower bound on the image model diversity, whereas a larger number \(N\) of generated images without a corresponding increase in \(N_0\) reduces the regularizing effect. Second, although the theorem is stated for a fixed \(N_0\), the analysis generalizes to settings where the number of injected images is random (e.g., Poisson distributed) as long as there is a positive probability of injecting more than one image. In such cases, the effective injection parameter becomes the conditional expectation of the number of injected images given that at least two are injected.

One may inquire whether the injection of user-generated images can not only prevent the collapse of the image model diversity but also ensure that the fidelity remains bounded over time. In other words, does the injection of external images drawn from a fixed reference distribution with \(\vmu_{\text{user}}(x_i) = \vmu_0(x_i)\) and \(\mSigma_{\text{user}}(x_i) = \mSigma_0(x_i)\) act as an effective regularizer that keeps the image model's output close to the reference distribution? \Cref{thm:boundedness_image_fidelity_injection} answers this question affirmatively by establishing an explicit bound on the long-term expected fidelity. In our analysis, we assume that exactly \(Np_i\) generated images are drawn from the current image model at each update, thereby removing the randomness associated with the number of images drawn for each text. This assumption also effectively corresponds to a potentially frozen text model. While the former assumption is not strictly essential, a more general analysis allowing a random number of images would yield analogous results at the cost of increased technical complexity.

\begin{restatable}[Boundedness of image model fidelity under image injection]{theorem}{BoundednessImageFidelityInjection}
\label{thm:boundedness_image_fidelity_injection}
Suppose that image injection procedure described in~\cref{alg:co-evolving_with_image_injection} is employed in the co-evolving training process. Assume that at each macro time step \(t\), exactly \(Np_i\) generated images are drawn from \(q_{t-1}(\cdot|x_i)\) (instead of \(N_i\) samples, where \(N_i\) is a random variable), and that the user-content images satisfy 
\begin{equation}
\vmu_{\text{user}}(x_i) = \vmu_0(x_i), \quad \mSigma_{\text{user}}(x_i) = \mSigma_0(x_i)
\end{equation}
for all \(1\leq i\leq K\). Suppose further that \(N_0>0\). Then, as \(t\to\infty\), the expected image model fidelity \(F_t(x_i)\) defined by~\labelcref{eq:image_model_fidelity} converges and is bounded by
\begin{equation}
\limsup_{t\rightarrow\infty}\E[F_t(x_i)] \leq \Big(\dfrac{1-(Np_i)^{-1}\lambda}{Np_i\lambda^{-1}-1-Np_i\lambda}\cdot \tr\big(\mSigma_0(x_i)\big)\Big)^{1/2},
\end{equation}
where \(\lambda = Np_i/(Np_i+N_0) < 1\).
\end{restatable}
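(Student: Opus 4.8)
The plan is to reduce the fidelity bound to the analysis of a coupled two-dimensional linear recursion in expectation. Throughout I suppress the argument $x_i$, write $m \coloneqq Np_i$ for the (now deterministic) number of generated images, keep $\lambda = m/(m+N_0)$ as in the statement, and abbreviate $\vdelta_t \coloneqq \vmu_t - \vmu_0$. I track the two scalars $a_t \coloneqq \E[\|\vdelta_t\|_2^2]$ (squared fidelity) and $b_t \coloneqq \E[\tr(\mSigma_t)]$ (total variance). The first step is the mean recursion. Conditioning on $(\vmu_{t-1}, \mSigma_{t-1})$, the two sample means $\bar\vy \sim \mathcal{N}(\vmu_{t-1}, \mSigma_{t-1}/m)$ and $\bar\vu \sim \mathcal{N}(\vmu_0, \mSigma_0/N_0)$ are independent Gaussians and the update mean is the convex combination $\vmu_t = \lambda\bar\vy + (1-\lambda)\bar\vu$, so that $\vdelta_t = \lambda\vdelta_{t-1} + \lambda\veta_y + (1-\lambda)\veta_u$ with $\veta_y, \veta_u$ independent mean-zero Gaussians. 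Taking the conditional second moment and then full expectation gives the exact linear relation
\[
a_t = \lambda^2 a_{t-1} + \frac{\lambda^2}{m} b_{t-1} + \frac{(1-\lambda)^2}{N_0}\tr(\mSigma_0).
\]

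The crux is the covariance recursion. Since the $m+N_0$ pooled samples come from two groups with different means $\vmu_{t-1}$ and $\vmu_0$, I would invoke an ANOVA-type decomposition of the pooled scatter matrix: the scatter around the grand mean $\vmu_t$ equals the two within-group scatters plus a between-group term $\frac{mN_0}{m+N_0}(\bar\vy - \bar\vu)(\bar\vy - \bar\vu)^\top$. Taking expectations, the within-group scatters contribute $(m-1)\mSigma_{t-1} + (N_0-1)\mSigma_0$, while $\E[(\bar\vy - \bar\vu)(\bar\vy - \bar\vu)^\top] = \vdelta_{t-1}\vdelta_{t-1}^\top + \mSigma_{t-1}/m + \mSigma_0/N_0$. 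Dividing by $m+N_0-1$, simplifying the coefficients, and taking the trace yields
\[
b_t = \lambda b_{t-1} + (1-\lambda)\tr(\mSigma_0) + \beta\, a_{t-1}, \qquad \beta \coloneqq \frac{mN_0}{(m+N_0)(m+N_0-1)}.
\]
The coupling term $\beta\, a_{t-1}$ — the contribution of the mean gap to the pooled variance — is exactly what links the two recursions and must be tracked with the correct degrees-of-freedom factors.

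With both recursions in hand, $(a_t, b_t)$ obeys an affine system with nonnegative transition matrix
\[
A = \begin{pmatrix} \lambda^2 & \lambda^2/m \\ \beta & \lambda \end{pmatrix}.
\]
I would establish $\rho(A) < 1$ via the Schur--Cohn conditions $\det A < 1$ and $\tr A < 1 + \det A$; after substituting $\lambda$ and $\beta$, the binding condition reduces to $1 - \tr A + \det A = (1-\lambda)^2(1+\lambda) - \lambda\beta/(m+N_0) > 0$, which simplifies to the elementary inequality $N_0(2m+N_0)(m+N_0-1) > m^2$, holding for all $m, N_0 \geq 1$. This guarantees that $(a_t, b_t)$ converges to the unique fixed point. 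Eliminating $b_* = \tr(\mSigma_0) + \frac{m}{m+N_0-1}a_*$ from the covariance equation and substituting into the mean equation gives
\[
a_* = \frac{(m+N_0-1)\,\tr(\mSigma_0)}{2mN_0 + N_0^2 - m - N_0},
\]
which rewrites exactly as $\frac{1 - m^{-1}\lambda}{m\lambda^{-1} - 1 - m\lambda}\tr(\mSigma_0)$.

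Finally, Jensen's inequality gives $\E[F_t] = \E[\|\vdelta_t\|_2] \leq \sqrt{a_t}$, and since $a_t \to a_*$ we conclude $\limsup_{t\to\infty}\E[F_t(x_i)] \leq \sqrt{a_*}$, the claimed bound. I expect the main obstacle to be the covariance recursion of the second paragraph: correctly separating the pooled two-sample scatter into within- and between-group parts and pinning down the exact factors $\frac{mN_0}{m+N_0}$ and $m+N_0-1$, since it is precisely this between-group coupling that determines the fidelity's steady state. A secondary point of care is confirming the spectral-radius condition cleanly rather than merely asserting convergence.
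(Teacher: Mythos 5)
Your proposal is correct and follows essentially the same route as the paper's proof: the same coupled linear recursions for $a_t=\E[\|\vdelta_t\|_2^2]$ and $b_t=\E[\tr(\mSigma_t)]$ obtained from the same pooled-scatter (between/within group) decomposition, the same fixed point $a_*=\frac{(m+N_0-1)\tr(\mSigma_0)}{2mN_0+N_0^2-m-N_0}$, and the same Jensen/Cauchy--Schwarz step to pass from $\sqrt{a_*}$ to the stated bound. The one place you go beyond the paper is the explicit Schur--Cohn verification that the transition matrix has spectral radius below $1$ (reducing to $N_0(2m+N_0)(m+N_0-1)>m^2$), where the paper merely appeals to ``standard results on linear systems''; your treatment of that step is the more rigorous of the two.
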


\ifbool{sketch}{
\begin{proof}[Sketch of proof]
The proof proceeds in several key steps:
\begin{enumerate}
\item We first derive a recurrence for the expected squared fidelity, \(\E[F_{t+1}(x_i)^2]\), in terms of the previous fidelity \(\E[F_t(x_i)^2]\) and the traces \(\E\big[\tr(\mSigma_t(x_i))\big]\) and \(\tr\big(\mSigma_0(x_i)\big)\) arising from the sample averages.
\item Next, using the standard pooled covariance update formula, we derive a similar recurrence for the expected covariance trace, \(\E\big[\tr(\mSigma_{t+1}(x_i))\big]\), as a function of \(\E[F_t(x_i)^2]\), \(\E\big[\tr(\mSigma_t(x_i))\big]\), and \(\tr\big(\mSigma_0(x_i)\big)\).
\item By combining these two recurrences, we obtain a coupled linear system for \(\E[F_t(x_i)^2]\) and \(\E\big[\tr(\mSigma_t(x_i))\big]\). Standard analysis of this system shows that both sequences converge to finite limits. Solving for the limit and applying Cauchy--Schwarz inequality yields the stated upper bound on \(\limsup_{t\to\infty}\E[F_t(x_i)]\).
\end{enumerate}
\end{proof}
}

The explicit bound given in~\cref{thm:boundedness_image_fidelity_injection} demonstrates that the long-term expected fidelity is directly governed by the injection parameters. In particular, as the number \(N_0\) of injected images increases relative to the number \(Np_i\) of generated images, the bound on the fidelity becomes tighter. This implies that a larger injection batch not only helps to preserve the diversity of the image model but also keeps its output more closely aligned with the reference distribution. It is important to note that both this result and the boundedness result in~\cref{thm:image_model_mean_frozen_text_model} assume that the text model remains frozen. However, in~\cref{thm:image_model_mean_frozen_text_model} the boundedness of image model fidelity is achieved as a consequence of image model collapse; specifically, the exponential decay of the image model diversity plays a crucial role in ensuring that the fidelity converges to a small value. In contrast, the boundedness established in~\cref{thm:boundedness_image_fidelity_injection} does not rely on any such exponential decay property. Instead, it is derived solely through the injection of user-generated images drawn from a fixed reference distribution. This external injection acts as a continuous regularizer, ensuring that the image model's output remains close to the reference distribution regardless of the behavior of its diversity. See~\cref{subsubsec:exp_stabilization_of_the_image_model_via_user-content_injection} for a discussion of the experimental findings that corroborate~\cref{thm:stabilization_image_diversity_injection,thm:boundedness_image_fidelity_injection}.

\section{Related Work}

In this section, we review research on self-consuming loops, organizing the discussion into empirical studies and theoretical studies. We then compare our co-evolving generative model training procedure (see~\cref{alg:co-evolving_generative_models}) with the classical Expectation-Maximization (EM) algorithm~\cite{mclachlan2008algorithm}.

\subsection{Empirical studies} Several empirical investigations have documented the adverse effects of training on synthetic data. The literature on this topic generally distinguishes between three data types: (\romannumeral 1) image data; (\romannumeral 2) text data; and (\romannumeral 3) multimodal data. Most prior studies have focused on the first two categories. For image data, prominent generative models such as variational autoencoders (VAEs)~\cite{shumailov2024ai}, generative adversarial networks (GANs)~\cite{alemohammad2024self,casco2023toward}, diffusion models (DMs)~\cite{bohacek2023nepotistically,martinez2023combining,martinez2023towards} have been investigated, all of which suggested that repeatedly feeding generative models with AI-generated data over time leads to degeneration such as unrealism~\cite{bohacek2023nepotistically,casco2023toward}, blurriness~\cite{martinez2023combining,martinez2023towards}, homogenization~\cite{martinez2023combining,martinez2023towards,shumailov2024ai}, or magnification of artifacts~\cite{alemohammad2024self,bohacek2023nepotistically}. For text data, self-consuming loop has known to cause decline in language diversity~\cite{guo2024curious,peterson2025ai} or amplifies biases~\cite{pan2024feedback,wyllie2024fairness}. In contrast to these studies, which largely focus on isolated modalities, \cite{conde2024analyzing} investigates a multimodal scenario where recursive modality changes such as repeatedly converting images to text and back result in a gradual degradation of the original content. Their findings show that generated images and descriptions drift away from the original input, eventually losing key semantic elements. While their work is based on data-driven observations and focuses on the inference loop, our work complements theirs by providing theoretical insights into the training loop. We explicitly model the mutual feedback between text and image models, thereby exploring how inter-modal reinforcement can accelerate collapse or, alternatively, be stabilized through external interventions.

\subsection{Theoretical studies} On the theoretical side, several studies have developed analytical frameworks to understand the underlying mechanisms driving collapse~\cite{dohmatob2024model,dohmatob2024tale,marchi2024heat,suresh2024rate} or mitigate it through incorporating fresh data~\cite{bertrand2024stability,gerstgrasser2024model} or leveraging reinforcement learning techniques~\cite{feng2024beyond,ferbach2024self,gillman2024self}. In addition, several case studies have examined collapse phenomena across a range of settings, including discrete distributions~\cite{suresh2024rate}, Gaussian distributions (both \(1\)-dimensional~\cite{suresh2024rate} and higher-dimensional cases~\cite{bertrand2024stability}), \(1\)-dimensional Gaussian mixture~\cite{suresh2024rate}, and regression problems~\cite{dohmatob2024model,dohmatob2024tale,taori2023data}. However, existing studies typically examine settings in which a model is trained solely on its own generated data. In contrast, given the growing prevalence of multimodal systems, our work investigates a co-evolving system in which two generative models are trained recursively on data produced by each other. In terms of theoretical contributions, our work extends and complements existing theory in the following ways: (\romannumeral 1) for the isolated case, we prove that the image model converges exponentially. This result not only confirms findings reported in~\cite{bertrand2024stability,suresh2024rate} but also extends them to higher-dimensional settings, beyond the \(1\)-dimensional case analyzed in~\cite{suresh2024rate}, and provides an explicit convergence rate, which improves upon the results in~\cite{bertrand2024stability}; (\romannumeral 2) regarding stabilization techniques, we demonstrate that incorporating fresh data into the training process prevents the co-evolving system from collapsing. This insight generalizes conclusions drawn for single-model settings~\cite{bertrand2024stability,gerstgrasser2024model} to the more complex scenario of interdependent, multimodal system.

\subsection{Comparison with the EM algorithm} The Expectation-Maximization (EM) algorithm\footnote{\label{fnote:em_background} Please refer to~\cref{sec:math_background} for an introduction to the EM algorithm as applied to the Gaussian Mixture Model.} is a classic iterative method used for maximum likelihood estimation in models with latent variables~\cite{mclachlan2008algorithm}. In each iteration, it alternates between computing the expected value of the latent variables given the observed data (the E-step) and updating the model parameters by maximizing the expected complete-data log-likelihood (the M-step). This structure of alternating between soft assignments of hidden variables and parameter updates is echoed in our co-evolving text-image system. Here, the text model update where the posterior probabilities of texts given the generated images are computed and averaged parallels the E-step, while the subsequent image model update where parameters such as the mean and covariance of the conditional Gaussian distributions are recalculated based on the new samples mirrors the M-step. Despite these similarities, significant differences distinguish our approach from EM. Unlike EM, which is designed to maximize a well-defined likelihood function over a fixed dataset, our co-evolving system continuously generates data through its own models, creating a dynamic feedback loop between the texts and images. This means that while EM relies on deterministic, closed-form computations to update responsibilities and parameters, our procedure uses Monte Carlo sampling to approximate these updates, introducing an element of stochasticity. Moreover, the bidirectional adaptation in our system, where both models influence each other iteratively, deviates from the unidirectional latent variable estimation framework of EM, resulting in a training procedure that prioritizes reinforcing engaging text-image pairs over strict likelihood maximization and its associated convergence guarantees.

\section{Experiments}
\label{sec:experiments}

In this section, we present the experimental results from three perspectives: (\romannumeral 1) isolating the dynamics by freezing one model to understand its individual contribution (see~\cref{subsec:exp_isolating_the_dynamics_by_freezing_one_model}); (\romannumeral 2) image-driven acceleration in text model collapse (see~\cref{subsec:exp_image_driven_acceleration_in_text_model_collapse}); and (\romannumeral 3) stabilization strategies to counteract model collapse (see~\cref{subsec:exp_stabilization_strategies}). Unless otherwise noted, all experiments use \(K=5\) text components (for~\cref{subsubsec:exp_stabilization_of_the_text_model_via_corpus_injection}, the text model is initialized with \(K=5\) before injecting any text), an image dimension of \(d=2\), and \(N=1000\) samples for each training step. The image model is initialized with mean vectors distributed uniformly on the unit circle, and all reported results are averaged over \(100\) independent runs. Please refer to~\cref{fig:exp_overview} for an overview of the experimental setup\footnote{\label{ fnote:typical_run} We also include visualizations of text model histograms and image model generated samples from a typical run in~\cref{sec:visualizations_of_a_typical_run}.}.

\begin{figure}[htb]
\centering
\includegraphics[width=\linewidth]{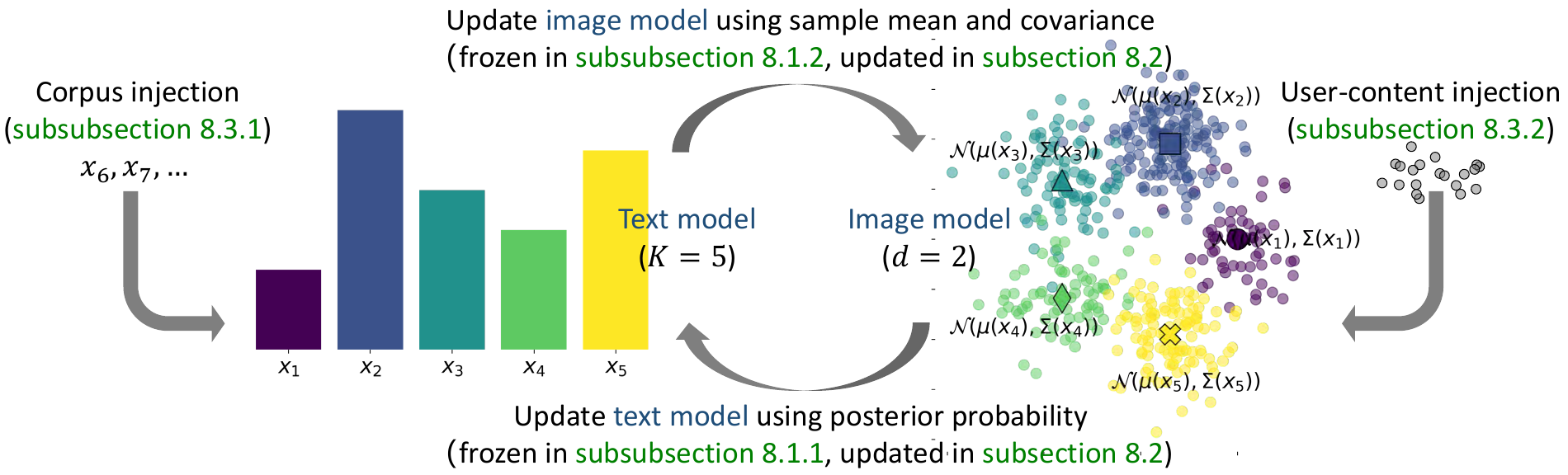}
\caption{An overview of the setup of experiments in~\cref{sec:experiments}. The text model is a discrete distribution over \(K=5\) text components, while the image model is a \(2\)-dimensional Gaussian mixture with \(K=5\) components whose means are uniformly distributed on the unit circle. The gray arrows indicate how generated texts are used to train the image model (via sample mean and covariance), how generated images are used to train the text model (via posterior probability), and how external information (e.g., corpus or user-content injection) can be introduced.}
\label{fig:exp_overview}
\end{figure}

\subsection{Isolating the Dynamics by Freezing One Model}
\label{subsec:exp_isolating_the_dynamics_by_freezing_one_model}

In this section, we conduct two controlled experiments in which one of the models is held frozen, and we measure the evolution of the diversity measures defined in~\labelcref{eq:text_model_diversity,eq:image_model_diversity}.

\subsubsection{Text Model Diversity under the Frozen Image Model}
\label{subsubsec:exp_text_model_diversity_under_the_frozen_image_model}

To isolate the effect of text updates, we hold the image model fixed (i.e., \(M_t=1\) and \(N_t=0\) in~\cref{alg:co-evolving_generative_models}). The image model is initialized with \(d=2\) by uniformly placing its mean vectors on the unit circle and setting all its covariance matrices to \(\sigma^2\mI\) with \(\sigma^2 \in \{0.01,0.1,0.5,1,10\}\). The text model is initialized uniformly over \(K=5\) texts and updated at each macro time step using \(N=1000\) samples, with results averaged over \(100\) independent runs. We plot text model diversity against the macro time step in~\cref{fig:exp_freeze_image_model_and_update_text_model}, using a logarithmic scale for the \(y\)-axis. As shown in~\cref{fig:exp_freeze_image_model_and_update_text_model}, text model diversity gradually decreases over time. Moreover, larger values of \(\sigma^2\) slow this decline, aligning with the predictions of~\cref{thm:text_model_diversity_frozen_image_model,thm:arbitrarily_small_convergence_large_covariance}.

\begin{figure}[htb]
\centering
\includegraphics[width=0.45\linewidth]{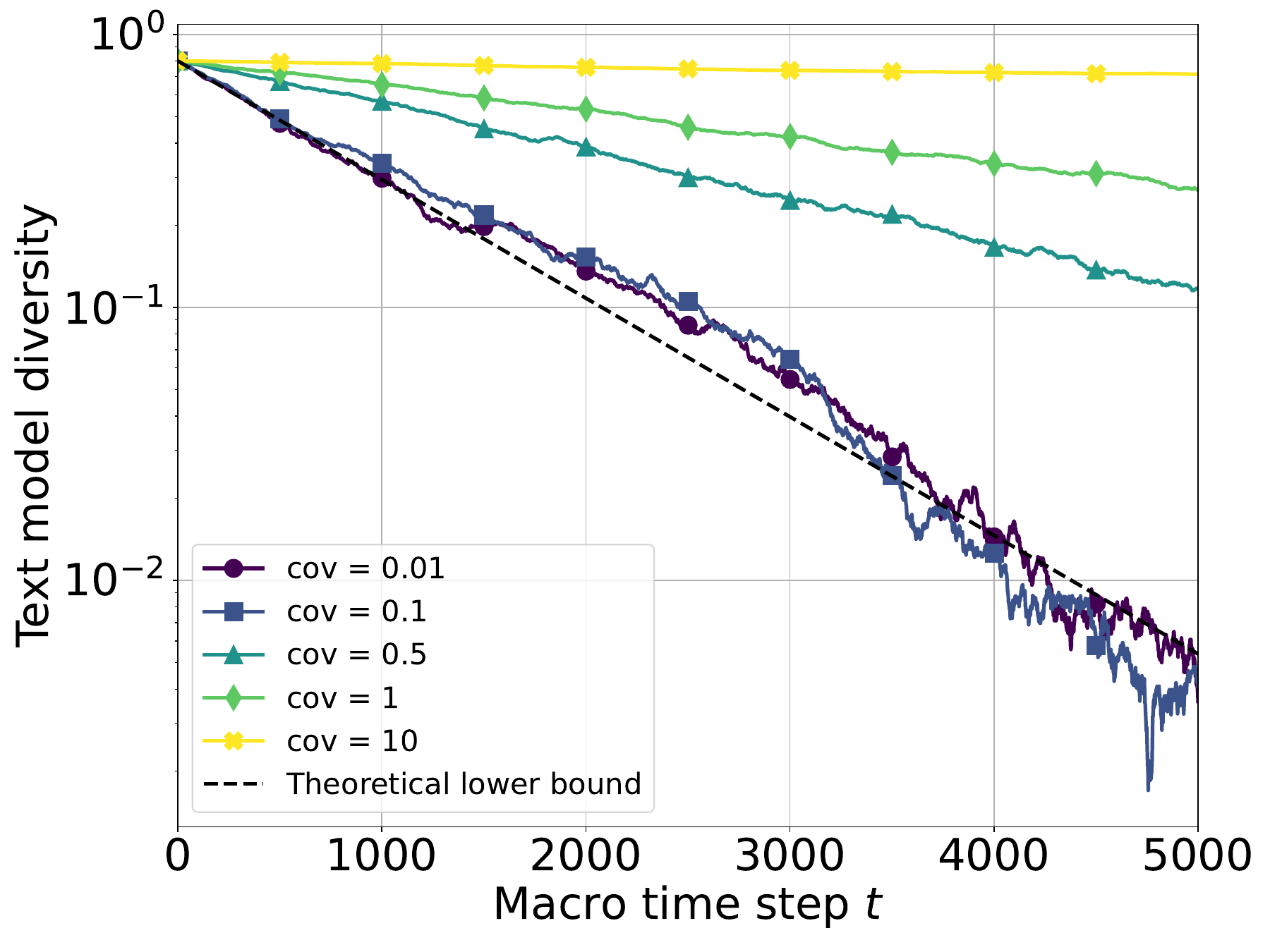}
\caption{Evolution of text model diversity when the image model is frozen. In this experiment, the image model is initialized with \(d=2\) by uniformly distributing its mean vectors on a unit circle, and its covariance matrices are set to \(\sigma^2\mI\) with \(\sigma^2\) taking on values of \(0.01\), \(0.1\), \(0.5\), \(1\), and \(10\). The text model is initialized with a uniform distribution over \(K=5\) texts and updated at each macro time step using a batch of \(N=1000\) samples; results are averaged over \(100\) independent runs. Different line colors represent different covariance scales \(\sigma^2\) (with lighter colors corresponding to larger values). The black dashed line indicate the theoretical lower bound of text model diversity convergence rate derived in~\cref{cor:lower_bound_text_model_diversity}, i.e., \(1-N^{-1}=0.999\). The results indicate that as \(\sigma^2\) increases, the rate at which text model diversity decreases becomes slower. Conversely, as \(\sigma^2\) decreases, the convergence rate increases, but is approximately bounded below by the theoretical lower bound. These are consistent with the theoretical predictions of~\cref{thm:text_model_diversity_frozen_image_model,thm:arbitrarily_small_convergence_large_covariance}.}
\label{fig:exp_freeze_image_model_and_update_text_model}
\end{figure}

\subsubsection{Image Model Diversity under the Frozen Text Model}
\label{subsubsec:exp_image_model_diversity_under_the_frozen_text_model}

Next, we freeze the text model and update only the image model (i.e., \(M_t=0\) and \(N_t=1\) in~\cref{alg:co-evolving_generative_models}). In this experiment, the text model is initialized with \(\vp=(0.06,0.13,0.2,0.27,0.34)\) for \(K=5\) texts, and the image model is initialized as described in~\cref{subsubsec:exp_text_model_diversity_under_the_frozen_image_model} with \(s=1\). The image model is updated at each macro time step using a batch of \(N=1000\) samples, and the diversity values are averaged over \(100\) independent runs and plotted for \(T=2000\) macro times steps. \Cref{fig:exp_freeze_text_model_and_update_image_model} shows that the image model diversity decays exponentially over time, with a decay rate that approximates the theoretical prediction \(\rho(x_i) \approx 1-(d+1)/(8(N+1)p_i)\) given in~\cref{thm:diff_convergence_rate_image_model}.

\begin{figure}[htb]
\centering
\includegraphics[width=0.45\linewidth]{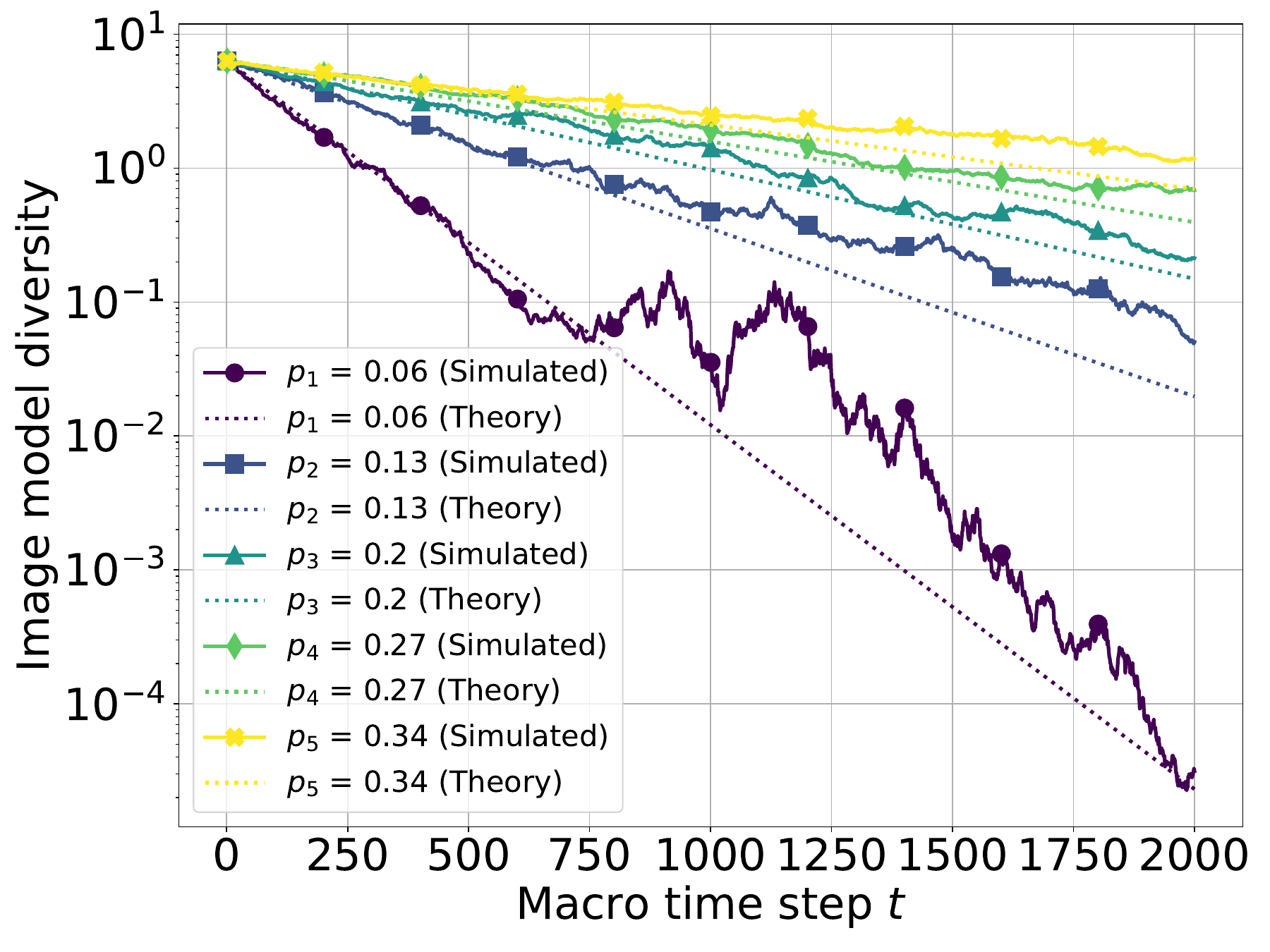}
\caption{Evolution of image model diversity evolution when the text model is frozen. Here, the text model is initialized with \(\vp=(0.06,0.13,0.2,0.27,0.34)\) (for \(K=5\) texts), and the image model is initialized as in~\cref{subsubsec:exp_text_model_diversity_under_the_frozen_image_model} with \(s=1\). The image model is updated using \(N=1000\) samples per macro time step, and results are averaged over \(100\) runs. Different line colors represent different text probabilities \(p_i\) (with lighter colors corresponding to larger values). The colored dashed lines indicate the convergence rate predicted by~\cref{thm:diff_convergence_rate_image_model} (i.e., \(\rho(x_i) \approx 1-(d+1)/(8(N+1)p_i)\)). The observed exponential decay in diversity is consistent with the theoretical prediction, especially when \(t\) is relatively small and less affected by numerical round-off errors; however, deviations tend to emerge at later times likely due to the accumulation of numerical inaccuracies.}
\label{fig:exp_freeze_text_model_and_update_image_model}
\end{figure}

\subsection{Image-Driven Acceleration in Text Model Collapse}
\label{subsec:exp_image_driven_acceleration_in_text_model_collapse}

In this experiment, we explore how the image model influences the speed at which the text model collapses, as predicted by~\cref{thm:acceleratd_text_model_collapse}. To this end, we vary the number of image updates \(N_t = 0, 1, 2, 5, 10\) performed between successive text model updates in~\cref{alg:co-evolving_generative_models}, where \(N_t=0\) corresponds to a frozen image model. The results are averaged over \(100\) independent runs and plotted over \(T=1000\) macro time steps. Our findings, as illustrated in~\cref{fig:exp_acceleration_effect}, reveal that more frequent image updates result in a more rapid decrease in text model diversity, thereby validating the theoretical predictions.

\begin{figure}[htb]
\centering
\includegraphics[width=0.45\linewidth]{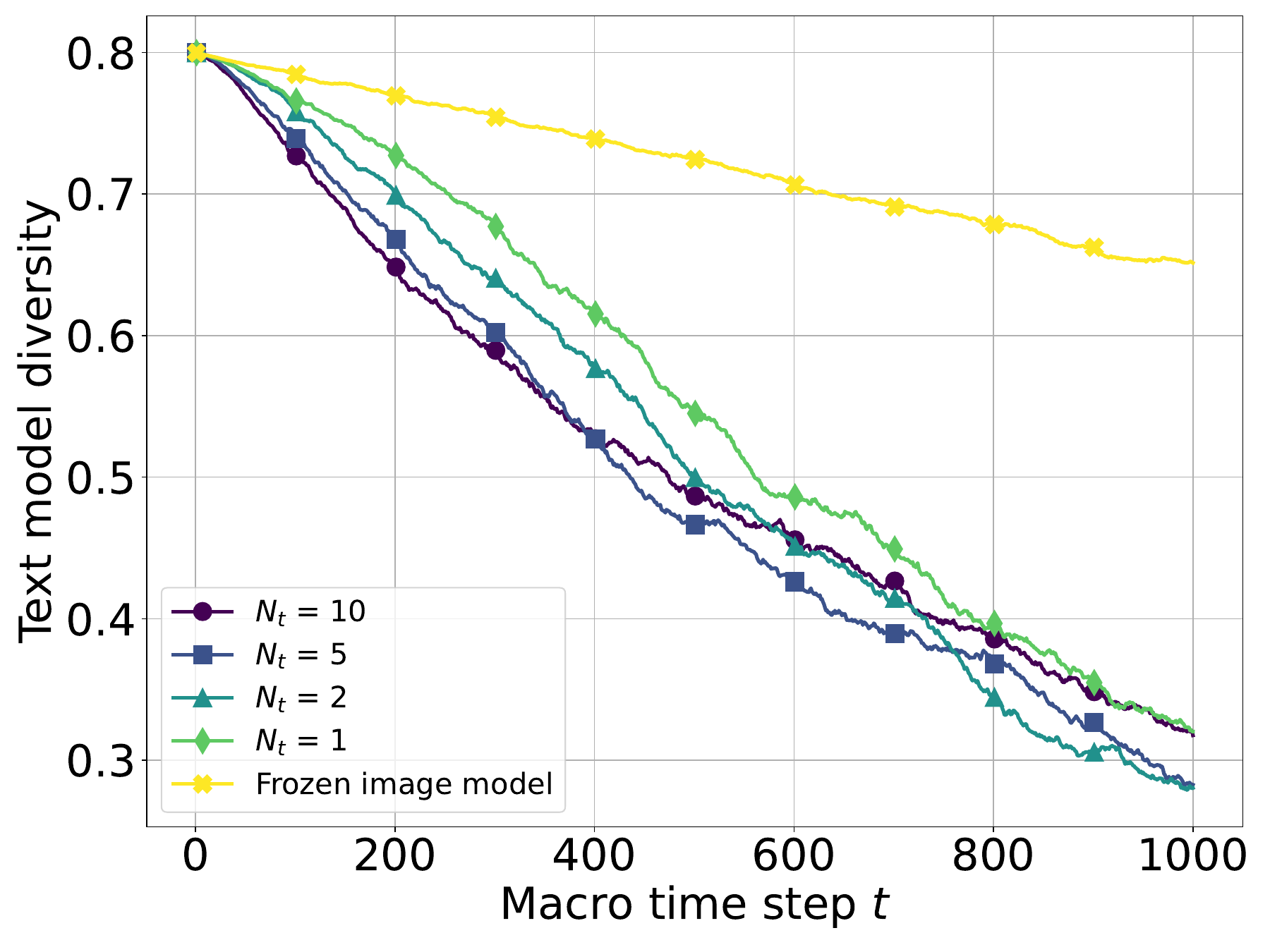}
\caption{Evolution of text model diversity under varying frequencies of image model updates. In this experiment, the image model is initialized as in~\cref{subsubsec:exp_text_model_diversity_under_the_frozen_image_model} with \(s=1\), and the text model is updated using \(N=1000\) samples per macro time step, with results averaged over \(100\) runs. Different line colors correspond to different numbers \(N_t\) of image updates performed between successive text model updates (with \(N_t=0\) indicating a frozen image model and lighter colors corresponding to smaller values). The results show that, compared with the frozen image model, more frequent image updates accelerate the collapse of the text model, in agreement with the theoretical prediction in~\cref{thm:acceleratd_text_model_collapse}.}
\label{fig:exp_acceleration_effect}
\end{figure}

\subsection{Stabilization Strategies}
\label{subsec:exp_stabilization_strategies}

In this section, we investigate two stabilization mechanisms designed to counteract the collapse observed in co-evolving generative models. Specifically, we examine (\romannumeral 1) stabilization of the text model via corpus injection and (\romannumeral 2) stabilization of the image model via user-content injection. Our experiments demonstrate that, by introducing external data into the training loops, both text diversity and image diversity can be preserved over time.

\subsubsection{Stabilization of the Text Model via Corpus Injection}
\label{subsubsec:exp_stabilization_of_the_text_model_via_corpus_injection}

To counteract the collapse of the text model observed in closed systems, we introduced a stabilization mechanism via corpus injection in~\cref{subsec:stabilization_of_the_text_model_via_corpus_injection}. In this experiment, we keep the injection probability \(\alpha=0.05\), and vary the fraction of probability reallocated during corpus injection, i.e., \(\varepsilon=0.0, 0.01, 0.02, 0.05, 0.1\). At each macro time step, with probability \(\alpha\) a fraction \(\varepsilon\) of the existing text model probability is reallocated to a newly injected text. The simulation results over \(T=10000\) macro time steps in~\cref{fig:exp_corpus_injection}, averaged over \(100\) runs, confirm that corpus injection prevents the text model diversity from collapsing entirely, thereby ensuring a strictly positive diversity level over time. These empirical findings are consistent with the theoretical guarantees provided in~\cref{thm:stabilization_text_model_via_injection}.

\begin{figure}[htb]
\centering
\includegraphics[width=0.45\linewidth]{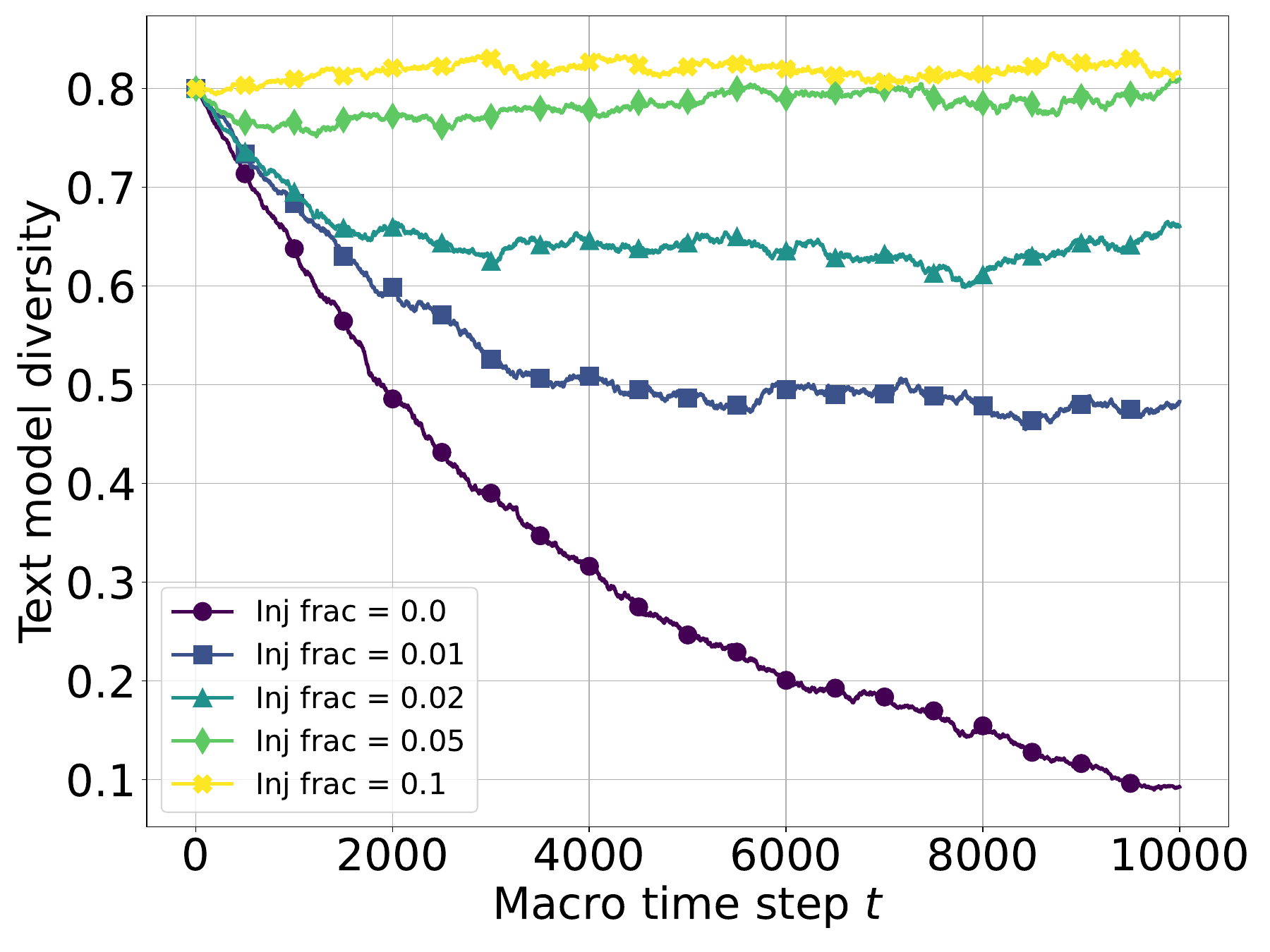}
\caption{Evolution of text model diversity under corpus injection. The plot displays the average diversity over macro time steps (with \(100\) runs) for different injection fractions \(\varepsilon\) (with fixed injection probability \(\alpha=0.05\)). Each curve corresponds to a different value of \(\varepsilon\), with colors becoming lighter as \(\varepsilon\) increases. The results illustrating that compared with the closed system (i.e., \(\varepsilon=0\)), even a small fraction of injected probability prevents the text model diversity from collapsing entirely, which are in line with the theoretical guarantees provided in~\cref{thm:stabilization_text_model_via_injection}.}
\label{fig:exp_corpus_injection}
\end{figure}

\subsubsection{Stabilization of the Image Model via User-Content Injection}
\label{subsubsec:exp_stabilization_of_the_image_model_via_user-content_injection}

To counteract the collapse of the image model observed in closed systems, we introduced a stabilization mechanism via user-content injection in~\cref{subsec:stabilization_of_the_image_model_via_user-content_injection}. In this experiment, the image model is updated by drawing \(N=1000\) samples from the current model and, in addition, by injecting \(N_0\) images from an external distribution identical to the initial image model. The text model is fixed at \(K=1\). We vary \(N_0\) (e.g., \(N_0\in\{0, 1, 10, 100, 1000\}\)) and simulate the image model evolution over \(T=10000\) macro time steps, averaging results over \(100\) independent runs. Our results demonstrate that user-content injection prevents the collapse of image model diversity and maintains bounded fidelity, and fidelity decreases as \(N_0\) increases, in accordance with the theoretical guarantees provided in~\cref{thm:stabilization_image_diversity_injection,thm:boundedness_image_fidelity_injection}.

\begin{figure}[htb]
\centering
\includegraphics[width=0.45\linewidth]{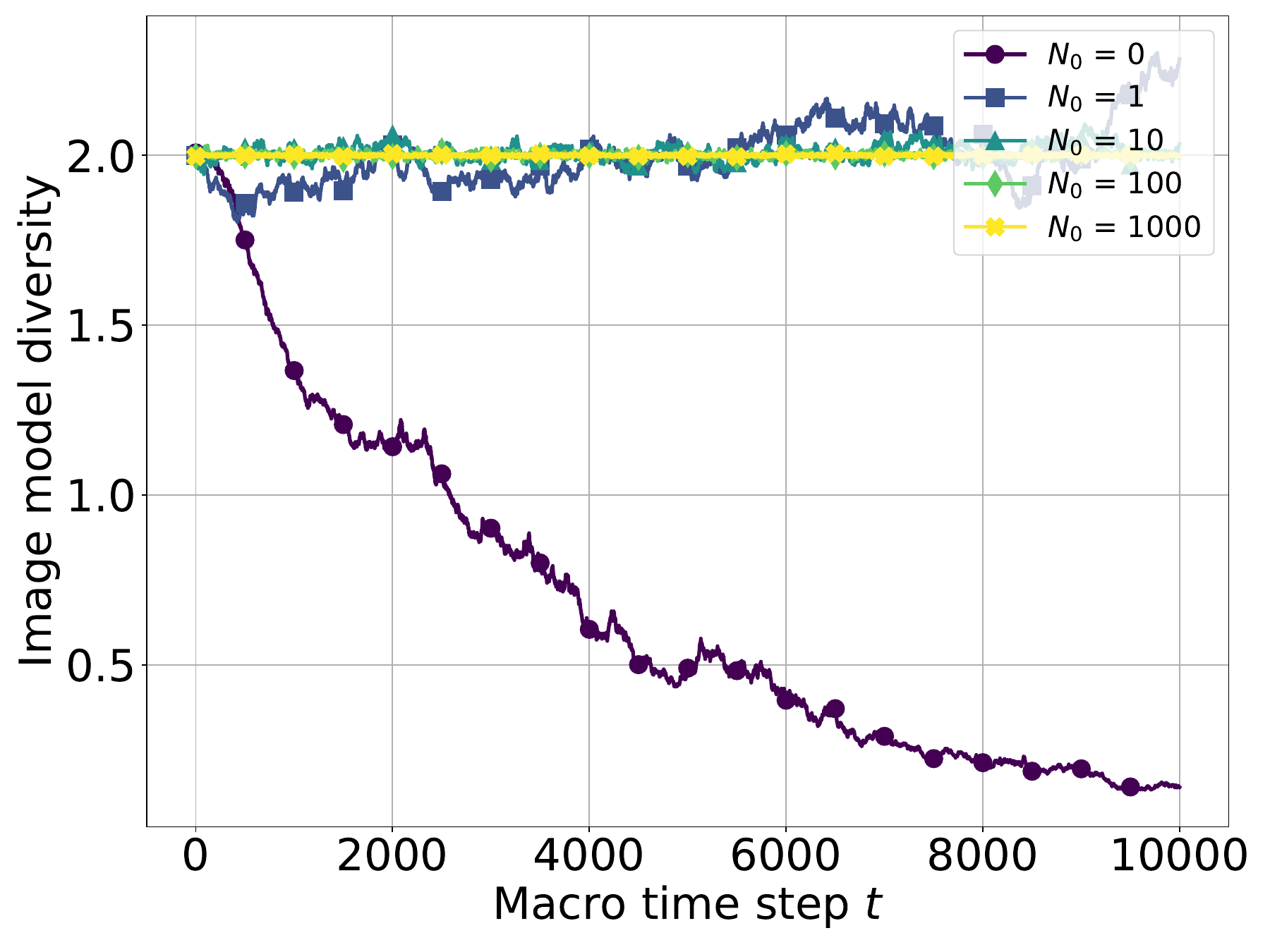}
\includegraphics[width=0.45\linewidth]{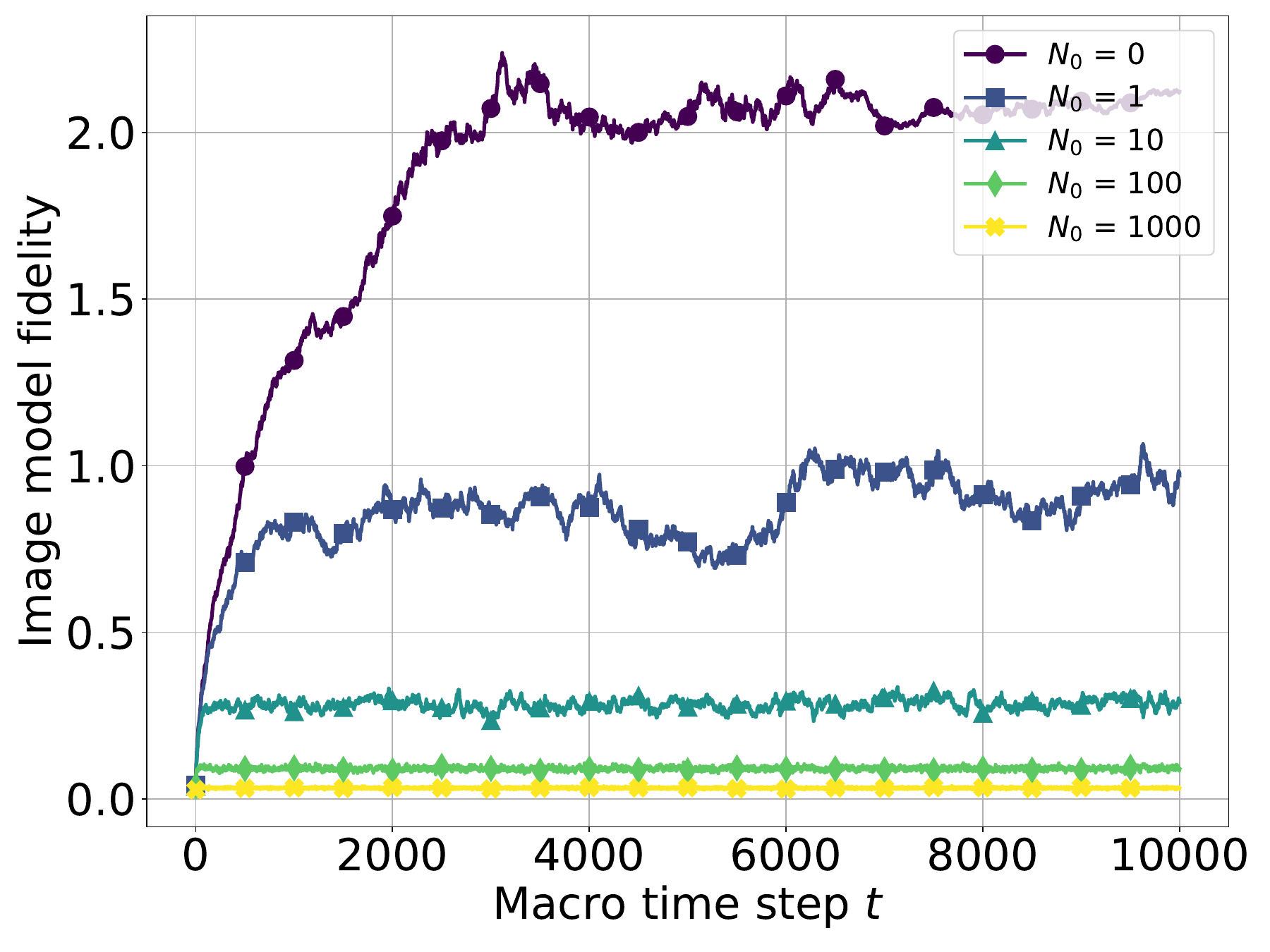}
\caption{User-content injection stabilizes the image model. Left: Evolution of image model diversity when the text model is fixed (\(K=1\)), with \(N=1000\) samples drawn per update and \(N_0\) images injected from an external distribution identical to the initial image model. Right: Corresponding evolution of image model fidelity. The curves, averaged over \(100\) independent runs, are shown for different values of \(N_0\) (\(N_0\in\{0, 1, 10, 100, 1000\}\)) with colors becoming lighter. These results demonstrate that injecting user content, even with an extremely small value (i.e., \(N_0 = 1\)) prevents the collapse of image model diversity and maintains bounded fidelity, in agreement with the theoretical guarantees provided in~\cref{thm:stabilization_image_diversity_injection,thm:boundedness_image_fidelity_injection}.}
\label{fig:exp_user_injection}
\end{figure}

\section{Conclusion}

In this paper, we analyzed a co-evolving text–image system and provided insights into their mutual influence. Our analysis shows that (\romannumeral 1) when one model is frozen, the other collapses (with text diversity decaying monotonically and image covariance contracting exponentially); (\romannumeral 2) mutual feedback accelerates collapse via both a general acceleration effect and a Matthew effect that reinforces dominant texts; and (\romannumeral 3) external interventions such as corpus injections for text and user-content injections for images can stabilize the system, preserving both diversity and fidelity. These theoretical findings highlight the importance of continual exposure to fresh data to maintain long-term model robustness.

Our analysis naturally extends to additional modalities and more complex multi-model ecosystems. For instance, one can envision incorporating audio, video, or multimodal sensor data into a comprehensive generative ecosystem, where each model's outputs inform others in a network of training loops. While the general structure is preserved, differences in data representations and cross-modal feedback signals require refinements of our techniques. In particular, the latent spaces for modalities like audio or video differ from those for text and images, and the quantification of cross-modal dependencies become more intricate. Moreover, integrating multiple generative models simultaneously may require novel stabilization strategies to prevent any single modality from collapsing. Addressing these challenges will pave the way for more robust and adaptive multimodal AI systems.

\bibliographystyle{plain}
\bibliography{auxiliaries/references}

\clearpage
\paragraph{Roadmap.} The appendix is structured as follows:

\begin{itemize}
\item \Cref{sec:math_background} reviews several mathematical tools and concepts, which cover:
\begin{itemize}
\item The Wishart distribution (\cref{subsec:the_wishart_distribution})
\item Matrix square root (\cref{subsec:matrix_square_root})
\item Martingales and the martingale convergence theorem (\cref{subsec:martingale_convergence})
\item EM algorithm applied to Gaussian Mixture Models (\cref{subsec:em_algorithm})
\end{itemize}
\item \Cref{sec:proofs_to_theorems} provides detailed proofs of all the theorems in the main text, which include:
\begin{itemize}
\item Proofs of theorems in~\cref{sec:isolating_the_dynamics_by_freezing_one_model} (\cref{subsec:proofs_of_theorems_in_isolating})
\item Proofs of theorems in~\cref{sec:image-driven_acceleration_in_text_model_collapse} (\cref{subsec:proofs_of_theorems_in_acceleration})
\item Proofs of theorems in~\cref{sec:text-driven_matthew_effect_in_image_model_collapse} (\cref{subsec:proofs_of_theorems_in_matthew})
\item Proofs of theorems in~\cref{sec:stabilization_of_the_co-evolving_system} (\cref{subsec:proofs_of_theorems_in_stabilization})
\end{itemize}
\item \Cref{sec:visualizations_of_a_typical_run} presents visualizations of text and image model output, which include:
\begin{itemize}
\item Freeze the image model and update the text model (\cref{subsec:freeze_the_image_model_and_update_the_text_model})
\item Freeze the text model and update the image model (\cref{subsec:freeze_the_text_model_and_update_the_image_model})
\item Update both the text model and the image model (\cref{subsec:update_both_the_text_model_and_the_image_model})
\item Stabilize the text model via corpus injection (\cref{subsec:stabilize_the_text_model_via_corpus_injection})
\item Stabilize the image model via user-content injection (\cref{subsec:stabilize_the_image_model_via_user-content_injection})
\end{itemize}
\end{itemize}

\appendix
\section{Mathematical Background}
\label{sec:math_background}

In this section, we review several mathematical tools and concepts used throughout our analysis. Specifically, we provide self-contained discussions on the Wishart distribution, the computation of the matrix square root and its operator concavity property, the concept of martingales and the martingale convergence theorem, and the Expectation-Maximization (EM) algorithm as applied to Gaussian Mixture Models.

\subsection{The Wishart Distribution}
\label{subsec:the_wishart_distribution}

The Wishart distribution~\cite{livan2018introduction} arises naturally in the context of sample covariance matrices. Let \(\vx_1, \vx_2, \dotsc, \vx_n \in \mathbb{R}^d\) be independent random vectors with
\begin{equation}
\vx_i \sim \mathcal{N}(\vzero, \mSigma),
\end{equation}
where \(\mSigma\in\mathbb{R}^{d\times d}\) is a positive definite matrix. Define the matrix
\begin{equation}
\mW = \sum_{i=1}^n \vx_i \vx_i^\top.
\end{equation}
Then, \(\mW\) follows a Wishart distribution with \(n\) degrees of freedom and \(\mSigma\) scale matrix, denoted as
\begin{equation}
\mW \sim \text{Wishart}_d(\mSigma, n).
\end{equation}
A fundamental property of the Wishart distribution is its expectation:
\begin{equation}
\E[\mW] = n \mSigma.
\end{equation}

\begin{proof}
By the linearity of expectation,
\begin{equation}
\E[\mW] = \sum_{i=1}^n \E[\vx_i \vx_i^\top].
\end{equation}
For each \(\vx_i \sim \mathcal{N}(\vzero, \mSigma)\), we have \(\E[\vx_i \vx_i^\top] = \mSigma\). Therefore,
\begin{equation}
\E[\mW] = n \mSigma,
\end{equation}
which completes the proof.
\end{proof}

In our analysis, we frequently consider the case where \(\mSigma = \mI\) (the identity matrix) and set \(n = N_i - 1\) (with \(N_i \geq 2\)), so that
\begin{equation}
\mW \sim \text{Wishart}_d(\mI, N_i - 1), \quad \text{and }\quad \E[\mW] = (N_i-1)\mI.
\end{equation}
This property is crucial when relating the updated sample covariance
\begin{equation}
\mSigma_{t+1}(x_i) = \mSigma_t(x_i)^{1/2} \cdot \frac{\mW}{N_i - 1} \cdot \mSigma_t(x_i)^{1/2}
\end{equation}
to the previous covariance \(\mSigma_t(x_i)\).

\subsection{Matrix Square Root}
\label{subsec:matrix_square_root}

For any positive definite matrix \(\mA\), the matrix square root \(\mA^{1/2}\) is defined as the unique positive definite matrix satisfying
\begin{equation}
\mA^{1/2} \cdot \mA^{1/2} = \mA.
\end{equation}
If \(\mA\) has the eigen-decomposition
\begin{equation}
\mA = \mQ \mLambda \mQ^\top,
\end{equation}
where \(\mQ\) is an orthogonal matrix and \(\mLambda\) is a diagonal matrix with positive entries, then the matrix square root is given by
\begin{equation}
\mA^{1/2} = \mQ \mLambda^{1/2} \mQ^\top,
\end{equation}
where \(\mLambda^{1/2}\) is the diagonal matrix whose entries are the square roots of those in \(\mLambda\).

A key property of the matrix square root is its operator concavity: for any two positive definite matrices \(\mA\) and \(\mB\) and any \(\lambda \in [0,1]\),
\begin{equation}
\big(\lambda \mA + (1 - \lambda) \mB\big)^{1/2} \succeq \lambda \mA^{1/2} + (1 - \lambda) \mB^{1/2}.
\end{equation}

\begin{proof}
Start with the basic inequality
\begin{equation}
(\mA^{1/2} - \mB^{1/2})^2 \succeq \vzero.
\end{equation}
Expanding this expression yields
\begin{equation}
\mA + \mB - \mA^{1/2}\mB^{1/2} - \mB^{1/2}\mA^{1/2} \succeq \vzero,
\end{equation}
which simplifies to
\begin{equation}
\mA^{1/2}\mB^{1/2} + \mB^{1/2}\mA^{1/2} \preceq \mA + \mB.
\end{equation}
Since \(\lambda - \lambda^2 = \lambda(1 - \lambda)\) is non-negative for \(\lambda \in [0,1]\), we have
\begin{equation}
(\lambda - \lambda^2)(\mA^{1/2}\mB^{1/2} + \mB^{1/2}\mA^{1/2}) \preceq (\lambda - \lambda^2)(\mA + \mB).
\end{equation}
Adding \(\lambda^2 \mA + (1 - \lambda)^2 \mB\) to both sides, we obtain
\begin{equation}
\lambda^2 \mA + (1 - \lambda)^2 \mB + (\lambda - \lambda^2)(\mA^{1/2}\mB^{1/2} + \mB^{1/2}\mA^{1/2}) \preceq \lambda \mA + (1 - \lambda) \mB.
\end{equation}
The left-hand side can be factored as
\begin{equation}
\big(\lambda \mA^{1/2} + (1 - \lambda) \mB^{1/2}\big)^2 \preceq \lambda \mA + (1 - \lambda) \mB.
\end{equation}
Since the square root function is operator monotone, applying it to both sides yields
\begin{equation}
\lambda \mA^{1/2} + (1 - \lambda) \mB^{1/2} \preceq \big(\lambda \mA + (1 - \lambda) \mB\big)^{1/2},
\end{equation}
which completes the proof.
\end{proof}

This property, along with Jensen's operator inequality, is crucial in our proofs when deriving bounds on expectations involving \(\mA^{1/2}\), especially when \(\mA\) is a random matrix (e.g., when \(\mA\) follows a Wishart distribution).

\subsection{Martingales and the Martingale Convergence Theorem}
\label{subsec:martingale_convergence}

A sequence of random variables \(\{X_t\}\) is called a martingale with respect to a filtration \(\{\mathcal{F}_t\}\) if it satisfies~\cite{hall2014martingale}:
\begin{enumerate}
\item \(X_t\) is \(\mathcal{F}_t\)-measurable,
\item \(\E[|X_t|] < \infty\), and
\item \(\E[X_{t+1} | \mathcal{F}_t] = X_t\) for all \(t\).
\end{enumerate}

The martingale convergence theorem~\cite{hall2014martingale} is stated as follows. Let \(\{X_t\}\) be a martingale that is uniformly integrable; that is,
\begin{equation}
\sup_t \E[|X_t|] < \infty.
\end{equation}
Then, there exists a random variable \(X_\infty\) such that \(X_t \to X_\infty\) almost surely and in \(L^1\) as \(t \to \infty\).

In our work, the sequence of probability vectors \(\{\vp_t\}\), where \(\vp_t = (p_t(x_1), \dots, p_t(x_K))\), forms a vector-valued martingale (with the filtration being \(\{\sigma(\vp_t)\}\), i.e., the \(\sigma\)-algebra generated by \(\vp_t\)) because the update rules for the text model preserve the expected value. Since each \(\vp_t\) lies in the \(K\)-simplex, which is compact, the martingale convergence theorem guarantees that \(\vp_t\) converges almost surely to a limit \(\vp_\infty\). This result is fundamental in establishing the long-term behavior of the text model.

\subsection{Expectation-Maximization Algorithm Applied to Gaussian Mixture Models}
\label{subsec:em_algorithm}

The Expectation-Maximization (EM) algorithm is an iterative method for finding maximum likelihood estimates in the presence of latent variables~\cite{mclachlan2008algorithm}. Consider a Gaussian Mixture Model (GMM) with \(K\) components, where the probability density function of an observation \(\vx \in \mathbb{R}^d\) is given by
\begin{equation}
p(\vx) = \sum_{k=1}^K \pi_k \, \mathcal{N}(\vx; \vmu_k, \mSigma_k),
\end{equation}
with mixing coefficients \(\pi_k\), means \(\vmu_k\), and covariance matrices \(\mSigma_k\). The EM algorithm proceeds by alternating between:

\begin{itemize}
\item E-step: Compute the posterior probabilities (responsibilities)
\begin{equation}
\gamma_{ik} = \frac{\pi_k \, \mathcal{N}(\vx_i; \vmu_k, \mSigma_k)}{\sum_{j=1}^K \pi_j \, \mathcal{N}(\vx_i; \vmu_j, \mSigma_j)},
\end{equation}
which represent the probability that observation \(\vx_i\) originates from component \(k\).
\item M-step: Update the parameters using these responsibilities:
\begin{equation}
\pi_k^{\text{new}} = \frac{1}{n} \sum_{i=1}^n \gamma_{ik},
\end{equation}
\begin{equation}
\vmu_k^{\text{new}} = \frac{\sum_{i=1}^n \gamma_{ik} \vx_i}{\sum_{i=1}^n \gamma_{ik}},
\end{equation}
\begin{equation}
\mSigma_k^{\text{new}} = \frac{\sum_{i=1}^n \gamma_{ik} (\vx_i - \vmu_k^{\text{new}})(\vx_i - \vmu_k^{\text{new}})^\top}{\sum_{i=1}^n \gamma_{ik}}.
\end{equation}
\end{itemize}

Our co-evolving system employs a similar alternating update scheme. The text model update computes soft assignments (analogous to the E-step) by estimating posterior probabilities based on generated images. Subsequently, the image model update re-estimates the parameters (reminiscent of the M-step) using these assignments. Although our updates are performed in a continuously evolving, stochastic setting, the conceptual similarity to the EM algorithm in Gaussian mixtures provides valuable intuition for understanding the iterative dynamics of our models.

\section{Proofs of Theorems}
\label{sec:proofs_to_theorems}

In this section, we provide detailed proofs of the theorems.

\subsection{Proofs of Theorems in Section~3}
\label{subsec:proofs_of_theorems_in_isolating}

\TextModelDiversityFrozenImageModel*
\begin{proof}
The text model is updated according to the empirical posterior estimate
\begin{equation}
p_{t+1}(x_i) = \frac{1}{N} \sum_{j=1}^{N} p_t(x_i | \vy^{(j)}),
\end{equation}
where each \(\vy^{(j)}\) is sampled from the mixture distribution
\begin{equation}
r_t(\vy) = \sum_{k=1}^{K} p_t(x_k) q(\vy | x_k).
\end{equation}
Using the definition of the posterior,
\begin{equation}
p_t(x_i | \vy^{(j)}) = Z_i(\vy^{(j)}) \coloneqq \frac{p_t(x_i) q(\vy^{(j)} | x_i)}{\sum_{k=1}^{K} p_t(x_k) q(\vy^{(j)} | x_k)}.
\end{equation}
Taking expectations over the randomness in \(\vy\), we obtain
\begin{equation}
\E_{\vy \sim r_t} [Z_i(\vy)|\vp_t] = p_t(x_i),
\end{equation}
and for the second moment,
\begin{equation}
\E_{\vy\sim r_t} [Z_i(\vy)^2|\vp_t] = p_t(x_i)^2 + \Var_{\vy\sim r_t}(Z_i(\vy)|\vp_t),
\end{equation}
with \(\Var_{\vy\sim r_t}(Z_i(\vy)|\vp_t) \geq 0\). Since the updates are computed from \(N\) independent samples, the expectation of the squared probabilities follows from the independence property:
\begin{equation}
\E[p_{t+1}(x_i)^2|\vp_t] = \frac{1}{N}\cdot\E_{\vy\sim r_t}[Z_i(\vy)^2|\vp_t] + \frac{N-1}{N}\cdot p_t(x_i)^2.
\end{equation}
Summing over all \(i\), we obtain
\begin{equation}
\E\Big[\sum_{i=1}^{K} p_{t+1}(x_i)^2\Big|\vp_t\Big] = \sum_{i=1}^{K} p_t(x_i)^2 + \frac{1}{N} \sum_{i=1}^{K} \Big(\E_{\vy\sim r_t}[Z_i(\vy)^2|\vp_t] - p_t(x_i)^2\Big).
\end{equation}
By substituting this into the definition of \(H_{t+1}(\vp_{t+1}) = 1 - \sum_{i=1}^{K} p_{t+1}(x_i)^2\), we obtain
\begin{equation}
\E[H_{t+1}(\vp_{t+1})|\vp_t] = \Big(1-\frac{1}{N}\Big)\cdot H_t(\vp_t) + \frac{1}{N}\cdot\Big(1-\sum_{i=1}^{K} \E_{\vy\sim r_t}[Z_i(\vy)^2|\vp_t]\Big).
\end{equation}
Since for each \(i\) we have \(\E[Z_i(\vy)^2|\vp_t] \geq p_t(x_i)^2\), it follows that
\begin{equation}
\E[H_{t+1}(\vp_{t+1})|\vp_t] \leq H_t(\vp_t).
\end{equation}
This proves that the diversity measure is non-increasing in expectation. Moreover, as \(t\) tends to infinity, the system converges to an equilibrium distribution \(p_\infty\) for which
\begin{equation}
\sum_{i=1}^{K}\E_{\vy\sim r_t}[Z_i(\vy)^2|\vp_{\infty}] = \sum_{i=1}^{K} p_\infty(x_i)^2.
\end{equation}
To analyze the equality condition, note that for each \(i\) we have
\begin{equation}
\E_{\vy\sim r_t}[Z_i(\vy)^2|\vp_{\infty}] = \int \frac{p_\infty(x_i)^2 q(\vy|x_i)^2}{\sum_{k=1}^{K} p_\infty(x_k) q(\vy|x_k)} \dif\vy.
\end{equation}
Applying the Cauchy--Schwarz inequality, we obtain
\begin{equation}
\int \frac{p_\infty(x_i)^2 q(\vy|x_i)^2}{\sum_{k=1}^{K} p_\infty(x_k) q(\vy|x_k)} \dif\vy \cdot \int \sum_{k=1}^{K} p_\infty(x_k) q(\vy|x_k) \dif\vy\geq \Big(\int p_\infty(x_i) q(\vy|x_i) \dif\vy\Big)^2.
\end{equation}
Since \(\int \sum_{k=1}^{K} p_\infty(x_k) q(\vy|x_k) \dif\vy = 1\) and \(\int p_\infty(x_i) q(\vy|x_i) \dif\vy = p_\infty(x_i)\), it follows that
\begin{equation}
\int \frac{p_\infty(x_i)^2 q(\vy|x_i)^2}{\sum_{k=1}^{K} p_\infty(x_k) q(\vy|x_k)} \dif\vy \geq p_\infty(x_i)^2.
\end{equation}
Summing over \(i\) yields
\begin{equation}
\sum_{i=1}^{K}\E_{\vy\sim r_t}[Z_i(\vy)^2|\vy_{\infty}] \geq \sum_{i=1}^{K} p_\infty(x_i)^2.
\end{equation}
Equality holds if and only if equality holds in the Cauchy--Schwarz inequality for each \(i\), which requires that there exists a constant \(c_i\) (depending on \(i\)) such that
\begin{equation}
p_\infty(x_i) q(\vy|x_i) = c_i \Big(\sum_{k=1}^{K} p_\infty(x_k) q(\vy|x_k)\Big)
\end{equation}
for almost every \(\vy\). Integrating both sides over \(\vy\) gives
\begin{equation}
p_\infty(x_i) = c_i,
\end{equation}
so that the equality condition becomes
\begin{equation}
p_\infty(x_i) q(\vy|x_i) = p_\infty(x_i) \Big(\sum_{k=1}^{K} p_\infty(x_k) q(\vy|x_k)\Big).
\end{equation}
For any \(i\) with \(p_\infty(x_i)>0\) this implies
\begin{equation}
q(\vy|x_i) = \sum_{k=1}^{K} p_\infty(x_k) q(\vy|x_k).
\end{equation}
In other words, equality is attained if and only if the conditional distributions \(q(\vy|x_i)\) are identical for all \(i\) with \(p_\infty(x_i)>0\). This yields the two equilibrium cases: either the text model collapses to a one-hot vector (in which \(H_{\infty}(\vp_\infty)=0\)) or the image model assigns identical distributions \(q(\vy|x_i)\) to all texts. 
\end{proof}

\ImageModelCovarianceFrozenTextModel*
\begin{proof}
For a fixed text \(x_i\) and iteration \(t\) at which \(N_i\geq 2\), note that if
\begin{equation}
\vy^{(1)},\dotsc,\vy^{(N_i)} \sim \mathcal{N}\big(\vmu_t(x_i),\mSigma_t(x_i)\big),
\end{equation}
then it is well known (see, e.g., properties of the Wishart distribution~\cite{livan2018introduction}) that the sample covariance
\begin{equation}
\mSigma_{t+1}(x_i) = \frac{1}{N_i-1}\sum_{j=1}^{N_i}\big(\vy^{(j)}-\overline{\vy}\big)\big(\vy^{(j)}-\overline{\vy}\big)^\top
\end{equation}
where \(\overline{\vy} = (\vy^{(1)}+\dotsb+\vy^{(N_i)})/N_i\) satisfies
\begin{equation}
\mSigma_{t+1}(x_i) = \mSigma_t(x_i)^{1/2}\cdot\frac{\mW}{N_i-1}\cdot\mSigma_t(x_i)^{1/2}.
\end{equation}
Here, \(\mW \sim \text{Wishart}_d(\mI, N_i-1)\) is a \(d\)-dimensional Wishart random matrix with \((N_i-1)\) degrees of freedom and identity scale matrix\footnote{\label{fnote:wishart_background} Refer to~\cref{sec:math_background} for a brief introduction to the Wishart distribution.}. It is a standard fact that
\begin{equation}
\E\Big[\dfrac{\mW}{N_i-1}\Big] = \mI.
\end{equation}
Note that the function \(\mX\mapsto \mX^{1/2}\) is operator concave on the set of positive definite matrices\footnote{\label{fnote:matrix_square_root_background} Refer to~\cref{sec:math_background} for definition of matrix square root and the proof to this operator concavity.} By Jensen's operator inequality we have that the strict inequality holds, i.e.,
\begin{equation}
\E\Big[\Big(\dfrac{\mW}{N_i-1}\Big)^{1/2}\Big] \prec \mI.
\end{equation}
Therefore, the largest eigenvalue of \(\E[(\mW/(N_i-1))^{1/2}]\), say \(\rho(N_i)\) (depending on \(N_i\)), is smaller than \(1\). Consequently, we have
\begin{equation}
\begin{aligned}
\E\big[\tr\big(\mSigma_{t+1}^{1/2}(x_i)\big)\big|\mSigma_{t}(x_i), N_i\big] &= \E\Big[\tr\Big(\mSigma_t(x_i)^{1/2}\cdot\frac{\mW}{N_i-1}\cdot\mSigma_t(x_i)^{1/2}\Big)^{1/2}\Big|\mSigma_t(x_i), N_i\Big]\\
&=\E\Big[\sum_{j=1}^{d}\sigma_j\Big(\Big(\dfrac{\mW}{N_i-1}\Big)^{1/2}\cdot\mSigma_t(x_i)^{1/2}\Big)\Big|\mSigma_t(x_i), N_i\Big]\\
&\leq \E\Big[\sum_{j=1}^{N}\sigma_1\Big(\Big(\dfrac{\mW}{N_i-1}\Big)^{1/2}\Big)\cdot\sigma_j\big(\mSigma_{t}(x_i)^{1/2}\big)\Big|\mSigma_t(x_i), N_i\Big]\\
&\leq \rho(N_i)\cdot\E\big[\tr\big(\mSigma_{t}(x_i)^{1/2}\big)\big|\mSigma_t(x_i), N_i\big],
\end{aligned}
\end{equation}
where \(\sigma_j(\cdot)\) represents the \(j\)th largest singular value of a matrix. Since the probability of \(N_i\geq 2\) is positive, the expectation of \(\rho(N_i)\) is strictly less than \(1\). Applying the total expectation formula and iterating the inequality, we obtain
\begin{equation}
\E[D_t(x_i)] = \E\big[\tr\big(\mSigma_t(x_i)^{1/2}\big)\big]\leq C\rho^t,
\end{equation}
where \(C = \E\big[\tr\big(\mSigma_0(x_i)^{1/2}\big)\big]\) and \(\rho = \E_{N_i\sim\text{Binomial}(N, p_i)}[\rho(N_i)]<1\).
\end{proof}

\ImageModelMeanFrozenTextModel*
\begin{proof}
For a fixed text \(x_i\), and iteration \(t\) at which \(N_i\geq 1\), note that if
\begin{equation}
\vy^{(1)},\dotsc,\vy^{(N_i)} \sim \mathcal{N}\big(\vmu_t(x_i),\mSigma_t(x_i)\big),
\end{equation}
then the sample mean satisfies
\begin{equation}
\vmu_{t+1}(x_i)\sim \mathcal{N}\Big(\vmu_t(x_i), \dfrac{1}{N_i}\cdot\mSigma_t(x_i)\Big). 
\end{equation}
Thus, conditioning on \(\vmu_t(x_i)\) and \(N_i\), the difference between consecutive mean vectors follows
\begin{equation}
\vmu_{t+1}(x_i) - \vmu_t(x_i)\sim\mathcal{N}\Big(\vzero, \dfrac{1}{N_i}\cdot\mSigma_t(x_i)\Big).
\end{equation}
From the properties of the Gaussian distribution, we obtain
\begin{equation}
\E[\|\vmu_{t+1}(x_i) - \vmu_t(x_i)\|_2^2|\vmu_t(x_i), \mSigma_t(x_i), N_i] = \dfrac{1}{N_i}\cdot\E\big[\tr\big(\mSigma_t(x_i)\big)\big|N_i\big],
\end{equation}
Take expectation over \(N_i\sim\text{Binomial}(N, p_i)\), we derive
\begin{equation}
\begin{aligned}
\E[\|\vmu_{t+1}(x_i) - \vmu_t(x_i)\|_2^2|\vmu_t(x_i), \mSigma_t(x_i)]
&\leq \sum_{N_i=1}^{N}\dfrac{1}{N_i}\binom{N}{N_i}p_i^{N_i}(1-p_i)^{N-N_i}\cdot C^2\rho^{2t}\\
&\leq \sum_{N_i=1}^{N}\dfrac{2}{N_i+1}\binom{N}{N_i}p_i^{N_i}(1-p_i)^{N-N_i}\cdot C^2\rho^{2t}\\
&\leq \dfrac{2C^2\rho^{2t}}{(N+1)p_i},
\end{aligned}
\end{equation}
where the first inequality follows from Cauchy--Schwarz inequality. By the total expectation formula, it follows that
\begin{equation}
\E[\|\vmu_{t+1}(x_i) - \vmu_t(x_i)\|_2^2]\leq  \dfrac{2C^2\rho^{2t}}{(N+1)p_i}.
\end{equation}
Applying the Cauchy–Schwarz inequality gives
\begin{equation}
\E[\|\vmu_{t+1}(x_i) - \vmu_t(x_i)\|_2]\leq  \dfrac{\sqrt{2}C\rho^{t}}{\sqrt{(N+1)p_i}}.
\end{equation}
Summing over \(t=0\) to \(\infty\), we conclude that the expected fidelity
\begin{equation}
\E[F_t(x_i)] = \E[\|\vmu_{\infty}(x_i) - \vmu_0(x_i)\|_2]\leq\dfrac{\sqrt{2}C}{\sqrt{(N+1)p_i}\cdot(1-\rho)},
\end{equation}
which completes the proof.
\end{proof}

\subsection{Proofs of Theorems in Section~4}
\label{subsec:proofs_of_theorems_in_acceleration}

\ArbitrarilySmallConvergenceLargeCovariance*
\begin{proof}
Recall that in~\cref{thm:text_model_diversity_frozen_image_model} the recursion is given by
\begin{equation}
\E[H_{t+1}(\vp_{t+1})|\vp_t] = \Big(1-\frac{1}{N}\Big)\cdot H_t(\vp_t) + \frac{1}{N}\cdot\Big(1-\sum_{i=1}^{K}\E_{\vy\sim r_t}[Z_i(\vy)^2]\Big),
\end{equation}
where
\begin{equation}
Z_i(\vy) = \frac{p_t(x_i)q(\vy|x_i)}{\sum_{k=1}^{K} p_t(x_k)q(\vy|x_k)}.
\end{equation}
Defining
\begin{equation}
\Delta_t \coloneqq H_t(\vp_t) - \E[H_{t+1}(\vp_{t+1})|\vp_t] = \frac{1}{N}\cdot\Big(H_t(\vp_t) - \Big(1-\sum_{i=1}^{K}\E_{\vy\sim r_t}[Z_i(\vy)^2]\Big)\Big),
\end{equation}
we show that for any \(\varepsilon>0\) there exists a family of conditional distributions \(q(\vy|x_i)\) such that
\begin{equation}
\Delta_t < \varepsilon H_t(\vp_t).
\end{equation}
In the limiting scenario where the \(q(\vy|x_i)\)'s are independent of \(i\), we would have \(\E_{\vy\sim r_t}[Z_i(\vy)^2] = p_t(x_i)^2\), so that
\begin{equation}
1-\sum_{i=1}^{K}\E_{\vy\sim r_t}[Z_i(\vy)^2] = 1-\sum_{i=1}^{K} p_t(x_i)^2 = H_t(\vp_t).
\end{equation}
In that case, \(\Delta_t=0\). Now, consider choosing 
\begin{equation}
q(\vy|x_i) = \mathcal{N}(\vy; \vmu_i,\sigma^2 \mI),
\end{equation}
and then taking \(\sigma\) arbitrarily large. In the limit as \(\sigma\to\infty\) the densities \(q(\vy|x_i)\) become almost flat (up to normalization) over \(\vy\). Consequently, the ratio defining \(Z_i(\vy)\) becomes nearly independent of \(\vy\) and approaches
\begin{equation}
Z_i(\vy) \xrightarrow[]{\sigma\rightarrow\infty} \frac{p_t(x_i)}{\sum_{k=1}^{K} p_t(x_k)} = p_t(x_i).
\end{equation}
By changing the variable \(\vy\mapsto\sigma\vy\) and applying the dominated convergence theorem, we obtain
\begin{equation}
\lim_{\sigma\to\infty} \E_{\vy\sim r_t}[Z_i(\vy)^2] = p_t(x_i)^2.
\end{equation}
Hence,
\begin{equation}
\lim_{\sigma\to\infty} \Big(1-\sum_{i=1}^{K}\E_{\vy\sim r_t}[Z_i(\vy)^2]\Big) = H_t(\vp_t).
\end{equation}
Thus, for any \(\delta>0\) (in particular, choose \(\delta=N\varepsilon H_t(\vp_t)\)) there exists \(\sigma_0>0\) such that for all \(\sigma>\sigma_0\) we have
\begin{equation}
1-\sum_{i=1}^{K}\E_{\vy\sim r_t}[Z_i(\vy)^2] - H_t(\vp_t) < N\varepsilon H_t(\vp_t).
\end{equation}
It then follows that
\begin{equation}
\Delta_t = \frac{1}{N}\cdot\Big(H_t(\vp_t) - \Big(1-\sum_{i=1}^{K}\E_{\vy\sim r_t}[Z_i(\vy)^2]\Big)\Big) < \varepsilon H_t(\vp_t).
\end{equation}
This completes the proof.
\end{proof}

\AcceleratdTextModelCollapse*
\begin{proof}
We assume, without loss of generality, that \(\vp_t > \vzero\), or equivalently, we restrict our analysis to texts with nonzero probabilities. By~\cref{thm:text_model_diversity_frozen_image_model}, the diversity reduction in the text model from macro time step \(t\) to \((t+1)\) is given by
\begin{equation}
\begin{aligned}
\Delta_t &= H_t(\vp_t) - \E\big[H_{t+1}(\vp_{t+1}) \big| \{p_t(x_k), \mSigma_{t,0}(x_k)\colon 1\le k\le K\}\big]\\
&= \frac{1}{N}\Big(\sum_{i=1}^{K} \E_{\vy\sim r_t}\big[Z_i(\vy)^2\big] - \sum_{i=1}^{K} p_t(x_i)^2\Big),
\end{aligned}
\end{equation}
where the expectation \(\E_{\vy\sim r_t}[\cdot]\) is with respect to the mixture density
\begin{equation}
r_t(\vy) = \sum_{k=1}^{K} p_t(x_k) q_{t,N_t}(\vy|x_k),
\end{equation}
with
\begin{equation}
q_{t,N_t}(\vy|x_i)=\mathcal{N}(\vy;\vmu_{t,N_t}(x_i),\mSigma_{t,N_t}(x_i))
\end{equation}
denoting the image model after \(N_t\) inner updates. For brevity, hereafter we write \(q_t(\vy|x_i)\) in place of \(q_{t,N_t}(\vy|x_i)\). For each text \(x_i\), we lower-bound \(\E_{\vy\sim r_t}[Z_i(\vy)^2]\) by restricting the integration to a high-probability region of \(q_t(\vy|x_i)\). Define the Mahalanobis ball
\begin{equation}
\mathcal{B}_M(\vmu_t(x_i), r_0) \coloneqq \big\{\vy\in\mathbb{R}^d \colon (\vy-\vmu_t(x_i))^\top \mSigma_{t}(x_i)^{-1} (\vy-\vmu_t(x_i)) \le r_0^2\big\}.
\end{equation}
Then for any prescribed \(\varepsilon'\in(0,1)\) there exists \(r_0>0\) (which can be independent of \(\varepsilon'\)) such that
\begin{equation}
\int_{\mathcal{B}_M(\vmu_t(x_i), r_0)} q_t(\vy|x_i) \dif\vy \ge 1-\varepsilon'.
\end{equation}
Now, for \(\vy\in\mathcal{B}_M(\vmu_t(x_i), r_0)\), by definition of the Mahalanobis ball we have
\begin{equation}
\|\vy-\vmu_t(x_i)\|_2 \le r_0\cdot\lambda_{\max}\big(\mSigma_{t}(x_i)^{1/2}\big).
\end{equation}
By the assumption on the exponential decay of the covariance, if \(N_t\) is large enough then
\begin{equation}
\E\big[\tr\big(\mSigma_{t}(x_i)^{1/2}\big) \big| \{p_t(x_k),\mSigma_{t,0}(x_k)\}\big] \le \tr(\mSigma_{t,0}(x_i)^{1/2})\cdot\rho^{N_t}.
\end{equation}
Thus, for sufficiently large \(N_t\), we may ensure that
\begin{equation}
\|\vy-\vmu_t(x_i)\|_2 \le \frac{\Gamma}{2}
\end{equation}
with probability approaching \(1\) as \(N_t\) increases. We may absorb such a high probability event into the choice of \(\varepsilon'\) for brevity. Because of the mean separation assumption, for any \(x_j\) with \(j\neq i\) we have
\begin{equation}
\|\vmu_t(x_j)-\vmu_t(x_i)\|_2 \ge \Gamma.
\end{equation}
Thus, for all \(\vy\in\mathcal{B}_M(\vmu_t(x_i), r_0)\) it follows by the triangle inequality that
\begin{equation}
\|\vy-\vmu_t(x_j)\|_2 \ge \|\vmu_t(x_j)-\vmu_t(x_i)\|_2 - \|\vy-\vmu_t(x_i)\|_2 \ge \Gamma - \frac{\Gamma}{2} = \frac{\Gamma}{2}.
\end{equation}
From standard Gaussian density properties, one can show that for all large enough \(N_t\),
\begin{equation}
q_t(\vy|x_j) \le \varepsilon'\cdot q_t(\vy|x_i),\quad\text{for all } j\neq i \text{ and for all } \vy\in\mathcal{B}_M(\vmu_t(x_i), r_0).
\end{equation}
Now, recall the definition of the posterior probability
\begin{equation}
Z_i(\vy) = \frac{p_t(x_i)q_t(\vy|x_i)}{\sum_{k=1}^K p_t(x_k)q_t(\vy|x_k)}.
\end{equation}
Using the above bound for \(j\neq i\), for any \(\vy\in\mathcal{B}_M(\vmu_t(x_i), r_0)\) we have
\begin{equation}
\sum_{k\neq i} p_t(x_k)q_t(\vy|x_k) \le \varepsilon'\sum_{k\neq i} p_t(x_k)q_t(\vy|x_i)
= \varepsilon'(1-p_t(x_i))q_t(\vy|x_i).
\end{equation}
Thus,
\begin{equation}
Z_i(\vy) \ge \frac{p_t(x_i)q_t(\vy|x_i)}{p_t(x_i)q_t(\vy|x_i) + \varepsilon'(1-p_t(x_i))q_t(\vy|x_i)}
=\frac{p_t(x_i)}{p_t(x_i) + \varepsilon'(1-p_t(x_i))}.
\end{equation}
Define
\begin{equation}
L\big(p_t(x_i)\big) \coloneqq \frac{p_t(x_i)}{p_t(x_i) + \varepsilon'(1-p_t(x_i))}.
\end{equation}
Note that for each \(i\) and for \(\vy\in\mathcal{B}_M(\vmu_t(x_i), r_0)\) we have
\begin{equation}
Z_i(\vy)^2 \ge L\big(p_t(x_i)\big)^2.
\end{equation}
Now, since the mixture density is defined by
\begin{equation}
r_t(\vy)=\sum_{k=1}^{K} p_t(x_k)q_t(\vy|x_k).
\end{equation}
Taking the expectation of \(Z_i(\vy)^2\) with respect to \(\vy\sim r_t\) gives
\begin{equation}
\E_{\vy\sim r_t}[Z_i(\vy)^2]
=\sum_{k=1}^{K} p_t(x_k)\cdot\E_{q_t(\vy|x_k)}[Z_i(\vy)^2].
\end{equation}
Since the bound \(Z_i(\vy)^2\ge L\big(p_t(x_i)\big)^2\) holds only when \(\vy\in \mathcal{B}_M(\vmu_t(x_i), r_0)\), we restrict to this region. By construction, the Mahalanobis ball \(\mathcal{B}_M(\vmu_t(x_i), r_0)\) has probability at least \(1-\varepsilon'\) under \(q_t(\vy|x_i)\). Hence,
\begin{equation}
\E_{q_t(\vy|x_i)}[Z_i(\vy)^2] \ge (1-\varepsilon')L\big(p_t(x_i)\big)^2.
\end{equation}
Since the mixture density \(r_t(\vy)\) assigns weight \(p_t(x_i)\) to the component \(q_t(\vy|x_i)\), we deduce that
\begin{equation}
\E_{\vy\sim r_t}[Z_i(\vy)^2] \ge p_t(x_i)\cdot (1-\varepsilon')\cdot L\big(p_t(x_i)\big)^2.
\end{equation}
Finally, summing this bound over all \(i\) yields
\begin{equation}
\sum_{i=1}^{K}\E_{\vy\sim r_t}\big[Z_i(\vy)^2\big] \ge \sum_{i=1}^{K} p_t(x_i)\cdot(1-\varepsilon')\cdot L\big(p_t(x_i)\big)^2.
\end{equation}
One may verify that there exists a constant \(\beta>0\) (which depends on \(\varepsilon'\) and the range of \(p_t(x_i)\) values) such that
\begin{equation}
p_t(x_i)\cdot (1-\varepsilon')\cdot L\big(p_t(x_i)\big)^2 \ge p_t(x_i)^2 + \beta p_t(x_i)\big(1-p_t(x_i)\big).
\end{equation}
Summing over \(i\) and using the identity
\begin{equation}
\sum_{i=1}^{K} p_t(x_i)\cdot\big(1-p_t(x_i)\big)= 1-\sum_{i=1}^{K}p_t(x_i)^2= H_t(\vp_t),
\end{equation}
we obtain
\begin{equation}
\sum_{i=1}^{K}\E_{\vy\sim r_t}[Z_i(\vy)^2] \ge \sum_{i=1}^{K} p_t(x_i)^2 + \beta H_t(\vp_t).
\end{equation}
Substituting this back into the expression for \(\Delta_t\),
\begin{equation}
\Delta_t \ge \frac{\beta}{N}H_t(\vp_t).
\end{equation}
In our argument, the constant \(\beta\) depends on \(\varepsilon'\) and on the behavior of the function \(L(p)=p/(p+\varepsilon'(1-p))\) for \(p\in(0,1]\). In particular, if the minimum of \(p_t(x_i)\) is bounded away from \(0\), then \(\beta\) can be taken as a positive constant independent of \(i\). For large \(N_t\) (i.e. when the covariances are sufficiently small), one can make \(\beta\) arbitrarily close to \(1\). This leads to the desired conclusion that
\begin{equation}
\Delta_t > \frac{1-\varepsilon}{N}\cdot H_t(\vp_t)
\end{equation}
for any prescribed \(\varepsilon>0\).
\end{proof}

\subsection{Proofs of Theorems in Section~5}
\label{subsec:proofs_of_theorems_in_matthew}

\DiffConvergenceRateImageModel*
\begin{proof}
For a fixed text \(x_i\), from the proof of~\cref{thm:image_model_covariance_frozen_text_model}, we know that
\begin{equation}
\rho = \E_{N_i\sim\text{Binomial}(N, p_i)}[\rho(N_i)],
\end{equation}
where \(\rho(N_i)\) is the largest eigenvalue of \(\E\big[(\mW/(N_i-1))^{1/2}\big]\). Here \(\mW\sim\text{Wishart}_{d}(\mI, N_i-1)\) is a \(d\)-dimensional Wishart random matrix with \((N_i-1)\) degrees of freedom and identity scale matrix. For \(N_i\leq 1\), \(\rho(N_i)\) is set to \(1\).

We now compute \(\rho(N_i)\) for \(N_i\geq 2\). The Wishart distribution with scale matrix \( \mI_d \) is invariant under orthogonal conjugation, implying that any function \( f(\mW) \) that commutes with orthogonal transformations yields an expected value proportional to the identity matrix. Specifically, since the matrix square root satisfies \( (\mQ\mW\mQ^\top)^{1/2} = \mQ\mW^{1/2}\mQ^\top \) for any orthogonal matrix \( \mQ \), it follows that \( \E[\mW^{1/2}] = \alpha \mI_d \) for some scalar \(\alpha>0\).

Since the square-root function is Fr\'echet differentiable on the cone of positive definite matrices, we can expand
\begin{equation}
f(\mW) = \mW^{1/2}
\end{equation}
around its mean \((N_i-1)\mI_d\). Writing
\begin{equation}
\mW = (N_i-1)\mI_d + \mE,
\end{equation}
with \(\mE = \mW - (N_i-1)\mI_d\) (and \(\E[\mE]=0\)), the Taylor expansion up to second order is
\begin{equation}
\mW^{1/2} = \big((N_i-1)\mI_d + \mE\big)^{1/2} = \sqrt{N_i-1}\mI_d + \frac{1}{2\sqrt{N_i-1}}\mE - \frac{1}{8(N_i-1)^{3/2}}\mE^2 + \mR,
\end{equation}
where \(\mR\) is the remainder term that involves the third Fr\'echet derivative. Taking expectations and noting that \(\E[\mE] = \vzero\) we obtain
\begin{equation}
\E[\mW^{1/2}] = \sqrt{N_i-1}\mI_d - \frac{1}{8(N_i-1)^{3/2}}\E[\mE^2] + \E[\mR].
\end{equation}
Because the Wishart distribution is invariant under orthogonal conjugation, the second moment \(\E[\mE^2]\) is proportional to the identity. In fact, one can show that~\cite{gupta2018matrix}
\begin{equation}
\E[\mE^2] = (d+1)(N_i-1)\mI_d.
\end{equation}
Substituting this into the expansion gives
\begin{equation}
\E[\mW^{1/2}] = \sqrt{N_i-1}\mI_d - \frac{d+1}{8\sqrt{N_i-1}}\mI_d + \E[\mR].
\end{equation}
Dividing by \(\sqrt{N_i-1}\) we obtain
\begin{equation}
\E\Big[\Big(\frac{\mW}{N_i-1}\Big)^{1/2}\Big] = \mI_d - \frac{d+1}{8(N_i-1)}\mI_d + \frac{\E[\mR]}{\sqrt{N_i-1}}.
\end{equation}
Since the matrix on the left is a scalar multiple of the identity, its largest eigenvalue is exactly the scalar factor in front of \(\mI_d\). That is,
\begin{equation}
\rho(N_i) = 1 - \frac{d+1}{8(N_i-1)} + \varepsilon,
\end{equation}
where the error \(\varepsilon\) arises from the remainder term.

We then derive the bound of \(\varepsilon\). The third derivative of \(f(x)=\sqrt{x}\) is
\begin{equation}
f'''(x)=\frac{3}{8}x^{-5/2}.
\end{equation}
In the matrix case, the third Fr\'echet derivative \(D^3\sqrt{\mA}\) has norm of order \(O\big((N_i-1)^{-5/2}\big)\), while \(\|\mE\|\) is of magnitude \(O\big((N_i-1)^{1/2}\big)\). An integral form for the remainder is given by
\begin{equation}
\mR = \int_0^1 \frac{(1-t)^2}{2}D^3\sqrt{(N_i-1)\mI_d+t\mE}[\mE,\mE,\mE]\dif t.
\end{equation}
Taking norms, we obtain
\begin{equation}
\|\mR\|_2 \leq \frac{1}{6}\sup_{t\in[0,1]}\big\lVert D^3\sqrt{(N_i-1)\mI_d+t\mE]}\big\rVert_{\mathrm{op}}\cdot\|\mE\|_2^3.
\end{equation}
Since \(\big\lVert D^3\sqrt{(N_i-1)\mI_d+t\mE]}\big\rVert_{\mathrm{op}}\) is \(O\big((N_i-1)^{-5/2}\big)\) and \(\|\mE\|_2^3\) is \(O\big((N_i-1)^{3/2}\big)\), we deduce that
\begin{equation}
\dfrac{\|\mR\|}{\sqrt{N_i-1}} = O\Big(\frac{1}{(N_i-1)^{3/2}}\Big).
\end{equation}
Thus, there exists a constant \(K>0\) (depending on \(d\)) such that
\begin{equation}
\Big|\rho(N_i) - \Big(1-\frac{d+1}{8(N_i-1)}\Big)\Big|\le \frac{K}{(N_i-1)^{3/2}}.
\end{equation}
In summary, for \(N_i\geq 2\), we have shown via the Taylor expansion that
\begin{equation}
\rho(N_i) = 1 - \frac{d+1}{8(N_i-1)} + O\Big(\frac{1}{(N_i-1)^{3/2}}\Big).
\end{equation}

Finally, we have that
\begin{equation}
\begin{aligned}
\rho &= \E_{N_i\sim\text{Binomial}(N, p_i)}[\rho(N_i)]\\
&= \sum_{N_i=0}^{N}\rho(N_i)\binom{N}{N_i}p_i^{N_i}(1-p_i)^{N-N_i}\\
&=  \sum_{N_i=0}^{N}\Big(1 - \frac{d+1}{8(N_i-1)}\Big)\binom{N}{N_i}p_i^{N_i}(1-p_i)^{N-N_i} + O\Big(\frac{1}{(N-1)^{3/2}}\Big)\\
&\approx 1-\dfrac{d+1}{8(N+1)p_i},
\end{aligned}
\end{equation}
which completes the proof.
\end{proof}

\MatthewEffect*
\begin{proof}
By the assumed convergence rate in~\cref{thm:diff_convergence_rate_image_model}, we have for each \(x_i\),
\begin{equation}
\rho(x_i) \approx 1-c\cdot p_t(x_i)^{-1},
\end{equation}
where we set \(c \coloneqq (d+1)/(8(N+1))\). Hence, the ratio of convergence for the dominant text \(x_1\) and the rarest text \(x_K\) is 
\begin{equation}
\dfrac{\rho(x_1)}{\rho(x_K)} = \dfrac{1-c\cdot p_t(x_1)^{-1}}{1-c\cdot p_t(x_K)^{-1}} \geq 1+c\cdot\big(p_t(x_K)^{-1}-p_t(x_1)^{-1}\big),
\end{equation}
which can be shown via a first-order Taylor expansion. We now recast and relax the optimization problem as
\begin{equation}
\begin{aligned}
\min\quad & p_t(x_K)^{-1}-p_t(x_1)^{-1},\\
\text{s.t.}\quad & p_t(x_1)^2+\dotsb+p_t(x_K)^2 \leq 1-\varepsilon,\\
& p_t(x_1)+\dotsb+p_t(x_K)=1,\\
& p_t(x_1)\geq \dotsb\geq p_t(x_K)>0.
\end{aligned}
\end{equation}
By convexity of the square function, for fixed \(p_t(x_1)\) the sum \(\sum_{i=2}^{K}p_t(x_i)^2\) is minimized (and hence the gap between the reciprocal of the smallest and the largest probabilities is maximized) when the remaining \((K-1)\) probabilities are equal. Thus, set
\begin{equation}
p_t(x_1)=1-\delta,\quad p_t(x_2)=\dotsb=p_t(x_K)=\frac{\delta}{K-1},
\end{equation}
with \(\delta\in(0, 1)\) given by the solution to
\begin{equation}
(1-\delta)^2+(K-1)\cdot\Big(\frac{\delta}{K-1}\Big)^2 = (1-\delta)^2+\frac{\delta^2}{K-1}=1-\varepsilon.
\end{equation}
This quadratic equation in \(\delta\) has solution
\begin{equation}
\delta = \dfrac{K-1}{K}\cdot\bigg(1-\sqrt{1-\dfrac{K\varepsilon}{K-1}}\bigg),
\end{equation}
where we take the minus sign in the quadratic formula so that \(\delta\in(0, 1)\). With this choice we have
\begin{equation}
p_t(x_1)^{-1}=\frac{1}{1-\delta},\quad p_t(x_K)^{-1}=\frac{K-1}{\delta}.
\end{equation}
Thus,
\begin{equation}
\begin{aligned}
p_t(x_K)^{-1}-p_t(x_1)^{-1} &= \frac{K-1}{\delta} - \frac{1}{1-\delta}\\
&\geq \dfrac{K-1}{\delta}-1\\
&=\dfrac{K}{1-\sqrt{1-K\varepsilon/(K-1)}}-1\\
&\geq \dfrac{K-1}{\varepsilon}-1.
\end{aligned}
\end{equation}
Considering that \(c\) is positive and typically small compared to \(1\), the additive ``\(-1\)'' term is negligible when \(\varepsilon\) is small. In any event, one then obtains the desired inequality
\begin{equation}
\frac{\rho(x_1)}{\rho(x_K)} \geq 1+c\cdot\Big(\dfrac{K-1}{\varepsilon}-1\Big)\geq \max\Big(\dfrac{(d+1)(K-1)}{8(N+1)}\cdot\varepsilon^{-1}, 1\Big).
\end{equation}
This completes the proof.
\end{proof}

\subsection{Proofs of Theorems in Section~6}
\label{subsec:proofs_of_theorems_in_stabilization}

\StabilizationTextModelViaInjection*
\begin{proof}
Conditioning on the current text probability vector \(\vp_t\), we consider two cases. (\romannumeral 1) If no injection occurs at macro time step \(t\) (which happens with probability \(1-\alpha\)), then, by~\cref{cor:lower_bound_text_model_diversity}, the text update satisfies
\begin{equation}
\E[H_{t+1}|\vp_t,\text{ no injection}] \geq \Big(1-\frac{1}{N}\Big)\cdot H_t.
\end{equation}
(\romannumeral 2) When an injection occurs (with probability \(\alpha\)), the text model updates by reallocating a fraction \(\varepsilon\) of the probability mass from each existing text to a new text \(x_{\mathrm{new}}\). That is, for \(1\le i\le K\),
\begin{equation}
\widetilde{p}_t(x_i) = (1-\varepsilon)\cdot p_t(x_i),
\end{equation}
and we set
\begin{equation}
\widetilde{p}_t(x_{\mathrm{new}}) = \varepsilon.
\end{equation}
The diversity immediately after injection is given by
\begin{equation}
H_{\text{inj}} = 1 - \Big((1-\varepsilon)^2\sum_{i=1}^{K} p_t(x_i)^2 + \varepsilon^2\Big).
\end{equation}
Since \(\sum_{i=1}^{K} p_t(x_i)^2 = 1 - H_t\), we can rewrite this as
\begin{equation}
H_{\text{inj}} = 1 - (1-\varepsilon)^2 (1-H_t) - \varepsilon^2.
\end{equation}
In the worst-case scenario (when \(H_t=0\)), we obtain
\begin{equation}
H_{\text{inj}} \ge 1 - (1-\varepsilon)^2 - \varepsilon^2 = 2\varepsilon - 2\varepsilon^2.
\end{equation}
Following the injection, the subsequent text update (which contracts diversity by at most a factor of \(N^{-1}\)) guarantees that
\begin{equation}
\E[H_{t+1}|\vp_t,\text{ injection}] \ge \Big(1-\frac{1}{N}\Big)\cdot H_{\text{inj}} \ge \Big(1-\frac{1}{N}\Big)\cdot (2\varepsilon - 2\varepsilon^2).
\end{equation}
By the law of total expectation, we obtain
\begin{equation}
\E[H_{t+1}|\vp_t] = (1-\alpha)\E[H_{t+1}|\vp_t,\text{ no injection}] + \alpha\E[H_{t+1}|\vp_t,\text{ injection}].
\end{equation}
Thus,
\begin{equation}
\E[H_{t+1}|\vp_t] \ge (1-\alpha)\Big(1-\frac{1}{N}\Big)\cdot H_t + \alpha\Big(1-\frac{1}{N}\Big)\cdot(2\varepsilon - 2\varepsilon^2)
\end{equation}
Taking expectations over \(\vp_t\) and iterating this inequality gives
\begin{equation}
\liminf_{t\to\infty}\E[H_t] \geq \frac{2\alpha(1-N^{-1})(\varepsilon - \varepsilon^2)}{1-(1-\alpha)(1-N^{-1})}.
\end{equation}
\end{proof}

\StabilizationImageDiversityInjection*
\begin{proof}
For a fixed text \(x_i\), condition on mean vector \(\vmu_t(x_i)\), covariance matrix \(\mSigma_t(x_i)\), and the number \(N_i\) of training samples. Denote by \(\mS_t(x_i)\) (resp.\ \(\mS_{\text{user}}(x_i)\)) the sample covariance computed from the training samples (resp.\ user-content images), and by \(\vm_t(x_i)\) (resp.\ \(\vm_{\text{user}}(x_i)\)) the sample mean computed from the training samples (resp.\ user-content images). Then the combined sample covariance after image injection is given by
\begin{equation}
\label{eq:variance_decomposition}
\begin{aligned}
\mSigma_{t+1}(x_i) =\;& \frac{N_i-1}{N_i+N_0-1}\mS_t(x_i)
+ \frac{N_0-1}{N_i+N_0-1}\mS_{\text{user}}(x_i)\\
&+\frac{N_iN_0}{(N_i+N_0)(N_i+N_0-1)}\big(\vm_t(x_i)-\vm_{\text{user}}(x_i)\big)
\big(\vm_t(x_i)-\vm_{\text{user}}(x_i)\big)^\top.
\end{aligned}
\end{equation}
Since the matrices \(\mS_t(x_i)\) and
\begin{equation}
\big(\vm_t(x_i)-\vm_{\text{user}}(x_i)\big)
\big(\vm_t(x_i)-\vm_{\text{user}}(x_i)\big)^\top
\end{equation}
are positive semidefinite, we have
\begin{equation}
\mSigma_{t+1}(x_i) \succeq \frac{N_0-1}{N_i+N_0-1}\mS_{\text{user}}(x_i).
\end{equation}
Moreover, since \(N_i\le N\), it follows that
\begin{equation}
\frac{N_0-1}{N_i+N_0-2} \ge \frac{N_0-1}{N+N_0-1},
\end{equation}
so that
\begin{equation}
\mSigma_{t+1}(x_i) \succeq \frac{N_0-1}{N+N_0-1}\mS_{\text{user}}(x_i).
\end{equation}
Because the matrix square root is operator monotone, applying it to both sides yields
\begin{equation}
\mSigma_{t+1}(x_i)^{1/2} \succeq \sqrt{\frac{N_0-1}{N+N_0-1}}\;\mS_{\text{user}}(x_i)^{1/2}.
\end{equation}
Taking the expectation (using the law of total expectation) gives
\begin{equation}
\E[\mSigma_{t+1}(x_i)^{1/2}] \succeq \sqrt{\frac{N_0-1}{N+N_0-1}}\E[\mS_{\text{user}}(x_i)^{1/2}].
\end{equation}
By definition of the user sample covariance, we can write
\begin{equation}
\mS_{\text{user}}(x_i)^{1/2} = \frac{1}{N_0-1}\cdot\mSigma_{\text{user}}(x_i)^{1/2}\cdot\mW^{1/2}\cdot\mSigma_{\text{user}}(x_i)^{1/2},
\end{equation}
where \(\mW\sim\text{Wishart}_d(I,N_0-1)\) is a \(d\)-dimensional Wishart matrix with \((N_0 - 1)\) degrees of freedom and identity scale matrix. Taking expectations and using the orthogonal conjugation invariance of the Wishart distribution, we have
\begin{equation}
\E[\mW^{1/2}] = \alpha\mI_d,
\end{equation}
which implies
\begin{equation}
\E[\mS_{\text{user}}(x_i)^{1/2}] = \frac{\alpha}{N_0-1}\mSigma_{\text{user}}(x_i)^{1/2}.
\end{equation}
Thus,
\begin{equation}
\E[\mSigma_{t+1}(x_i)^{1/2}]
\succeq \sqrt{\frac{N_0-1}{N+N_0-1}}\cdot \frac{\alpha}{N_0-1}\mSigma_{\text{user}}(x_i)^{1/2}.
\end{equation}
Taking the trace on both sides (and using the linearity of both the trace and the expectation), we obtain
\begin{equation}
\E\big[\tr\big(\mSigma_{t+1}(x_i)^{1/2}\big)\big]
\geq \alpha\cdot \frac{1}{\sqrt{(N_0-1)(N+N_0-1)}} \cdot \E\big[\tr\big(\mSigma_{\text{user}}(x_i)^{1/2}\big)\big],
\end{equation}
which completes the proof.
\end{proof}

\BoundednessImageFidelityInjection*
\begin{proof}
For a fixed text \(x_i\), recall that at macro time step \(t\) the following holds. First, \(Np_i\) generated images are drawn from the distribution \(\mathcal{N}\big(\vmu_t(x_i),\mSigma_t(x_i)\big)\). Their sample mean is
\begin{equation}
\vm_t(x_i) = \vmu_t(x_i) + \vvarepsilon_{1,t},
\end{equation}
where, conditioned on the current parameters \(\vmu_t(x_i)\) and \(\mSigma_t(x_i)\), we have
\begin{equation}
\E[\vvarepsilon_{1,t}|\vmu_t(x_i),\mSigma_t(x_i)]=\vzero, \quad \Cov(\vvarepsilon_{1,t}|\vmu_t(x_i),\mSigma_t(x_i))=\frac{\mSigma_t}{Np_i}.
\end{equation}
Second, \(N_0\) injected images are drawn from \(\mathcal{N}\big(\vmu_0(x_i),\mSigma_0(x_i)\big)\) with sample mean
\begin{equation}
\widetilde{\vmu}_t(x_i) = \vmu_0(x_i) + \vvarepsilon_{2,t},
\end{equation}
where
\begin{equation}
\E[\vvarepsilon_{2,t}]=\vzero, \quad \Cov(\vvarepsilon_{2,t})=\frac{\mSigma_0(x_i)}{N_0}.
\end{equation}
The updated mean is defined by
\begin{equation}
\vmu_{t+1} = \frac{Np_i \vm_t(x_i) + N_0 \widetilde{\vmu}_t(x_i)}{Np_i+N_0}.
\end{equation}
Define the error vector by 
\begin{equation}
\vdelta_t(x_i) \coloneqq \vmu_t(x_i) - \vmu_0(x_i),
\end{equation}
and introduce the constants
\begin{equation}
\lambda \coloneqq \frac{Np_i}{Np_i+N_0},\quad 1-\lambda = \frac{N_0}{Np_i+N_0}.
\end{equation}
Then we can write
\begin{equation}
\begin{aligned}
\vmu_{t+1}(x_i) &= \lambda (\vmu_t(x_i) + \vvarepsilon_{1,t}) + (1-\lambda) (\vmu_0(x_i) + \vvarepsilon_{2,t}),\\
\vdelta_{t+1}(x_i) &= \vmu_{t+1}(x_i) - \vmu_0(x_i) \\
&= \lambda (\vmu_t(x_i) - \vmu_0(x_i)) + \lambda \vvarepsilon_{1,t} + (1-\lambda) \vvarepsilon_{2,t}\\
&= \lambda \vdelta_t(x_i) + \lambda \vvarepsilon_{1,t} + (1-\lambda) \vvarepsilon_{2,t}.
\end{aligned}
\end{equation}
Taking the squared \(\ell_2\) norm and then the conditional expectation (with conditioning on \(\vmu_t(x_i)\) and \(\mSigma_t(x_i)\)), we have
\begin{equation}
\begin{aligned}
\E\big[\|\vdelta_{t+1}(x_i)\|_2^2\big|\vmu_t(x_i),\mSigma_t(x_i)\big] &= \lambda^2\|\vdelta_t(x_i)\|_2^2 + \lambda^2 \E\big[\|\vvarepsilon_{1,t}\|_2^2\big|\vmu_t,\mSigma_t\big] + (1-\lambda)^2 \E[\|\vvarepsilon_{2,t}\|_2^2]\\
&= \lambda^2\|\vdelta_t(x_i)\|_2^2 + \frac{\lambda^2}{Np_i} \tr\big(\mSigma_t(x_i)\big) + \frac{(1-\lambda)^2}{N_0} \tr\big(\mSigma_0(x_i)\big).
\end{aligned}
\end{equation}
Defining the expected covariance trace as
\begin{equation}
S_t(x_i) \coloneqq \E\big[\tr\big(\mSigma_t(x_i)\big)\big],
\end{equation}
the law of total expectation implies
\begin{equation}
\E[F_{t+1}(x_i)^2] = \lambda^2 \E[F_t(x_i)^2] + \frac{\lambda^2}{Np_i} S_t(x_i) + \frac{(1-\lambda)^2}{N_0} S_0(x_i).
\end{equation}
Next, we analyze the covariance dynamics. Under formula~\labelcref{eq:variance_decomposition}, we have (again, conditioning on \(\vmu_t(x_i)\) and \(\mSigma_t(x_i)\)):
\[
\begin{aligned}
\E\big[\tr(\mSigma_{t+1})\big|\vmu_t(x_i),\mSigma_t(x_i)\big] =&\; \lambda \tr(\mSigma_t(x_i)) + (1-\lambda) \tr(\mSigma_0(x_i))\\
&+ \dfrac{Np_iN_0}{(Np_i+N_0)(Np_i+N_0-1)} \|\vdelta_t(x_i)\|_2^2.  
\end{aligned}
\]
Taking total expectation yields
\[
S_{t+1}(x_i) = \lambda S_t(x_i) + (1-\lambda) \tr(\mSigma_0(x_i)) + \dfrac{Np_iN_0}{(Np_i+N_0)(Np_i+N_0-1)}\E[F_t(x_i)^2].
\]
Thus, the coupled recurrences for the expected fidelity and the expected covariance trace are
\[
\begin{cases}
\E[F_{t+1}(x_i)^2] = \lambda^2 \E[F_t(x_i)^2] + \dfrac{\lambda^2}{Np_i} S_t(x_i) + \dfrac{(1-\lambda)^2}{N_0} S_0(x_i),\\
S_{t+1}(x_i) = \lambda S_t(x_i) + (1-\lambda)S_0(x_i) + \dfrac{Np_iN_0}{(Np_i+N_0)(Np_i+N_0-1)} \E[F_t(x_i)^2].
\end{cases}
\]
Standard results on linear systems guarantee that if \(\lambda < 1\) (i.e., if \(N_0>0\)), then the sequences \(\{\E[F_t(x_i)^2]\}\) and \(\{S_t(x_i)\}\) converge to finite limits, which are
\[
\begin{aligned}
\lim_{t\rightarrow\infty}\E[F_t(x_i)^2] &= \dfrac{1-(Np_i)^{-1}\lambda}{Np_i\lambda^{-1}-1-Np_i\lambda}\cdot S_0(x_i).\\
\lim_{t\rightarrow\infty}S_t(x_i) &=\Big(1+\dfrac{\lambda}{Np_i\lambda^{-1}-1-Np_i\lambda}\Big)\cdot S_0(x_i).
\end{aligned}
\]
By Cauchy--Schwarz inequality, we have
\[
\limsup_{t\rightarrow\infty}\E[F_t(x_i)] \leq \Big(\dfrac{1-(Np_i)^{-1}\lambda}{Np_i\lambda^{-1}-1-Np_i\lambda}\cdot \tr\big(\mSigma_0(x_i)\big)\Big)^{1/2},
\]
which completes the proof.
\end{proof}

\section{Visualizations of a Typical Run}
\label{sec:visualizations_of_a_typical_run}

In this section, we present a set of representative visualizations that capture the dynamics of our co-evolving generative models under various configurations. For all experiments, we set the text model to have \(K=5\) texts and the image model to operate in a \(d=2\) latent space. The initial mean vectors are evenly distributed on the unit circle, and all covariance matrices are set to the identity matrix. We run the simulations for a total of \(T=1000\) macro time steps and display the text model histogram and corresponding generated image samples every \(250\) macro time steps. Throughout all figures in this section, the top row shows histograms of the text model's probability distribution \(\vp_t=(p(x_1), p(x_2), \dotsc, p(x_K))\), while the bottom row displays the generated samples, with distinct colors indicating samples associated with different texts.

\subsection{Freeze the Image Model and Update the Text Model}
\label{subsec:freeze_the_image_model_and_update_the_text_model}

When the image model is frozen, it provides a constant source of feedback while only the text model is updated. As shown in~\cref{fig:freeze_image_model_and_update_text_model}, the text model gradually loses diversity, and the probability distribution over texts becomes increasingly imbalanced over time. This behavior is consistent with the theoretical predictions presented in~\cref{thm:text_model_diversity_frozen_image_model}.

\begin{figure}[htb]
\centering
\includegraphics[width=1\linewidth]{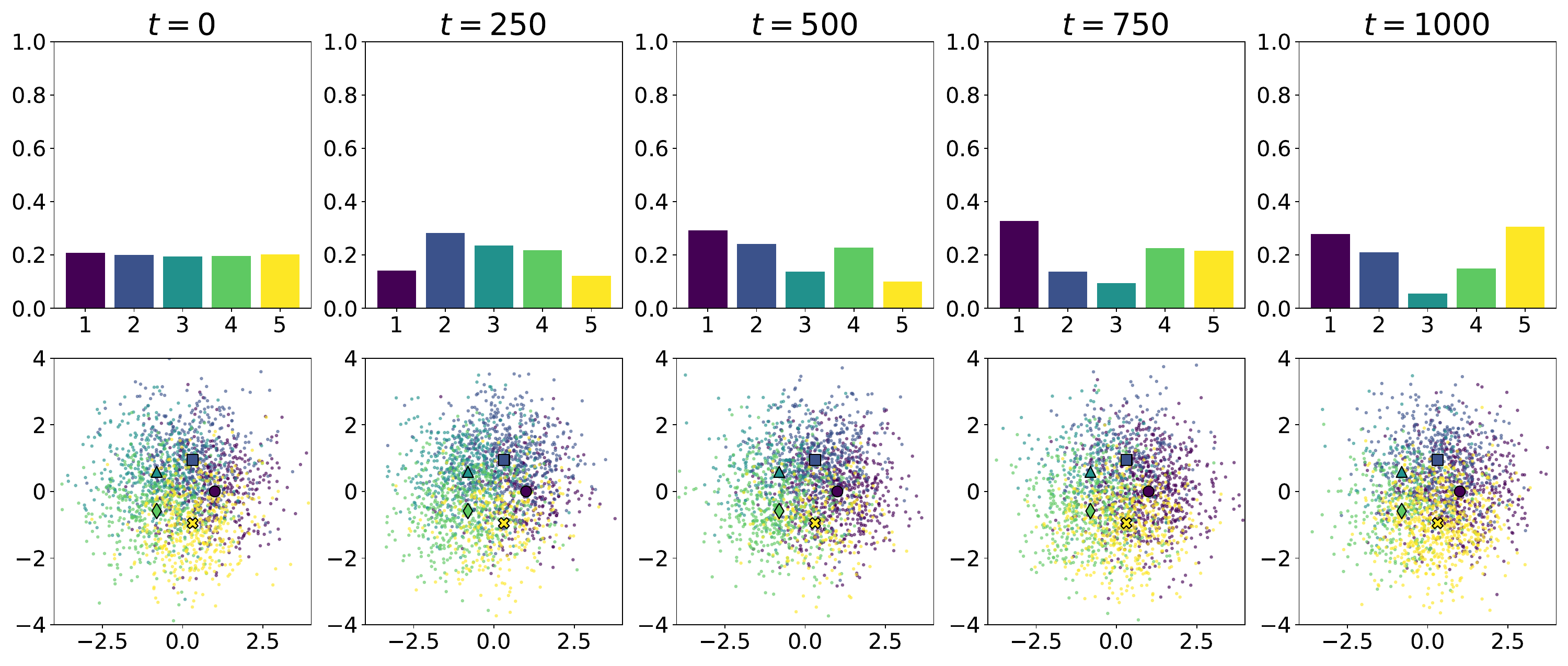}
\caption{Text model diversity collapses under the frozen image model. As the histograms in the first row indicate, the text distribution becomes increasingly imbalanced, illustrating the gradual loss of diversity. This result is in agreement with our theoretical analysis in~\cref{thm:text_model_diversity_frozen_image_model}.}
\label{fig:freeze_image_model_and_update_text_model}
\end{figure}

\subsection{Freeze the Text Model and Update the Image Model}
\label{subsec:freeze_the_text_model_and_update_the_image_model}

When the text model is frozen, \cref{fig:freeze_text_model_and_update_image_model} demonstrates that the image model quickly converges to a narrow output distribution, which verifies~\cref{thm:image_model_covariance_frozen_text_model}. Although the outputs become almost identical (i.e., diversity collapses), the mean vectors remain relatively close to the initial states, indicating that fidelity is preserved despite the loss of diversity. This behavior confirms our analytical predictions regarding the boundedness of image model fidelity under frozen text conditions in~\cref{thm:image_model_mean_frozen_text_model}.

\begin{figure}[htb]
\centering
\includegraphics[width=1\linewidth]{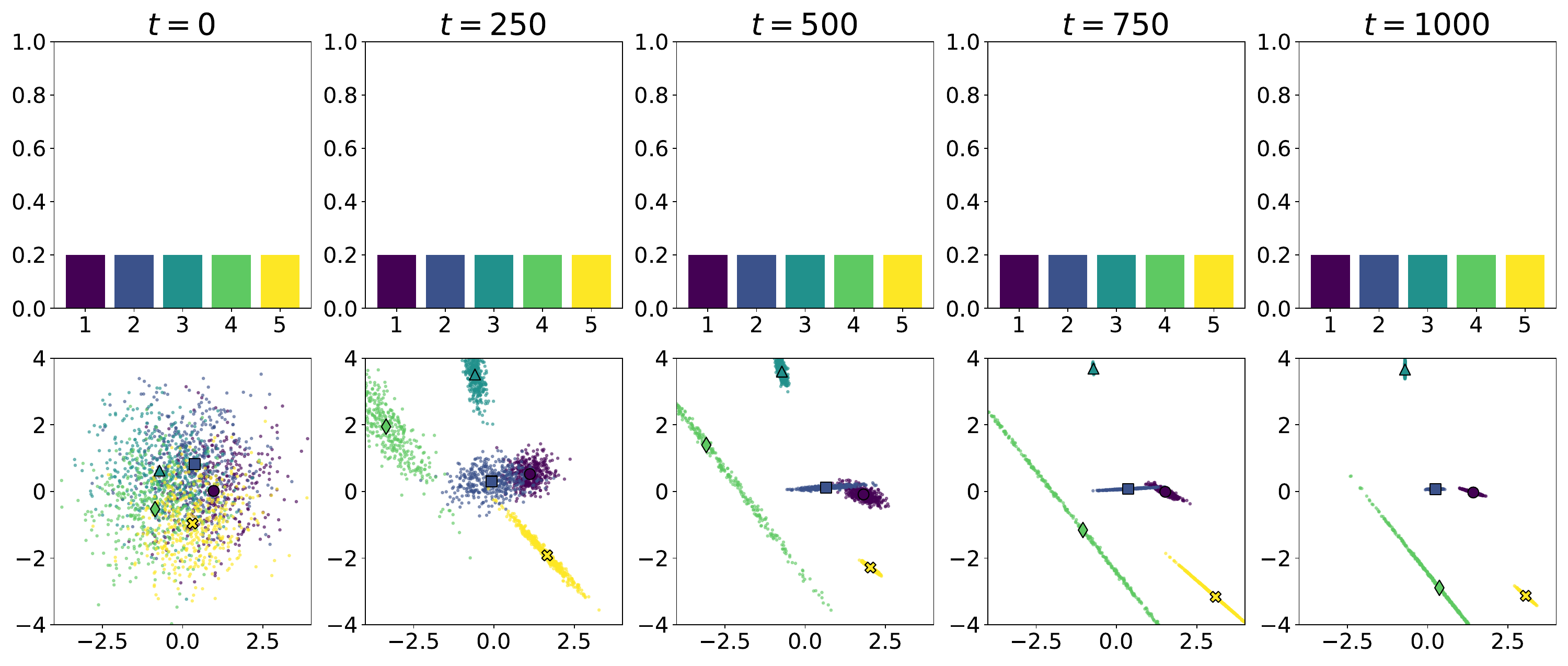}
\caption{Image model diversity collapses while its fidelity remains bounded under the frozen text model. As indicated by the generated samples in the second row, the image model quickly contracts its covariance (diversity), while its mean remains relatively close to the initial state, in agreement with the theoretical results of~\cref{thm:image_model_covariance_frozen_text_model,thm:image_model_mean_frozen_text_model}.}
\label{fig:freeze_text_model_and_update_image_model}
\end{figure}

\subsection{Update Both the Text Model and the Image Model}
\label{subsec:update_both_the_text_model_and_the_image_model}

We then examine the scenario where both the text and image models update concurrently (i.e., \(M_t=1\) and \(N_t=1\) in~\cref{alg:co-evolving_generative_models}). As illustrated in~\cref{fig:update_both_text_model_and_image_model}, the mutual feedback between the two models accelerates the collapse process. 
Specifically, the contracting covariance of the image model sharpens the feedback to the text model, which in turn concentrates its probability mass on a few dominant texts. This self-reinforcing dynamic drives the system rapidly toward a degenerate state. 
Moreover, a Matthew effect emerges in the image model: while images associated with the dominant text maintain relatively higher diversity, those corresponding to less frequent texts collapse even faster. These observations are consistent with our predictions in~\cref{thm:acceleratd_text_model_collapse,thm:matthew_effect}.

\begin{figure}[htb]
\centering
\includegraphics[width=1\linewidth]{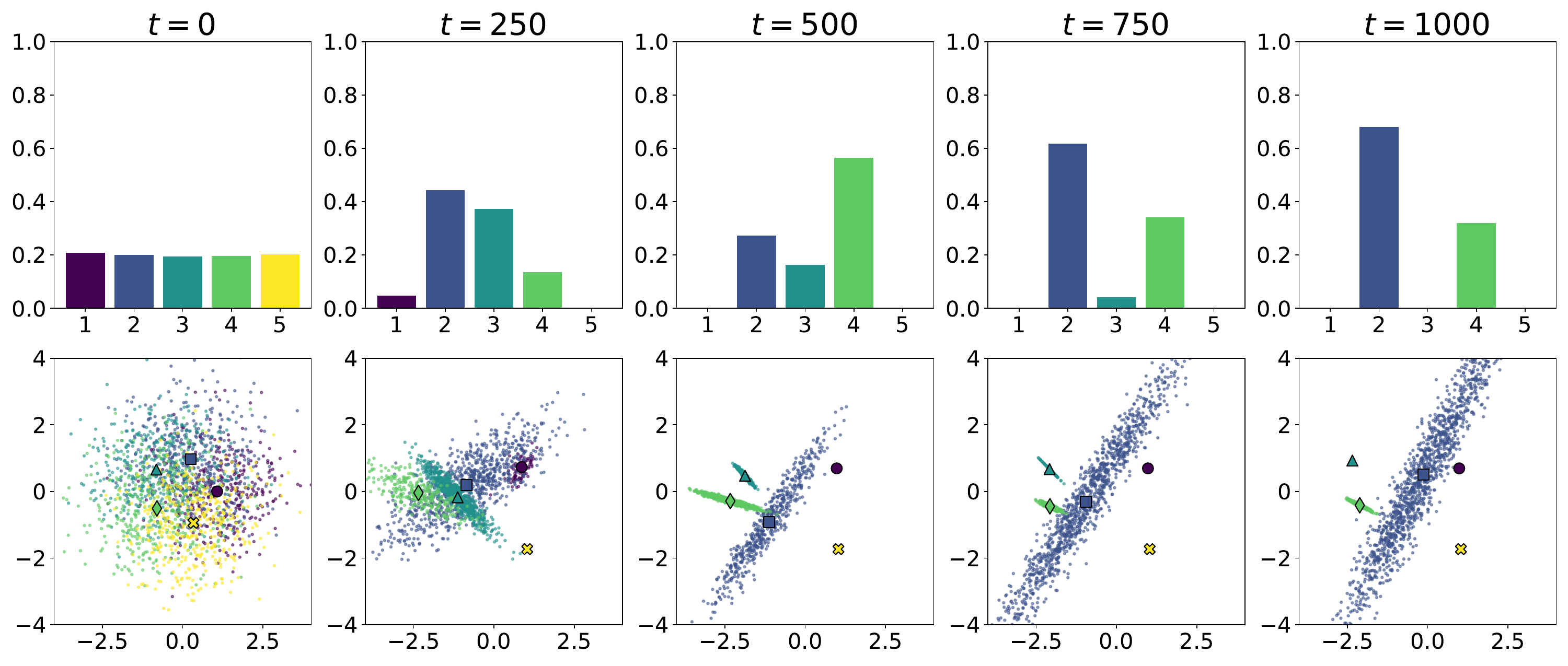}
\caption{Mutual reinforcement accelerates the collapse of text model diversity and induces a Matthew effect in the image model under the co-evolving training procedure. The histograms in the first row show that the text model collapse is accelerated compared with the frozen image model in~\cref{fig:freeze_image_model_and_update_text_model}. As indicated by the generated samples in the second row, the image model rapidly contracts its covariance while the dominant text retains relatively higher diversity, consistent with the theoretical results of~\cref{thm:acceleratd_text_model_collapse,thm:matthew_effect}.}
\label{fig:update_both_text_model_and_image_model}
\end{figure}

\subsection{Stabilize the Text Model via Corpus Injection}
\label{subsec:stabilize_the_text_model_via_corpus_injection}

In this section, we examine how corpus injection stabilizes the text model from collapsing. \Cref{fig:stabilize_text_model_via_corpus_injection} demonstrates that random corpus injection with a fraction \(\varepsilon=0.1\) of the existing probability mass redistributed to newly introduced texts at an injection probability of \(\alpha=0.005\) effectively prevents the text model from collapsing. As shown in the histograms, this reallocation mechanism consistently maintains a non-degenerate, diverse probability distribution over texts, thereby counteracting the self-reinforcing dynamics that would otherwise drive the system toward a one-hot distribution. These results validate the predictions of~\cref{thm:stabilization_text_model_via_injection}.

\begin{figure}[H]
\centering
\includegraphics[width=1\linewidth]{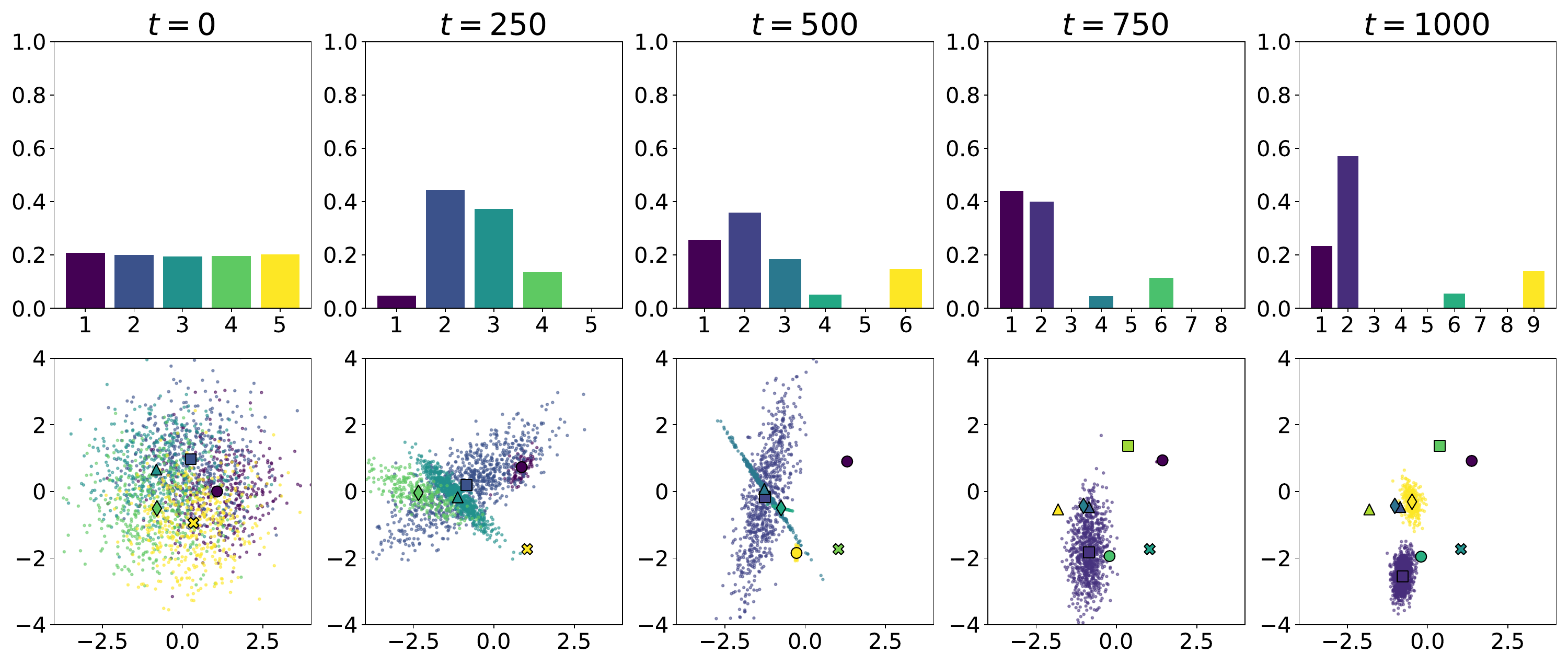}
\caption{Random corpus injection effectively stabilizes the text model. As indicated by the histograms in the first row, the reallocation of probability mass to new texts prevents the collapse into a one-hot distribution and sustains a meaningful level of diversity over time, validating~\cref{thm:stabilization_text_model_via_injection}.}
\label{fig:stabilize_text_model_via_corpus_injection}
\end{figure}

\subsection{Stabilize the Image Model via User-Content Injection}
\label{subsec:stabilize_the_image_model_via_user-content_injection}

In this section, we examine how user-content injection prevents the collapse of the image model. As shown in \cref{fig:stabilize_image_model_via_user-content_injection}, injecting \(N_0=100\) user-generated images into the image model's training process effectively stabilizes the model. This external input stops the exponential contraction of the covariance and keeps the mean close to its initial state, thereby preserving both diversity and fidelity. These results are consistent with the theoretical predictions in \cref{thm:stabilization_image_diversity_injection,thm:boundedness_image_fidelity_injection}.

\begin{figure}[H]
\centering
\includegraphics[width=1\linewidth]{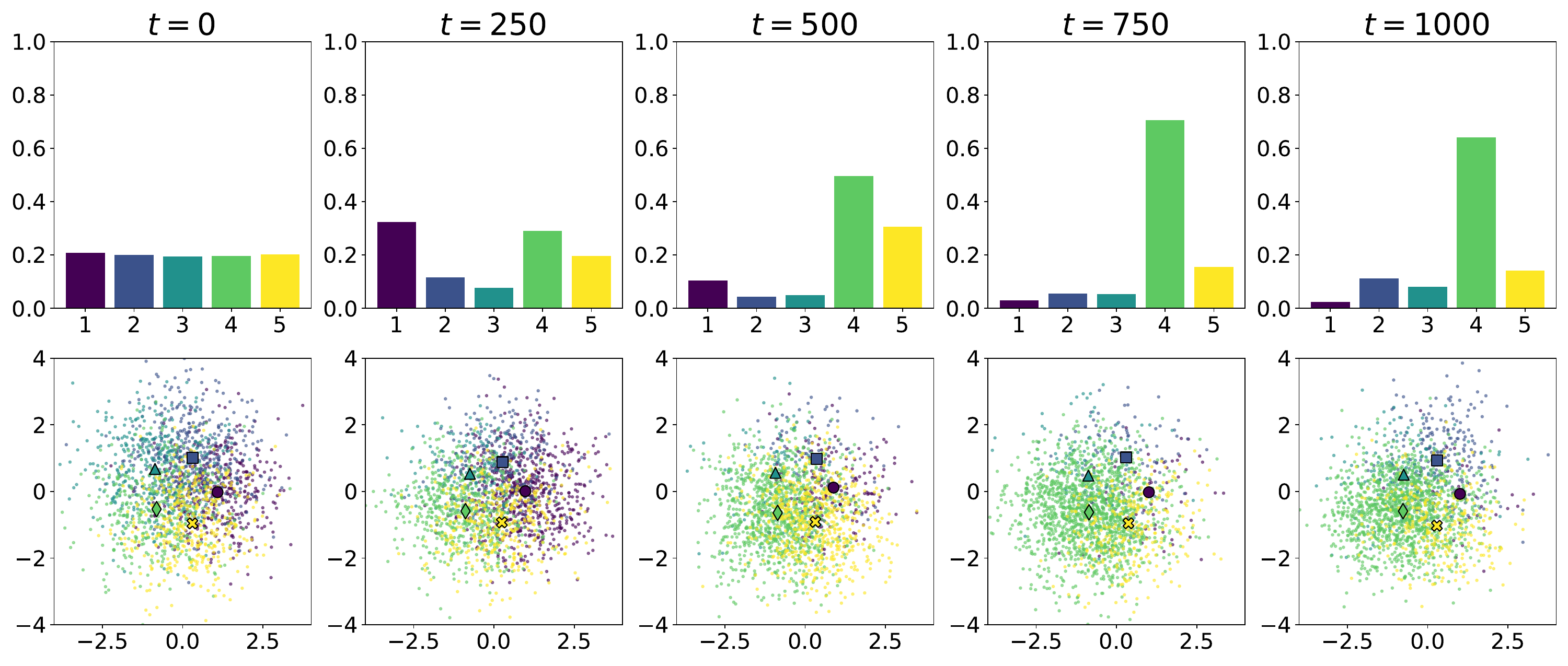}
\caption{User-content injection stabilizes the image model. From the generated samples in the second row, incorporating external images from the initial distribution keeps the image model's covariance bounded and its mean close to the initial target, thereby preserving both diversity and fidelity. This behavior aligns with the predictions in~\cref{thm:stabilization_image_diversity_injection,thm:boundedness_image_fidelity_injection}.}
\label{fig:stabilize_image_model_via_user-content_injection}
\end{figure}

\end{document}